\definecolor{C0}{HTML}{1f77b4}
\definecolor{C1}{HTML}{ff7f0e}
\definecolor{C2}{HTML}{2ca02c}
\definecolor{C3}{HTML}{d62728}
\definecolor{C4}{HTML}{9467bd}
\definecolor{C5}{HTML}{8c564b}
\definecolor{C6}{HTML}{e377c2}
\definecolor{C7}{HTML}{7f7f7f}
\definecolor{C8}{HTML}{bcbd22}
\definecolor{C9}{HTML}{17becf}
\newcommand{\cF}{\mathcal{F}}
\newtheorem{assumption}{Assumption}
\newcommand{\argmin}{\textrm{argmin}}
\def\mat#1{\text{#1}}
\renewcommand{\vec}[1]{\bm{#1}}
\newcommand{\one}{\boldsymbol{1}} 
\newcommand{\Id}{\mathrm{I}}
\newcommand{\bEx}{\ensuremath{\mathbb{E}}}
\newcommand{\ex}[1]{\ensuremath{\mathbb{E}\left[ #1\right]}}
\DeclareMathOperator{\cov}{\mathsf{Cov}}
\DeclareMathOperator{\pmse}{\sf pmse}
\renewcommand{\var}{\mathsf{Var}}
\newcommand{\ussD}{\frac{1}{\sqrt{D}}}
\newcommand{\ussdelta}{\frac{1}{\sqrt{\delta}}}
\newcommand{\usN}{\frac{1}{N}}
\newcommand{\ussN}{\frac{1}{\sqrt{N}}}
\renewcommand{\dd}{\operatorname{d}\!}
\newcommand{\EE}{\mathbb{E}\,}
\newcommand{\extr}{\mathrm{extr}}
\renewcommand{\erf}{\mathrm{erf}}
\renewcommand{\order}[1]{O(#1)}
\def\ind{{\mathbbm{1}}}
\newcommand{\gen}{\mathcal{G}}
\newcommand{\reals}{\mathbb{R}}
\newcommand{\normal}{\mathcal{N}}
\newcommand\samples{T}
\newcommand\sindex{\mu}
 \let\Ginclude@graphics\@org@Ginclude@graphics 
\title[The Gaussian equivalence of generative models for shallow networks]{The Gaussian equivalence of generative models \\ for learning with shallow neural networks}
\begin{document}

\maketitle

\begin{abstract}
  Understanding the impact of data structure on the computational tractability
  of learning is a key challenge for the theory of neural networks.  Many
  theoretical works do not explicitly model training data, or
  assume that inputs are drawn component-wise independently from some simple probability
  distribution. Here, we go beyond this simple paradigm by studying the performance of
  neural networks trained on data drawn from \emph{pre-trained generative models}. This is possible due to a Gaussian equivalence stating that the key metrics of interest, such as the training and test errors, 
  can be fully captured by an appropriately chosen Gaussian model. We provide three strands of rigorous, analytical and numerical evidence corroborating this equivalence. First, we establish rigorous conditions for the Gaussian equivalence to hold in the case of single-layer generative models, as well as deterministic rates for convergence in distribution. Second, we leverage this equivalence to derive a closed set of equations describing the generalisation performance of two widely studied machine learning problems: two-layer neural networks trained using one-pass stochastic gradient descent, and full-batch pre-learned features or
  kernel methods. Finally, we perform experiments demonstrating how our theory
  applies to deep, pre-trained generative models. These results open a viable path to
  the theoretical study of machine learning models with realistic data.
\end{abstract}

\begin{keywords}%
  Neural networks, Generative models, Stochastic Gradient Descent, Random Features.
\end{keywords}

\section{Introduction}
\label{sec:intro}

Consider a supervised learning task where we are given a stream of samples drawn
i.i.d.\ from an unknown distribution $q(x, y)$. Each sample consists of an input
vector $x = (x_i) \in \reals^{N}$ and a response or label $y \in \reals$. Our
goal is to learn a function $\phi_\theta : \reals^N \to \reals$ with parameters $\theta$ that provides an
estimate of~$y$ given~$x$. The performance of such
a model $\phi_\theta$ at this task is assessed in terms of its prediction or test error
$\mathsf{pe}(\theta) = \EE \ell\left[ y, \phi_{\theta}(x) \right]$, where the
expectation is over the data distribution~$q(x, y)$ for a fixed set of
parameters $\theta$ and some loss function $\ell$.  A lot of attention has
recently focused on the importance of training to find models $\phi_\theta$ with low
test error, and specifically on the role of stochastic gradient descent and
various regularisations. Analysing the impact of the data distribution~$q(x, y)$
on learning is equally important, yet it is not well understood.

In fact, theoretical works on learning in statistics or theoretical
computer science traditionally try to make only minimal assumptions on the class
of distributions $q(x, y)$~\cite{Mohri2012, Vapnik2013} or consider the case
where data are chosen in an adversarial (worst-case) manner. In a complementary
line of work that emanated originally from statistical
physics~\cite{Gardner1989, Seung1992, Watkin1993,Engel2001, Zdeborova2016},
inputs are modelled as high-dimensional vectors whose elements are drawn i.i.d.\
from some probability distribution. Their labels are either assumed to be
random, or given by some random, but fixed function of the inputs, see
Fig.~\ref{fig:setup}~(a). This approach, known as the teacher-student setup, has
recently experienced a surge of activity in the machine learning
community~\cite{zhong2017recovery, tian2017analytical,
  du2018gradient,soltanolkotabi2018theoretical,aubin2018committee,
  saxe2018information, Baity-Jesi2018, goldt2019dynamics,
  ghorbani2019limitations, yoshida2019datadependence, gabrie2020meanfield,
  bahri2020statistical, zdeborova2020understanding, advani2020highdimensional}

\begin{figure}[t!]
  \centering
  \includegraphics[width=.99\textwidth]{./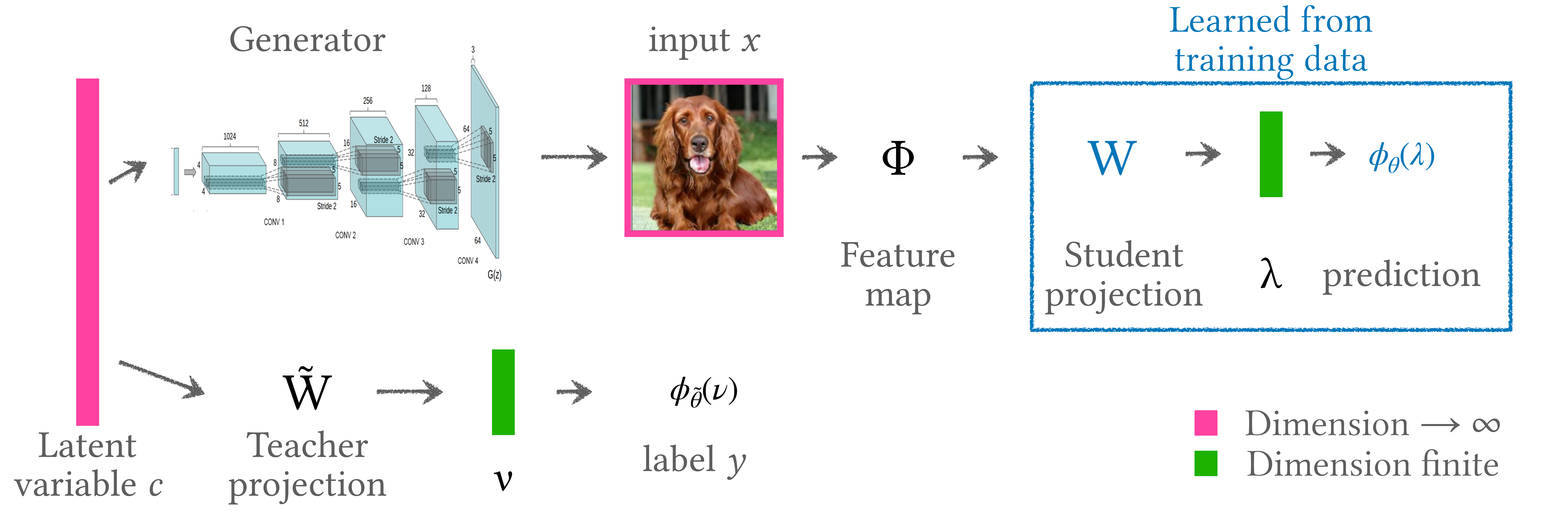}
  \caption{\label{fig:setup} \textbf{The deep hidden manifold: going beyond the
      i.i.d.\ paradigm for generating data in the teacher-student setup.}
    We analyse a setup where samples $(x, y)$ are generated by first drawing a
    latent vector $c \sim \normal( 0 , I_D)$. The input~$x$ is obtained by
    propagating the latent vector through a (possibly deep) generative network, $x=\gen(c)$. The
    label $y$ is given by the response of a two-layer teacher network to the
    latent vector. We then analyse in a closed form learning via a two-layer neural network, or with a single layer neural network after a projection trough a fixed, but not necessarily random, feature map. The sketch of the generator is taken
    from~\citet{radford2015unsupervised}, whose deep convolutional GAN is one of
    the generators we use in our experiments in
    Sec.~\ref{sec:experiments}.}
\end{figure}

\paragraph{The deep hidden manifold} In this manuscript we go beyond the i.i.d.\ paradigm of
the teacher-student setup by extending the hidden manifold model analysed
in~\cite{goldt2019modelling, gerace2020generalisation}. Fig.~\ref{fig:setup}
gives a visual overview of the components of the model. We draw the inputs $x$
from a generative model $\gen: \reals^D \to \reals^N$ of depth $L$. These models
transform random uncorrelated latent variables $c=(c_r)\in\reals^{D}$ into
correlated, high-dimensional inputs which follow a given target distribution via
\begin{equation}
  \label{eq:generator}
  x = \gen(c) = \gen^L \cdots \gen^3 \circ \gen^2 \circ \gen^1(c), \qquad c \sim
  \mathcal{N}(0, I_D),
\end{equation}
where $\circ$ denotes the chaining of layers $\gen^\ell$, which could be
fully-connected, convolutional~\cite{fukushima1982neocognitron,
  lecun1990handwritten}, applying batch norm~\cite{ioffe2015batch} or an
invertible mapping $\gen^\ell: \mathcal{R}^D \to \mathcal{R}^D$ as they are used
normalising flows. We thus replace i.i.d.~Gaussian inputs with realistic images
such as the one shown in Fig.~\ref{fig:setup}. While~\cite{goldt2019modelling,
  gerace2020generalisation} only studied generative models with a single layer
of weights, we allow the generator to be of arbitrary depth $L$, thus including
important models such as variational auto-encoders~\cite{kingma2014auto},
generative adversarial networks (GAN)~\cite{goodfellow2014generative}, or
normalising flows~\cite{tabak2010density, tabak2013family,
  rezende2015variational}. 

\textbf{The label} for each input is obtained from a two-layer teacher network
with $M$ hidden neurons and parameters
$\tilde \theta = (\tilde v \in \reals^M, \tilde W \in \reals^{M \times D})$
acting on the \emph{latent representation} $c$ of the input,
\begin{align}
  \label{eq:phi_teacher}
  y = \sum_{m=1}^M  \tilde v^m\,  \tilde g \left( \nu^m \right) , \quad
  \nu^m \equiv \ussD \sum_{r=1}^D \tilde w^m_r c_r.
\end{align}
The intuition here comes from image classification, where the label of an image
does not depend on every pixel~$x$, but the higher-level features of the
image, which should be better captured by its lower-dimensional latent
representation, like in conditional generative
models~\cite{mirza2014conditional, brock2018large}. 
We call this the \emph{deep hidden manifold model}.

\textbf{The two models of learning that we analyse} \hspace*{.5em} The advantage of the vanilla
teacher-student setup is that it lends itself well to analytical studies, at the
detriment of having unrealistic inputs. The deep hidden manifold allows us to
study realistic inputs, but can we still analyse it?  We provide two distinct
positive answers to this question for two common parametric models
$\hat{y}=\phi_{\theta}(x)$ trained on a dataset with i.i.d.~samples
$\mathcal{D}_{\samples}=\{\left(x^{\sindex},y^{\sindex}\right)\}_{\sindex=1}^{T}$ generated by the deep
hidden manifold $q$. First, we provide a sharp asymptotic analysis of
\textbf{full-batch learning with pre-learned features}~\cite{rahimi2008random}:
\begin{align}
  \label{eq:rf}
    \phi_{\theta}(x) = g\left(\lambda\right), \qquad \lambda = \frac{1}{\sqrt{\tilde{N}}}\hat{w}^{\top}\sigma\left(Fx\right),
\end{align}
where $F \in \reals^{\tilde N \times N}$ defines the feature map
$\Phi_{\mat{F}}=\nicefrac{\sigma(F~\cdot~)}{\sqrt{N}}\colon \mathbb{R}^{N}\to\mathbb{R}^{\tilde{N}}$, which is not necessarily random. We obtain the weights $\hat{w}\in\mathbb{R}^{\tilde{N}}$ by minimising the empirical risk in feature space:
\begin{align}
    \label{eq:erm}
    \hat{w}_{\samples}=\underset{w\in\mathbb{R}^{\tilde{N}}}{\argmin}\left[\sum\limits_{\sindex=1}^{\samples}\ell\left(y^{\sindex},w^{\top}\Phi_{F}(x^{\sindex})\right)+\frac{\lambda}{2}||w||^2_{2}\right]
\end{align}
\noindent with a convex loss function $\ell$ and a ridge penalty term
$\lambda>0$. In this model, the asymptotic limits is defined by taking
$\samples, N,\tilde{N}\to\infty$ with fixed ratios
$\tilde{N}/N, \samples/\tilde{N} \sim O(1)$.

Second, we provide an asymptotic analysis of \textbf{one-pass stochastic
  gradient descent in a two-layer neural network} with $K \sim O(1)$ hidden
units:
\begin{align}
\label{eq:2layer}
  \phi_{\theta}(x )  = \sum_{k=1}^K  v^k\,  g \left( \lambda^k\right) , \quad
  \lambda^k \equiv \ussN \sum_{i=1}^N w^k_i x_i,
\end{align}
where we take $N\to\infty$.  In this case, the network is trained end-to-end
with stochastic gradient descent on the quadratic loss using a previously unseem
sample at each step $\sindex$ of training:
\begin{align}
  \label{eq:sgd}
  \dd w^k_i \equiv \left(w^k_i\right)_{\sindex+1}- \left(w^k_i\right)_{\sindex}
  =-\frac{\eta}{\sqrt{N}} v^k \Delta g'(\lambda^k) x_i, \qquad
  \dd v^k = - \frac{\eta}{N} g(\lambda^k) \Delta.
\end{align}
where
$\Delta = \sum_{j = 1}^K v^j g(\lambda^j) -\sum_{m=1}^M \tilde v^m
\tilde{g}(\nu^m)$. Note the different scaling of the
  learning rate $\eta$, which guarantees the existence of a well-defined limit
  of the SGD dynamics as $N\to\infty$.

\paragraph{Test error and Gaussian equivalence property} In both cases, the
learner is thus given a dataset $\mathcal{D}_T = \{(x^{\sindex}, y^{\sindex})\}_{\sindex=1}^T$ consisting
of $T$ i.i.d.\ samples from $q(x,y)$. The classifier $\phi_\theta$ with
parameters $\theta = (W,v)$ either acts directly on the inputs $x$ or on a
feature map $\Phi \colon \reals^N \to \reals^{\tilde{N}}$. The learning
algorithm produces $\theta_T$ based on the training data. The model $\phi$ is
evaluated using the prediction MSE, which for each $\theta$ is
\begin{equation}
  \label{eq:eg-teacher-student}
  \pmse(q, \theta) \equiv
  \frac{1}{2} \int_{\reals^N \times \reals}   (\phi_\theta(\Phi(x)) - y)^2 \, \dd q(x,y) 
\end{equation}
The key observation in our analysis is that for both models~\eqref{eq:rf}
and~\eqref{eq:2layer} and the teacher~(\ref{eq:phi_teacher}), the respective
inputs only enter via the ``pre-activations'' $\lambda = (\lambda^k)$ and
$\nu = (\nu^m)$. We can therefore replace the high-dimensional average over
$q(x, y)$ by a low-dimensional average over the joint distribution
$P_\theta(\lambda, \nu)$ of $(\lambda, \nu)$, which is a function of
$\theta = (W,v)$:
\begin{align}
  \pmse(q, \theta) \equiv
  \frac{1}{2} \int_{\reals^K \times \reals^M}   \left(\sum_{k=1}^K  v^k\,  g (\lambda^k)  -\sum_m^M\tilde v^m \tilde g(\nu^m)\right)^2\, \dd P_\theta(\lambda, \nu) 
\end{align}
The complexity of the high-dimensional distribution $q$ is thus encapsulated by
the low-dimensional distribution $P_\theta(\lambda, \nu)$. 
If the student weights $W$ are drawn element-wise i.i.d.~from some distribution
irrespective of the training data and the (transformed) inputs of the student
$\tilde x$ are weakly correlated on average,
then~$(\lambda, \nu)$ are jointly Gaussian with high probability over $W$
if. Equivalently, we can require some spectral condition on the covariance matrix of $\tilde x$.

To be precise, consider a sequence of models and parameters $\left(q, \theta\right)$, where we let the dimension the latent space $D$, the dimension of the data $N$, and the dimension of the features $\tilde{N}$ scale to infinity at the same rate, while keeping the dimensions of $(\lambda, \nu)$ fixed. 
The \textbf{Gaussian equivalence property} (GEP) is said to hold if $P_{\theta}(\lambda, \nu)$
is asymptotically Gaussian, i.e.,
$d(P_{\theta}, P^*_{\theta}) = o_N(1)$ where~$P^*_{\theta}$ is
the Gaussian probability distribution with the same first and second moments and 
$d(\cdot, \cdot)$ is a metric that metrizes convergence in distribution and in second moments.

The Gaussian Equivalence property simplifies the analysis significantly, since
it allows for the $\pmse$ to be evaluated asymptotically in terms of the finite
dimensional Gaussian integral
\begin{align}
 \label{eq:eg-order-parameters}
  \pmse\left(q, \theta\right)  \to 
  \frac{1}{2} \int_{\reals^K \times \reals^M}   \left(\sum_{k=1}^K  v^k\,  g (\lambda^k)  -\sum_m^M\tilde v^m \tilde g(\nu^m)\right)^2\, \dd P^*_\theta(\lambda, \nu).
\end{align}
The $\pmse$ is thus a function of only the second moments of $(\lambda, \nu)$:
\begin{align}
  \label{eq:order-parameters}
  Q^{k\ell} \equiv \EE \lambda^k \lambda^\ell, \qquad 
  R^{km} \equiv \EE \lambda^k \nu^m, \qquad
  T^{mn} \equiv \EE \nu^m \nu^n,
\end{align}
and of the second-layer weights $v^k$ and $\tilde v^m$ in the case of two-layer
neural networks. This reduction of the high-dimensional
average~\eqref{eq:eg-teacher-student} to an expression in terms of an
$\order{1}$ number of ``order parameters'' is central to the vast literature
analysing the vanilla teacher-student setup~\cite{Gardner1989, Seung1992,
  Watkin1993, Biehl1995, Saad1995a, Engel2001}.

Surprisingly, here we find that this reduction also holds if the weights of the
student are obtained from the training data using the algorithms~\eqref{eq:sgd}
and~\eqref{eq:erm}. Hence, despite the correlations of the weights to the
correlated inputs, a characterisation of the $\pmse$ for models like
Eq.~\eqref{eq:rf} and~\eqref{eq:2layer} in terms of scalar order parameters
remain true for many generative data models, including common trained deep
generative networks, \emph{during} learning. This observation can be formalised
in the following conjecture, which is the central claim of our paper:
\begin{conjecture}[Deep Gaussian Equivalence Conjecture]
  \label{thm:deep-get}
  Suppose that 1) the teacher weights $\tilde{W}$ are generated i.i.d.\ and 2)
  $\cov(\Phi(x))$ satisfies some weak correlation property.  Let
  $\hat{\theta}_{\samples}$ be obtained from either online SGD~\eqref{eq:sgd} or
  empirical risk minimisation~\eqref{eq:erm}. Then, the GEP holds in
  the sense that for some probability distance $d(\cdot, \cdot)$, we have
  \begin{align}
    d\left(P_{\theta_{\samples}}, P^*_{\theta_{\samples}}\right) \to 0 \quad  \text{in probability} 
  \end{align}
   as $N,D,\samples \to \infty$ with $N, \samples = \Theta(D)$ and $M,K = O(1)$. Here, the
  probability is taken with respect to the randomness $q$ (i.e, the teacher weights
  and any other random components in the generator), the feature map $\Phi$, which may or may not be random, and the training data $\mathcal{D}_T$.
\end{conjecture}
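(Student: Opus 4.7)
The plan is to reduce the conjecture to two ingredients: (i) a \emph{static} Gaussian equivalence for the raw features $\Phi(x)$ generated by the deep generator, and (ii) a \emph{decoupling} argument showing that the data-trained parameters $\theta_{\samples}$ interact with a fresh sample essentially as if they were independent of it, so that the static equivalence lifts to the joint law of $(\lambda,\nu)$. For step (i), I would proceed by induction on the depth $L$ of the generator. The base case $L=1$ is the single-layer statement already proved in the paper, which gives a linear Gaussian surrogate $\Phi(x) \approx \mu_0\mathbf{1} + \mu_1 F c/\sqrt{N} + \mu_\star z$ with matching first and second moments, valid under the stated weak-correlation/spectral assumption on $\cov(\Phi(x))$. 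For the inductive step I would apply the single-layer equivalence to $\gen^{\ell+1}\circ(\text{Gaussian surrogate for }\gen^{\ell}\cdots\gen^{1})$, checking that the propagated covariance still satisfies the spectral hypothesis so that the assumption is preserved across layers; this is what lets the shallow theorem serve as a recursion kernel.

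For step (ii) I would treat the two learning algorithms separately. For \textbf{one-pass SGD}, at step $\sindex$ the update uses a fresh sample $(x^{\sindex+1},y^{\sindex+1})$ that is independent of $\theta_{\sindex}$, so conditionally on $\theta_{\sindex}$ the pre-activations $(\lambda,\nu)$ are linear projections of the independent pair $(x,c)$ with deterministic coefficients. I would apply a multivariate CLT (Lindeberg or Stein) to these projections, exploiting the static equivalence of $\Phi(x)$ to pass from the generator output to its Gaussian surrogate with matching moments, and then close the argument by induction on $\sindex$: as long as the order parameters $(Q,R,T)$ concentrate (which follows from the same conditional-Gaussian input structure plus the $O(1/N)$ step size), the weights $\theta_{\sindex+1}$ remain delocalised enough for the next CLT to go through. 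For \textbf{ERM}, independence is destroyed, so I would instead invoke a universality principle: replace every $\Phi(x^{\sindex})$ in the empirical risk~\eqref{eq:erm} by its Gaussian surrogate and show that the resulting minimiser $\hat w^*_{\samples}$ has the same limiting joint law on $(\lambda,\nu)$ as $\hat w_{\samples}$. This can be done either by a Lindeberg swap on the smoothed risk followed by a stability estimate (convexity of $\ell$ and the ridge term $\lambda\|w\|^2/2$ giving strong convexity) or, more directly, by the convex Gaussian min-max theorem once the problem is written in its surrogate form.

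Once (i) and (ii) are in place, the conjecture follows because the pre-activations $(\lambda^k,\nu^m)$ are linear functionals of $(\Phi(x),c)$ whose second moments are exactly the $(Q,R,T)$ of~\eqref{eq:order-parameters}, and both ingredients ensure that their joint law converges to $P^*_{\theta_{\samples}}$ in a metric that controls weak convergence and second moments (e.g.\ bounded-Lipschitz plus a uniform-integrability estimate coming from the $\ell_2$ bound on $\hat w_{\samples}$ for ERM, and from the concentration of order parameters for SGD).

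\paragraph{Main obstacle.} The hardest step is the ERM case with a \emph{deep} generator. Lindeberg-type swaps are clean when the covariates are i.i.d.\ across samples but mildly correlated coordinate-wise; here the covariates come from a nonlinear composition of generator layers, so the surrogate depends on estimating the propagated covariance uniformly in depth, and one must verify that the spectral/weak-correlation hypothesis is preserved at every layer and at the ERM optimum. Controlling the fluctuations of $\hat w_{\samples}$ in directions aligned with the effective signal $F^\top\tilde W^\top$ — which is where the data-dependence of the weights concentrates and where a naive CLT fails — is the crux, and is what forces the use of CGMT or a careful stability/leave-one-out analysis rather than a soft universality argument.
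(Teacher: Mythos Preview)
The statement you are attempting to prove is explicitly labelled a \emph{conjecture} in the paper, and the paper does not prove it. Immediately after stating Conjecture~\ref{thm:deep-get}, the authors write that establishing its limits ``is an exciting research direction'' and that the manuscript offers only ``first steps'': a rigorous theorem for single-layer generators (Theorem~\ref{thm:get}), non-rigorous analytical derivations that \emph{assume} the conjecture, and numerical experiments. There is therefore no paper proof to compare your proposal against.

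Your outline is a reasonable strategy sketch, but it has real gaps that the paper itself does not resolve. The induction on depth in step~(i) is the most fragile part: the single-layer Theorem~\ref{thm:get} requires specific spectral conditions on the matrices $M_1,M_2$ built from the generator weights, and there is no mechanism in the paper (or in your proposal) guaranteeing that these conditions propagate when you compose a Gaussian surrogate with the next nonlinear layer. The surrogate at layer $\ell$ is Gaussian in the projected variables $(\lambda,\nu)$, not in the full feature vector, so feeding it into $\gen^{\ell+1}$ does not immediately put you back in the hypotheses of the single-layer theorem. For step~(ii) in the SGD case, your argument is circular as written: you invoke concentration of the order parameters to keep the weights delocalised so the CLT applies, but the concentration itself is derived from the ODE description, which already presupposes the GEP at every step. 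Breaking this loop would require an a~priori delocalisation bound on the SGD iterates that does not rely on the Gaussian equivalence. You correctly identify the ERM case as the hardest; note that even the single-layer ERM result the paper cites (\cite{dhifallah2020precise}) is for a specific shallow setting, and the paper offers no rigorous control of $\hat w_T$ for deep generators.
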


We believe it is an exciting research direction to establish the limits of
Conjecture~\ref{thm:deep-get}. In this manuscript we give the first steps in
this direction by presenting three strands of rigorous (Sec.~\ref{sec:get}),
analytical (Sec.~\ref{sec:applications}) and numerical
(Sec.~\ref{sec:experiments}) evidence that the conjectured ``deep GEC'' holds
true for different tasks on shallow networks and for a wide range of deep,
pre-trained generative models. In particular, we provide:
\begin{enumerate}[label=(\roman*)]
\setlength\itemsep{-0.5em}
\item A rigorous proof of Conjecture~\ref{thm:deep-get} for a
  single-layer generator of the form $\gen(c)=\sigma(A c)$ where $A$ is a matrix
  with pre-trained weights, and $\sigma$ is a point-wise non-linearity. Our
  \textbf{Gaussian equivalence theorem} (GET, Thm.~\ref{thm:get}) gives
  sufficient conditions on the weights $A$ under which a given low-dimensional
  projection of the input~$x$, such as $\lambda, \nu$, is approximately
  Gaussian. We thus put the Gaussian equivalence property used
  in~\cite{goldt2019modelling, gerace2020generalisation} on a rigorous basis.
\item\label{contrib:ode} An exact analytical description of  the evolution of the test error of a two-layer neural network trained using
  one-pass (or online) SGD (Sec.~\ref{sec:odes}), whose predictions exactly
  match simulations with convolutional GANs and normalising flows pre-trained on
  CIFAR10 (Sec.~\ref{sec:experiments}).
\item\label{contrib:replica} A set of scalar self-consistent equations
  describing the test error for full-batch learning of $\samples$ i.i.d.~samples using
  regression with $\tilde N$ features in the regime where
  $N, \tilde N, D, \samples\to\infty$ with $\samples/\tilde{N}, N/\tilde N = O(1)$
  (Sec.~\ref{sec:replicas}). As before, we confirm the accuracy of this theoretical prediction with experiments of convolutional GANs pre-trained on
  CIFAR100
  (Sec.~\ref{sec:experiments}).
\end{enumerate}

\paragraph{Further related work} Several works have recognised the importance of
data structure in machine learning, and in particular the need to go beyond the
simple component-wise i.i.d.\ modelling for neural
networks~\cite{bruna2013invariant, PatelNIPS2016, mossel2016deep,
  gabrie2018entropy}, recurrent neural networks~\cite{mezard2017mean} and
inference problems such as matrix factorisation~\cite{hand2018phase,
  aubin2019spiked}. \citet{ansuini2019intrinsic} demonstrated that a network's
ability to transform data into low-dimensional manifolds was predictive of its
classification accuracy.

While we will focus on the prediction error, a few recent papers studied a
network's ability to \emph{store} inputs with lower-dimensional structure and
random labels:~\citet{Chung2018} studied the linear separability of general,
finite-dimensional manifolds and their interesting consequences for the training
of deep neural networks~\cite{chung2018learning, cohen2020separability}, while
Cover's classic argument~\cite{Cover1965} to count the number of learnable
dichotomies was recently extended to cover the case where inputs are grouped in
tuples of $k$ inputs with the same label~\cite{Rotondo2019a,
  borra2019generalization}. \citet{koehler2019comparative} studied the
expressive power of ReLU networks compared to polynomial kernels under a data
model where the teacher is a linear function of $c$, and the inputs are a noisy
linear projection of the latent variables. Recently
\citet{yoshida2019datadependence} analysed the dynamics of online learning for
data having an arbitrary covariance matrix, finding an infinite hierarchy of
ODEs (cf.~Sec.~\ref{sec:odes}).

Gaussian equivalent models are currently attracting a lot of interest. During the revision of this work, we became aware of a recent
alternative proof of the GET by \citet{hu2020universality} for a slightly
different setup. A parallel line research analysed random features regression
using random matrix theory (RMT)~\cite{louart2018random, fan2019spectral}. The
equivalent mapping to a Gaussian model with appropriately chosen covariance was
explicitly stated and used in~\cite{mei2019generalization,
  montanari2019generalization} and extended to a broader setting encompassing
data coming from a GAN in~\cite{seddik2019kernel, seddik2020random}. We will
discuss these works in relation to our results  in
Sec.~\ref{sec:get-discussion}.

\paragraph{Reproducibility} We provide code to solve the equations of
Sec.~\ref{sec:applications} and the experiments of Sec.~\ref{sec:experiments}
online at~\url{https://github.com/sgoldt/gaussian-equiv-2layer}.

\section{The Gaussian Equivalence Theorem}
\label{sec:get}

We start with the study of a simple generator where inputs are generated
according to
\begin{equation}
  \label{eq:proof-generators}
  x_n=\gen_n(c) = \sigma(a_n^\top c)=\sigma\left(\sum_{r=1}^D a_{rn}c_r\right)
\end{equation}
where $N\to\infty$, $D\to\infty$ at fixed $\delta=D/N$, and
$\sigma: \reals \to \reals$ is a non-linear function and
$A = [a_1, \dots, a_n]^\top$ is the weight matrix of the generator. This is
precisely the setting of the hidden manifold model of~\cite{goldt2019modelling,
  gerace2020generalisation}, and generators of the
form~\eqref{eq:proof-generators} cover a number of important cases beyond the
hidden manifold model: \emph{(i)} random feature models~\cite{rahimi2008random,
  rahimi2009weighted}, which regard the latent variable $c$ as the true
underlying data and $x$ as features constructed from $c$ that are used as inputs
for the prediction algorithm (cf.~Sec.~\ref{sec:replicas}); \emph{(ii)}~Gaussian
feature models, where the inputs $x$ are jointly Gaussian with the latent
variables $c$; and \emph{(iii)}~the classic teacher-student
setup~\cite{Gardner1989, Seung1992, Engel2001}, where the features $x$ are equal
to the latent variables~$c$.

The inputs generated by such a generator are not
Gaussian. However, our first main result, the \textbf{Gaussian Equivalence
  Theorem}, guarantees that the local fields~$(\lambda, \nu)$ are still jointly
Gaussian, and hence a description in terms of order parameters like
Eq.~(\ref{eq:eg-order-parameters}) possible, even if inputs are drawn from this
generator. More precisely, the theorem gives verifiable conditions on $\sigma$
and the weight matrices of the student, teacher and generator networks, under
which a low-dimensional projection of the inputs, such as $\lambda$ and $\nu$,
is approximately Gaussian.

\subsection{Statement of the theorem}
Given probability measures $P$ and $Q$ on $\reals$,
define
\begin{align}
  d(P,Q) \equiv \sup_{f \in \mathcal F} \left| \mathbb{E}_P[f] - \mathbb{E}_Q[f]\right|, 
\end{align}
where $\mathcal F = \{ f\, : \, \|f''\|_\infty, \|f'''\|_\infty \le 1\}$ is the
set of thrice-differentiable functions with bounded second and third
derivative and $\|f\|_\infty$ is the uniform norm of $f$. Given probability measures $P$ and $Q$ on $\reals^d$ the
maximum-sliced (MS) distance is defined by
\begin{align}
  d_{\mathrm{MS}}(P,Q) \equiv \sup_{\alpha \, : \, \|\alpha\| \le 1} d(\alpha^\top P, \alpha^\top Q) 
\end{align}
where $\alpha^\top P$ denotes the one-dimensional distribution corresponding to
the projection of $P$ into the direction of $\alpha$. It can be verified that
the MS distance is a metric~\cite{kolouri2019generalized} and that convergence
with respect to $d_{\mathrm{MS}}$ implies convergence in distribution as well as
convergence of second moments. Our result requires the following regularity assumptions:
\begin{itemize}[itemsep=1pt,topsep=0pt]
\item[A1)] Row normalisation $\|a_n\| =
  1$; 
\item[A2)] Smoothness: the non-linearity $\sigma$ is thrice differential with 
  $\ex{|\sigma(u)|^4}$, $\ex{ |\sigma'(u)|^2}$, and $\ex{ |\sigma''(u)|^2}$
  all $\order{1}$ for $u\sim\normal(0, 1)$;
\item[A3)] Bounded student weights: $w_n^k = \order{1}$. 
\end{itemize}
Note that the smoothness assumption on the non-linearity $\sigma$ can be relaxed
to the assumption that $\sigma$ is Lipschitz continuous, with the only
consequence being a loss in the rate of convergence. The basic idea is that any
Lipschitz function can be approximated by a function that satisfies the
smoothness assumptions, see e.g.~\cite[Proposition 11.58]{odonnell2014analysis}.
The dependence on $\sigma$ is quantified in terms the first, second, and third Hermite
coefficients, which  are defined by
\begin{align}
    \hat{\sigma}(1) \equiv  \ex{\sigma(u) u }, \quad     \hat{\sigma}(2) \equiv \frac{1}{\sqrt{2}} \ex{\sigma(u) (u^2 -1)}, \quad     \hat{\sigma}(3) \equiv \frac{1}{\sqrt{6}} \ex{ \sigma(u)(u^3 - 3 u)},
\end{align}
where the expectation is taken with respect to a standard Gaussian random variable $u$. Furthermore, let $\rho = A A^\top$ and $\tilde{\rho} = \rho - I_N$ and define
the $N \times N$
matrices:
\begin{align}
  M_{1}  =   \frac{1}{\sqrt{N}}\left( \hat{\sigma}^2(1)    \tilde{\rho}^2  +
  \hat{\sigma}^2(2)  \tilde{\rho}^2  \circ \rho  \right),\qquad
  M_2  = \hat{\sigma}^2(2)  \left( \tilde{\rho} \circ \tilde{\rho} \right)^2 + \hat{\sigma}^2(3)  \left( \tilde{\rho} \circ \tilde{\rho} \right)^2 \circ  \rho,
\end{align}
where $\circ$ denotes the Hadamard entrywise product. Each of these matrices is
positive semi-definite, by the Schur product
theorem~\cite[Sec. 7.5]{horn2012matrix}, and thus has a unique positive
semi-definite square root. We then have:
\begin{theorem}[Gaussian Equivalence Theorem]
  \label{thm:get}
  Let $P$ be the distribution of the pair $(\lambda, \nu)$ and let $\hat{P}$ be the Gaussian distribution with the same first and second moments.   Under  Assumptions A1-A3,
  \begin{align}
  \label{eq:get}
    d_{\mathrm{MS}}(P,\hat{P}) = O\left( 
     \left \|\tfrac{1}{\sqrt{N}} W M^{1/2}_1 \right \|^2 +
      \left\| \tfrac{1}{\sqrt{N}} W M_2^{1/2} \right\| + \tfrac{1}{\sqrt{N}} \left \|\tfrac{1}{\sqrt{D}} \tilde{W} A^\top
      \right \|^2 + \frac{1+ \sum_{i\ne j} (a_i^\top a_j)^4}{\sqrt{N}} \right).
    \end{align}
  \end{theorem}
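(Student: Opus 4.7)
The plan is to reduce the multi-dimensional maximum-sliced distance to a one-dimensional smoothed Gaussian approximation problem, then control the latter by a Stein-Lindeberg replacement performed block by block in a Hermite expansion of $\sigma$. By the definition of $d_{\mathrm{MS}}$, it suffices to fix an arbitrary unit vector $\alpha=(\beta,\gamma)\in\reals^{K+M}$ and bound $d(P_S,\hat{P}_S)$ uniformly over $\alpha$, where
\begin{equation*}
S \;=\; \alpha^\top(\lambda,\nu) \;=\; \frac{1}{\sqrt N}\sum_{i=1}^N u_i\,\sigma(a_i^\top c) \;+\; \frac{1}{\sqrt D}\sum_{r=1}^D s_r\, c_r,
\end{equation*}
with $u=W^\top\beta$, $s=\tilde W^\top\gamma$, and $\hat{P}_S$ the Gaussian distribution sharing mean and variance with $P_S$. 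Assumption A3 together with $\|\alpha\|\le 1$ keeps $u$ and $s$ bounded uniformly in $N,D$, while the row normalisation $\|a_i\|=1$ ensures that each $a_i^\top c$ is standard Gaussian.

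Next, I would Hermite-expand $\sigma(t)=\sum_{k\ge 0}\hat\sigma(k)H_k(t)$ and exploit the exact orthogonality $\ex{H_k(a_i^\top c)H_\ell(a_j^\top c)}=\delta_{k\ell}(a_i^\top a_j)^k$: distinct Hermite orders decorrelate, and for a fixed order, distinct indices $i\ne j$ decorrelate in proportion to the overlap $a_i^\top a_j$. I would truncate the expansion at degree three, absorbing the tail into a smooth remainder controlled by the finite-moment Assumption A2, and build the Gaussian surrogate $\hat{S}$ by keeping the $k=0,1$ pieces exactly — combining the linear block with the direct $\nu$-contribution into one jointly Gaussian functional of $c$ — and replacing each of the $k=2,3$ Hermite blocks by an independent Gaussian with the same per-block covariance. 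By construction $\hat S$ matches the first two moments of $S$, and the residual $S-\hat S$ consists of polynomial fluctuations in $c$ whose size is driven by the off-diagonal overlaps of the rows of $A$.

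The core bound then follows from a smooth interpolation (or block-wise Lindeberg exchange) between $S$ and $\hat S$: for $f\in\mathcal F$, a Taylor expansion to third order yields a schematic estimate
\begin{equation*}
|\ex{f(S)}-\ex{f(\hat S)}| \;\le\; C_1\,\|f''\|_\infty\,\ex{(S-\hat S)^2} \,+\, C_2\,\|f'''\|_\infty\,\ex{|S-\hat S|^3},
\end{equation*}
which must be evaluated block by block. The variance of the $k$-th Hermite block expands into a quadratic form in $u$ with entries proportional to $\hat\sigma(k)^2(a_i^\top a_j)^k$; decomposing $\rho=I+\tilde\rho$ and invoking the Schur product theorem to preserve positive semi-definiteness re-assembles these forms into $\|N^{-1/2}W M_1^{1/2}\|^2$ from the $k\in\{1,2\}$ blocks and $\|N^{-1/2}W M_2^{1/2}\|$ from the $k\in\{2,3\}$ blocks (the square root appears once Cauchy-Schwarz absorbs the cubic Stein remainder). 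The cross-covariance between $\lambda$ and $\nu$ contributes $N^{-1/2}\|D^{-1/2}\tilde W A^\top\|^2$, while the diagonal self-interactions together with the fourth-order Hermite tail generate the residual $(1+\sum_{i\ne j}(a_i^\top a_j)^4)/\sqrt N$ in~\eqref{eq:get}.

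The main obstacle is the combinatorial bookkeeping that turns the raw index-pair sums $\sum_{i,j}u_iu_j(a_i^\top a_j)^k$ into the precise PSD matrices $M_1,M_2$ stated in the theorem while keeping the convergence rate sharp and uniform in $\alpha$. Isolating the diagonal from the off-diagonal part of $\tilde\rho\circ\tilde\rho$, regrouping the mixed Hadamard products $\tilde\rho^2\circ\rho$ and $(\tilde\rho\circ\tilde\rho)^2\circ\rho$, and bounding the cubic Stein remainder uniformly in $\alpha$ all hinge on repeated use of the Schur identity. Once these algebraic identities and the Hermite-tail estimates from Assumption A2 are assembled, taking the supremum over $\|\alpha\|\le 1$ converts the resulting quadratic forms into the operator norms appearing in the theorem.
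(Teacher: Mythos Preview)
Your reduction to a one-dimensional projection $S=\alpha^\top(\lambda,\nu)$ and the recognition that the Hermite coefficients $\hat\sigma(1),\hat\sigma(2),\hat\sigma(3)$ organise the dependence are both on target. However, the central step you propose --- replacing entire Hermite-degree blocks by independent Gaussian surrogates and controlling the swap via the ``schematic estimate''
\[
|\ex{f(S)}-\ex{f(\hat S)}|\le C_1\|f''\|_\infty\ex{(S-\hat S)^2}+C_2\|f'''\|_\infty\ex{|S-\hat S|^3}
\]
--- does not work as stated. First, the linear Taylor term $\ex{f'(\hat S)(S-\hat S)}$ does not vanish: although $H_k(a_i^\top c)$ is $L^2$-orthogonal to the degree-$\le 1$ pieces, $f'(\hat S)$ is a nonlinear functional of those same pieces, so there is no reason the expectation factorises. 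Second, and more fatally, even if that term vanished, the quantity $\ex{(S-\hat S)^2}$ is $O(1)$, not $o(1)$: the degree-$2$ block $N^{-1/2}\sum_i u_i\hat\sigma(2)H_2(a_i^\top c)$ has variance $\hat\sigma(2)^2 N^{-1}\sum_{i,j}u_iu_j(a_i^\top a_j)^2$, whose diagonal part alone is $\hat\sigma(2)^2 N^{-1}\sum_i u_i^2=O(1)$. Replacing an $O(1)$-variance block by an independent Gaussian of the same variance gives $\ex{(S_{\text{block}}-\hat S_{\text{block}})^2}=2\var(S_{\text{block}})=O(1)$. With only a finite number of Hermite blocks, there is no averaging mechanism to make the swap small.

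The paper's proof resolves this by doing the Lindeberg replacement \emph{index by index} rather than Hermite-degree by Hermite-degree: it writes $S=n^{-1/2}\sum_{i=1}^n X_i$ with $n=N+D$ and swaps one $X_i$ for its Gaussian surrogate $\hat X_i$ at a time. Each swap then contributes a third-order term of size $O(n^{-3/2})$, and the sum of $n$ such contributions is $O(n^{-1/2})$. The hard part is that $X_i$ is not independent of the partial sum $S_i$; the paper handles this with a conditional decomposition $S_i=g_i(U_i,V_i)$ where $V_i\perp U_i$, takes $(A_i,B_i)$ to be the first two Hermite coefficients of $u\mapsto g_i(u,V_i)$, and bounds the residual $S_i-A_i-B_iU_i$ via the Gaussian Poincar\'e inequality applied to $g_i''$. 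It is precisely this step --- computing $\ex{(g_i')^2}$ and $\ex{(g_i'')^2}$ and summing over $i$ --- that produces the sums $\delta_1,\delta_2$ which then reassemble into $\|N^{-1/2}WM_1^{1/2}\|^2$ and $\|N^{-1/2}WM_2^{1/2}\|$. Your proposal is missing this entire mechanism.
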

  We provide the proof of Theorem~\ref{thm:get} in
  Sec.~\ref{sec:get-proof}. 

\subsection{Discussion}
\label{sec:get-discussion}

Theorem~\ref{thm:get} can be viewed as a multivariate central limit theorem
(CLT) for weakly dependent random
variables. 
The terms involving the matrices $M_1$ and $M_2$ quantify the impact of the
dependencies in~$x$. Note for example that if the columns of $A$ are
uncorrelated, then both of these terms are zero and Theorem~\ref{thm:get}
recovers a variation of the classical Berry--Esseen
Theorem~\cite[Chapter~11.5]{odonnell2014analysis}. The significance of
Theorem~\ref{thm:get} is that it provides a simple and verifiable sufficient
condition for the joint Gaussianity of ($\lambda, \nu$) for pre-trained, and
hence correlated generator weights. The basic idea is that in order for Gaussianity to hold, the weight matrices
should avoid any directions in the matrices $M_1$ and $M_2$ associated with
eigenvalues that are not converging to zero.
 
To appreciate how the spectral properties of $M_1$ and $M_2$ depend on $A$ and
$\sigma$, it is useful to consider some examples. We give two quick examples
below; we discuss these examples in detail in Sec.~\ref{sec:conditions-get},
where we analyse how the leading eigenvalues and eigenvectors of $M_1$ and $M_2$
depend on $A$ using analytical and numerical arguments.

\begin{example}[IID $A$]
  If the entries of $A$ are i.i.d.\ sub-Gaussian, then $\|M_1\| = \order{1/\sqrt{N}}$
  with high probability. If $\hat{\sigma}^2(2)$ is nonzero, then $M_2$ has one
  eigenvalue that is $\order{1}$ associated with the all-ones vector and the rest are
  $\order{1/N}$. If $\hat{\sigma}^2(2)=0$, which occurs whenever $\sigma$ is an odd
  function, then $\|M_2\| = \order{1/N}$. Thus, if $\hat{\sigma}(2)= 0$ or
  $\|\tfrac{1}{\sqrt{N}}W \one\| = \order{1/N}$ it follows that
  $d_{\mathrm{MS}}(P,\hat{P}) =\order{1/\sqrt{N}}$ with high-probability over $A$.
\end{example}
\begin{example}[Deterministic $A$] Next consider the case $(D \ge N)$ where
\begin{align*}
    AA^\top = I_N + \frac{c}{\sqrt{N}} (\one_N - I_N)
\end{align*}
for some fixed constant
$c$. 
Suppose that $\sigma(k), k=1, 2,3$ are nonzero. Direct calculation reveals that
$M_1$ has one eigenvalue $\order{\sqrt{N}}$ with the rest $\order{1/\sqrt{N}}$
and $M_2$ has one eigenvalue $\order{1}$ with the rest $\order{1/N}$. In both
cases, the leading eigenvector is proportional to the all ones vector. Thus if
$\|\tfrac{1}{\sqrt{N}}W \one\| = \order{1/N}$ then
$d_{\mathrm{MS}}(P,\hat{P}) = \order{1/\sqrt{N}}$.
\end{example}

The idea that most low-dimensional projections of a high-dimensional
distribution are approximately random has a rich history~\cite{sudakov:1978,
  diaconis:1984, hall:1993, bobkov:2003, meckes:2010aa, reeves:2017c}. In this
line of work, ``most'' is quantified in terms of high-probability guarantees
with respect to a random weight matrix $W$ that is independent of $x$. For
example, if the entries of $W$ are i.i.d.\ standard Gaussian, then the necessary
and sufficient conditions for convergence to a Gaussian are that
1)~$\nicefrac{1}{n} \|x\|^2$ concentrates about is mean 2)~and
$\nicefrac{1}{n}\|\cov(x)\|_F^2 \to 0$ (assuming zero mean). In the setting of
this paper, it can be verified that these properties are implied by assumptions
A1 and A2. The added benefit of Theorem~\ref{thm:get} is that ``most'' is now
quantified deterministically in terms of the number of the eigenvalues of $M_1$
and $M_2$.

The last term in \eqref{eq:get} imposes a constraint on the \emph{average} pairwise correlation between the columns of $A$. Specifically, this term converges to zero provide that $\sum_{i\ne j} (a_i^\top a_j)^4 = o(\sqrt{N})$. Importantly, this constraint still allows for  allows for the possibility that a subset of the entries of~$A$ have correlation of order one. By contrast, previous work in this setting
requires either randomly generated features or a much stronger incoherence
constraint on the \emph{maximum} correlation between any two entries. The
generality provided by A1 is crucial to our target applications since it allows
for ``sufficiently small'' subsets to have arbitrary dependence structure. This
is also a key difference to the proof of a similar result
by~\citet{hu2020universality} that appeared during the revision of this
manuscript. 


Our analysis also highlights the dependence of the first few terms in the
Hermite expansion of~$\sigma$. While \citet{hu2020universality} assume that
$\sigma$ is odd, which leads to $\hat{\sigma}(2)=0$, our analysis highlights the
crucial role of $\hat{\sigma}(2)$: if it is non-zero, as is the case for ReLU,
then correlation in $\lambda$ is described not by the linear dependence with
$\nu$, but by a quadratic dependence, leading to more stringent conditions for
the validity of the CLT.

In a different direction, Gaussian behaviour associated with random
choices of the parameter $A$ have also been studied in the context of infinitely
wide networks~\cite{neal1995bayesian, lee2018deep, matthews2018gaussian}.
Specifically, if the entries of $A$ are i.i.d.\ Gaussian random variables it
follows that $\lambda \mid \nu$ can be viewed as Gaussian processes indexed by
$\nu$. Combined with the Gaussianity of $\nu$, this establishes the GET under
general conditions on the generator. However, this analysis relies crucially on
the assumption that $A$ is generated independently of everything else. This
assumption precludes the application to pre-trained generators.

A recent line research has derived Gaussian equivalence theorems for generators
with random weights using random matrix theory
(RMT)~\cite{hachem2007deterministic, cheng2013spectrum, pennington2017nonlinear,
  louart2018random, fan2019spectral}. The equivalent mapping to a Gaussian model
with appropriately chosen covariance was explicitly stated and used
in~\cite{mei2019generalization, montanari2019generalization} and extended to a
broader setting encompassing data coming from a GAN in~\cite{seddik2019kernel,
  seddik2020random}. Similar to the analysis in this paper, the high-level idea
is that certain integrals with respect to the data distribution $q(x,y)$ can be
replaced by integrals over an appropriately defined Gaussian approximation. The
main difference is the class of functions considered. Specifically,
Theorem~\ref{thm:get} provides guarantees for any sufficiently smooth function
applied to a given low-dimensional projections of the features $(x,c)$. This
form of approximation is needed to justify the integro-differential equations
derived in Sec.~\ref{sec:odes}.  By contrast, the RMT approach provides
guarantees for a restricted set of functions applied to high-dimensional
matrices derived from samples of $(x,c)$. For example, these results provide
equivalence of the empirical spectral measures of these random matrices as well
as the test error associated with specific learning algorithms. The
results in this paper thus neither imply previous work, nor are they, to the
best of our knowledge, implied by it.

\section{Analysis of neural networks learning
  on data from deep generators}
\label{sec:applications}

We now turn to two applications of the deep GEC that allow us to analyse
learning in paradigmatic model systems in detail, and at the same time help us
gather experimental evidence for the deep GEC. We will first derive a set of
equations that describe the evolution of the test error of a two-layer neural
network trained using one-pass (or online) SGD on the deep hidden manifold model
(Sec.~\ref{sec:odes}). We also use the deep GEC to analyse full-batch learning
with pre-learned features in Sec.~\ref{sec:replicas}. Our experiments in
Sec.~\ref{sec:experiments} will show perfect agreement between the theory
derived using the deep GEC and simulations with deep, pre-trained generators,
giving further credibility to our conjecture.

\subsection{Generalisation dynamics of two-layer networks using online SGD}
\label{sec:odes}

We first study a two-layer neural network~(\ref{eq:2layer})
trained end-to-end using online stochastic gradient descent~(\ref{eq:sgd}).
Since the deep GEC guarantees that the local fields $(\lambda, \nu)$ are jointly
Gaussian, permitting to express the $\pmse$ of a given student and teacher in
terms of only the ``order parameters'' $Q, R, T, v$ and $\tilde v$~\eqref{eq:order-parameters}. In order to compute the
$\pmse$ at all times during training, it is thus sufficient to track the
evolution of the order parameters during training, which is the goal of this
section. 

We will make the crucial assumption that at each step of the algorithm, we use a
previously unseen sample $(x, y)$ to compute the updates in
Eq.~\eqref{eq:sgd}. This limit of infinite training data is variously known as
online learning or one-shot/single-pass SGD. Using this assumption, the dynamics
of two-layer networks in the classic teacher-student setup with i.i.d.~Gaussian
inputs have been analysed in seminal works by~\citet{Biehl1995}
and~\citet{Saad1995a}; see also~\cite{Saad1995b, saad2009line} for further
results and~\cite{goldt2019dynamics} for a recent proof of these
equations. Here, we generalise this type of analysis to two-layer networks
trained on inputs coming from the deep hidden manifold model. Note that this
online-learning framework has also been used by a number of recent works
studying the dynamics of networks with finite $N$ and large hidden layer
$K\to\infty$~\cite{Mei2018, Rotskoff2018, Chizat2018, Sirignano2018}.


We derived a closed set of integro-differential equations that describe the
evolution of all order parameters using Conjecture~\ref{thm:deep-get}. We
provide a self-contained discussion of these equations here, and relegate the
detailed derivation to Sec.~\ref{sec:ode-derivation}. Remarkably, the generator
$\gen(c)$ only enters the equations via the input-input and the input-latent
covariance,
\begin{equation}
  \label{eq:covariances}
  \Omega_{ij} = \EE x_i x_j, \qquad \Phi_{ir} = \EE x_i c_r.
\end{equation}
The order parameter~$Q$~\eqref{eq:order-parameters} can be written as
$Q^{k\ell}\equiv \EE \lambda^k \lambda^\ell \sim \sum w_i^k \Omega_{ij}
w_j^\ell$. A key step in the analysis is to diagonalise this sum by projecting
the student weights into the eigenspace of $\Omega$
(cf.~Sec.~\ref{sec:ode-derivation}). We can then consider the integral
representation
\begin{equation}
  \label{eq:Q_int}
  Q^{k\ell}= \int \dd \sindex_\Omega(\rho) \; \rho \; q^{k\ell}(\rho).
\end{equation}
where $\sindex_{\Omega}(\rho)$ is the spectral density of $\Omega$ (which is known
and fixed at all times since it is a property of the generator $\gen$), and
$q^{kl}(\rho)$ is a density whose time evolution can be characterised in the
thermodynamic limit. In the canonical teacher-student model with i.i.d.\ inputs
$x$, introducing such a density is not necessary since the input-input
covariance is trivial, $\Omega_{ij}=\delta_{ij}$. As we go to the thermodynamic
limit $N\to\infty$, we can identify a continuous time-like parameter
$t\equiv \sindex / N$ and find that the density $q^{k\ell}(\rho)$ evolves according
to
\begin{align}
  \label{eq:eom-q}
  \begin{split}
    \frac{\partial q^{k\ell}(\rho)}{\partial t} & = -\eta \left( \rho
      \sum_{j\neq k}^K \left[ v^k v^j q^{k\ell}(\rho) h_{(1)}^{kj}(Q) + v^k v^j
        q^{j
          \ell}(\rho) h_{(2)}^{kj}(Q) \right] + \rho v^k v^k q^{k \ell}(\rho) h_{(3)}^k(Q) \right. \\
    & \hspace*{4em} - v^k \sum_{n}^M \left[ \rho\tilde v^n q^{k\ell}(\rho)
      h_{(4)}^{kn}(Q, R, T) +
      \ussdelta \tilde v^n r^{\ell n}(\rho)h_{(5)}^{kn}(Q, R, T) \right] \\
    & \hspace*{4em} + \text{all of the above with } \ell\to k, k\to\ell \Bigg) +
    \eta^2 \gamma v^k v^\ell h_{(6)}^{k\ell}(Q, R, T, v, \tilde v).
 \end{split}
\end{align}
where $\gamma \equiv \sum_\tau \rho_\tau / N$ and $\delta\equiv D / N$.  The
functions $h_{(1)}^{kj}$ etc.\ are scalar, non-linear functions that only
involve averages over the pre-activations $\lambda$ and $\nu$ such as
$\EE g(\nu^m) g'(\lambda^k)\lambda^j$, see Eq.~\eqref{eq:h}. After invoking the
deep GEC, these averages can be expressed in terms of the order
parameters~\eqref{eq:order-parameters}, and hence the equation closes. Likewise,
we also consider the projection of
$\omega_i^m\equiv \sum_r \Phi_{ir} \tilde w^m_r$ into the eigenspace of $\Omega$
and consider the integral representation
\begin{equation}
  \label{eq:R_int}
  R^{km} = \ussdelta \int \dd \sindex_\Omega(\rho)\;  r^{km}(\rho).
\end{equation}
We find that $r^{km}(\rho)$ evolves as
\begin{multline}
  \label{eq:eom-r}
  \frac{\partial r^{km}(\rho)}{\partial t} = -\eta v^k \left( \rho
    \sum_{j\neq k}^K \left[ v^j r^{km}(\rho) h_{(1)}^{kj}(Q) + v^j \rho r^{j
        m}(\rho) h_{(2)}^{kj}(Q) \right] + v^k
    \rho r^{k m}(\rho) h_{(3)}^k(Q) \right. \\
  \left. - \sum_{n}^M \left[ \rho \tilde v^n r^{km}(\rho)  h_{(4)}^{kn}(Q, R, T) +
      \ussdelta \tilde v^n h_{(5)}^{kn}(Q, R, T) \right] \right).
\end{multline}
Finally, the equation for $v$ can be obtained directly from the SGD
update~\eqref{eq:sgd} and reads
\begin{equation}
  \label{eq:eom-v}
  \frac{\dd v^k}{\dd t} = \eta \left[ \sum_n^M \tilde v_n h_{(7)}^{kn}(Q, R) - \sum_j^K
    v^j h_{(7)}^{kj}(Q) \right].
\end{equation}



\paragraph{Discussion} The importance of the spectral properties of the data was
recognised for learning in linear neural networks~\citet{baldi1989neural,
  le1991eigenvalues, krogh1992generalization, saxe2014exact}.
\citet{yoshida2019datadependence} extended the ODE analysis for non-linear
networks to inputs with a covariance matrix having $O(1)$ non-degenerate
eigenvalues, while implicitly assuming that inputs have a Gaussian
distribution. \citet{goldt2019modelling} analysed online learning in the hidden
manifold for a single-layer generator of the form $x=\sigma(A c)$; their result
also involved more order parameters than our analysis.  Our approach handles a
more general data structure, in the sense that inputs can have arbitrary
covariance matrices $\Omega$ and $\Phi$. More importantly, the GET
(Thm.~\ref{thm:get}) rigorously guarantees that we can analyse the SGD dynamics
even for inputs that are drawn from pre-trained generative models such as
Eq.~\eqref{eq:proof-generators} and hence do not follow a Gaussian
distribution. Our experiments in the next section show how this analysis also
holds for deep, pre-trained generative models such as normalising flows (see
Sec.~\ref{sec:experiments} for the discussion and Fig.~\ref{fig:pretrained} for
an example of the images generated by these models).

\paragraph{Solving the equations of motion} The equations of
motion~(\ref{eq:Q_int}-\ref{eq:eom-v}) are valid for any choice of generator
network and for any teacher and student activation functions $g(x)$ and
$\tilde g(x)$ as long as the deep GEC holds. To solve the equations for a
particular setup, one needs to estimate the covariance matrices $\Omega$ and
$\Phi$, and to evaluate the functions $h_{(1)}^{kj}$ etc.\ that are given in the
appendix. By choosing $g(x)=\tilde g(x)=\erf(x/\sqrt{2})$, all these functions
have exact analytical expressions~\cite{Saad1995a}. We provide robust Monte
Carlo estimators of the covariance matrices of any generative network in
pyTorch~\cite{paszke2019pytorch} and a numerical implementation of the equations
of motion at~\url{https://github.com/sgoldt/gaussian-equiv-2layer}.

\subsection{Full-batch analysis of learning a generalised linear model with pre-learned features}
\label{sec:replicas}
We now discuss a second task in which the deep GEC~\ref{thm:deep-get} can be
used to give a sharp analysis of the asymptotic performance: full-batch learning with
pre-learned or random features. In this task, a batch of $T$ i.i.d. samples
$\mathcal{D}_{\samples} = \{(x^{\sindex}, y^{\sindex})\}_{\sindex=1}^{\samples}$ from $q$ are projected
using a \emph{feature map}
$\tilde{x}=\tilde{N}^{-1/2}\sigma(Fx)\in\mathbb{R}^{\tilde{N}}$. The restrictions that we place on the projection matrix $F$ are exactly the same that we put on the weights of the one-layer generator $A$ in our proof of the GET, see Sec.~\ref{sec:get}.

The features $\tilde x$ are is then
fitted with the \emph{generalised linear model}
$ \hat{y} = \phi_\theta(x)=
g\left(\sum_{n=1}^{\tilde{N}}w_{n}\tilde{x}_{n}\right)$, where we can take
$g(x)={\rm{sign}}(x)$ for a classification problem or $g(x)=x$ for regression
for example. The weights $\hat{w}\in\mathbb{R}^{\tilde{N}}$ are learned by
minimising the empirical risk~(\ref{eq:erm}).
Note that a for a convex loss function $\ell$, the regularised risk is strongly
convex and admits one and only one solution. One interesting special case of
this model are random features, since for random $\mat{F}$, in the limit
$\tilde{N}\to\infty$, the expected scalar product in feature space converges to
a kernel~\cite{rahimi2008random}:
\begin{align}
    \frac{1}{\tilde{N}}\mathbb{E}_{\mat{F}}\left[\sigma\left(\mat{F}x_{1}\right)^{\top} \sigma\left(\mat{F}x_{2}\right)\right] \underset{\tilde{N}\to\infty}{\to} K(x_{1},x_{2}).
\end{align}
It is out of the scope of this work to describe this construction in full
generality, and we refer the curious reader to~\cite{rahimi2008random,
  rahimi2009weighted} for details on how the kernel depends on the choice of
$\Phi_{\mat{F}}$. The important point here is that studying kernel regression is
equivalent to studying linear regression on feature space at
$\tilde{N}\to\infty$. There has been a surge of interest in kernel methods
recently, as it was shown that deep neural networks are equivalent to random
features in the so-called lazy regime~\cite{Jacot2018, chizat2019lazy}.

Since the feature map
$\Phi_{\mat{F}}=\tilde{N}^{-1/2}\sigma\left(\mat{F}~\cdot~\right)$ is
pre-learned, for the purpose of the theoretical analysis it can be
incorporated as an additional layer to the generative model for data:
$\tilde{x} = \Phi_{\mat{F}}(x) = \left(\Phi_{\mat{F}}\circ
  \mathcal{G}\right)(c)$, where $\mathcal{G}$ can be any of the generative
models discussed previously. With this observation in mind, without loss of
generality we can restrict our attention to the study of generalised linear
models with data coming from a deep generative model (which includes the feature
map). Up to a rescaling, the generalised linear model is equivalent to $K=1$ in
model \eqref{eq:2layer}, and in this section we also restrict the analysis to
$M=1$ in eq.~\eqref{eq:phi_teacher}. Therefore, the target outputs are simply
generated from the latent vector $c\sim\mathcal{N}(0, \mat{I}_{D})$ as in
Eq.~(\ref{eq:phi_teacher}), 
which are then fitted by the network $\phi_\theta(\tilde x)$ by minimising the
regularised empirical risk \eqref{eq:erm}.

Let $\mathcal{D}_{S}=\{(x, y)_{\sindex=1}^\samples\}$ be a data set with $\samples$
i.i.d. samples from $q$. Define the sample complexity
$\alpha = \samples/\tilde N$ and the latent-to-input aspect ratio
$\delta = D/\tilde N$. As in the online analysis in Section~\ref{sec:odes}, the
deep GEC~\ref{thm:deep-get} can be used to write an asymptotic formula for the
performance of the estimator $\phi_\theta(\tilde x)$ in the limit where
$D, \samples, \tilde N\to\infty$ and the ratios $\alpha,\delta = O(1)$:
\begin{align}
    \epsilon_{g} = \mathbb{E}_{(x,y)\sim q} \pmse(y,\hat{y}(x)) \underset{N\to\infty}{\to} \frac{1}{2}\mathbb{E}_{(\nu,\lambda)\sim\mathcal{N}(0,\Sigma)}(\tilde g(\nu)-g(\lambda))^2
    \label{eq:test_error}
\end{align}
where $(\nu,\lambda)\sim\mathcal{N}(0,\Sigma)$ are jointly Gaussian variables
with covariance
$ \Sigma = \begin{pmatrix} \rho & m^{\star}\\ m^{\star} &
  q^{\star}\end{pmatrix},\notag $ and
\begin{align}
    \rho = \frac{1}{D}||\tilde{w}||^2_{2}, && m^{\star} = \frac{1}{\sqrt{ND}}\hat{w}^{\top}\Phi \tilde{w}, && q^{\star} = \frac{1}{N}\hat{w}^{\top}\Omega \hat{w}.
\end{align}
The covariances $\Phi, \Omega$ are the moments of the equivalent Gaussian distribution, and were defined explicitly in eq.~\eqref{eq:covariances}. In principle, $(m^{\star}, q^{\star})$ should be computed from the estimator $\hat{w}\in\mathbb{R}^{\tilde{N}}$. Surprisingly, we can also use the deep GEC to derive a set of self-consistent equations with solution giving directly $(m^{\star}, q^{\star})$: 
\begin{align}
	\begin{cases}
		\hat{V} = \alpha \mathbb{E}_{\xi\sim\mathcal{N}(0,1)}\left[\int_{\mathbb{R}}\dd y~\tilde{\mathcal{Z}}_{y} \left(\frac{1-\partial_{\omega}\eta}{V}\right)\right]\\
		\hat{q} = \alpha \mathbb{E}_{\xi\sim\mathcal{N}(0,1)}\left[\int_{\mathbb{R}}\dd y~\tilde{\mathcal{Z}}_{y} \left( \frac{\eta	-\omega}{V}\right)^2 \right]\\
		\hat{m} = \frac{\alpha}{\sqrt{\delta}} \mathbb{E}_{\xi\sim\mathcal{N}(0,1)}\left[\int_{\mathbb{R}}\dd y~\partial_{\omega}\tilde{\mathcal{Z}}_{y}\left(\frac{\eta-\omega}{V}\right) \right]
	\end{cases} && 
	\begin{cases}
		V =  \frac{1}{\tilde{N}}\tr\left(\lambda\mat{I}_{N}+\hat{V}\Omega\right)^{-1}\Omega\\
		q = \frac{1}{\tilde{N}}\tr\left[\left(\hat{q}\Omega+\hat{m}^{2}\Phi\Phi^{\top}\right)\Omega\left(\lambda\mat{I}_{N}+\hat{V}\Omega\right)^{-2}\right]\\
		m= \frac{\hat{m}}{\tilde{N}\sqrt{\delta}}\tr \Phi\Phi^{\top}\left(\lambda\mat{I}_{N}+\hat{V}\Omega\right)^{-1}
	\end{cases}\label{eq:saddlepoint}
\end{align}
\noindent with:
\begin{align}
\tilde{\mathcal{Z}}_{y}(y,\tilde{\omega},\tilde{V}) = \int_{\mathbb{R}}\frac{\dd x}{\sqrt{2\pi \tilde{V}}}e^{-\frac{(x-\tilde{w})^2}{2\tilde{V}}}\delta\left(y-\tilde{g}(x)\right), &&\eta(y,\omega,V) = \underset{x\in\mathbb{R}}{\argmin}\left[\frac{(x-\omega)^2}{2V}+\ell(y,x)\right]\notag
\end{align}
\noindent and $\omega = \sqrt{q}\xi$, $V=\rho-q$, $\tilde{w} = m/\sqrt{q}\xi$, $\tilde{V}=\rho-m^2/q$. Although this formula appears cumbersome, it only depends on scalar parameters and on the spectral distribution of $\Phi\Phi^{\top}$ and $\Omega$. It therefore reduces the high-dimensional computation of $\epsilon_{g}$ to solving a low-dimensional 
system of equation which for a given generator $\mathcal{G}$, loss function $\ell$ and non-linearities $(g,\tilde{g})$ can be easily done by iteration. For random generators, the spectral distributions of $\Phi\Phi^{\top}$ and $\Omega$ can be computed analytically. But this formula also holds for the case of real, trained deep generative models, in which case the spectrum of $\Phi\Phi^{\top}$ and $\Omega$ are computed numerically via robust Monte-Carlo simulations exactly as in Section~\ref{sec:odes}. Note that this result generalises the formula from \citet{gerace2020generalisation} for
a single-layer generator which was rigorously proved recently by
\citet{dhifallah2020precise}. Although it is an open problem to prove it
rigorously in the current setting, we verified that it perfectly matches
simulations for different loss functions and for all generative architectures
discussed here. See Fig.~\ref{fig:replicas} in Section~\ref{sec:experiments} for one example. This provides another strong evidence for conjecture~\ref{thm:deep-get} - as it shows that a formula only depending on second order statistics is able to completely capture the asymptotic performance of random features trained on data from a trained generative model. 
\begin{figure}[t!]
  \centering
  \includegraphics[width=.48\textwidth]{{./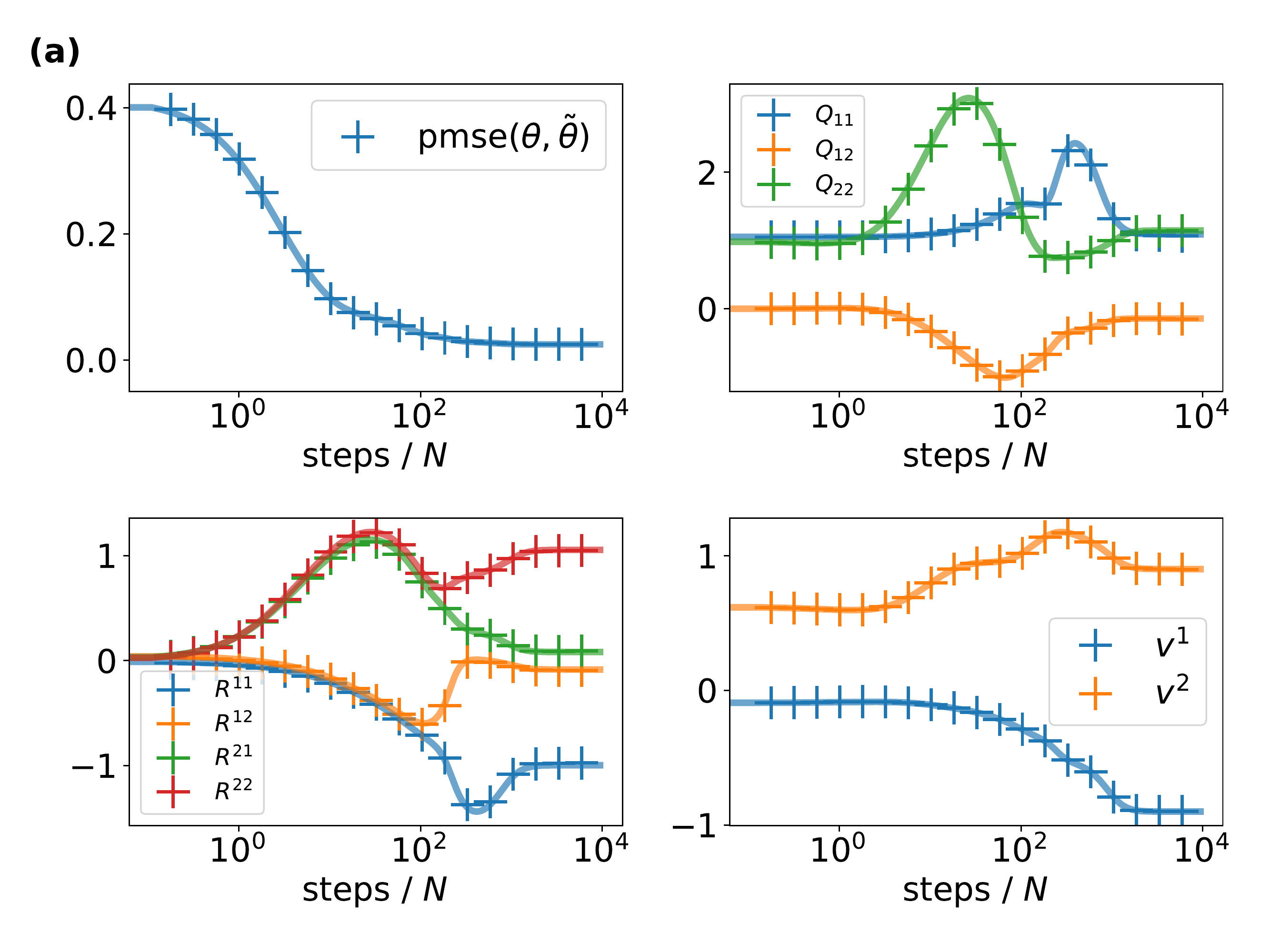}}
  \includegraphics[width=.48\textwidth]{{./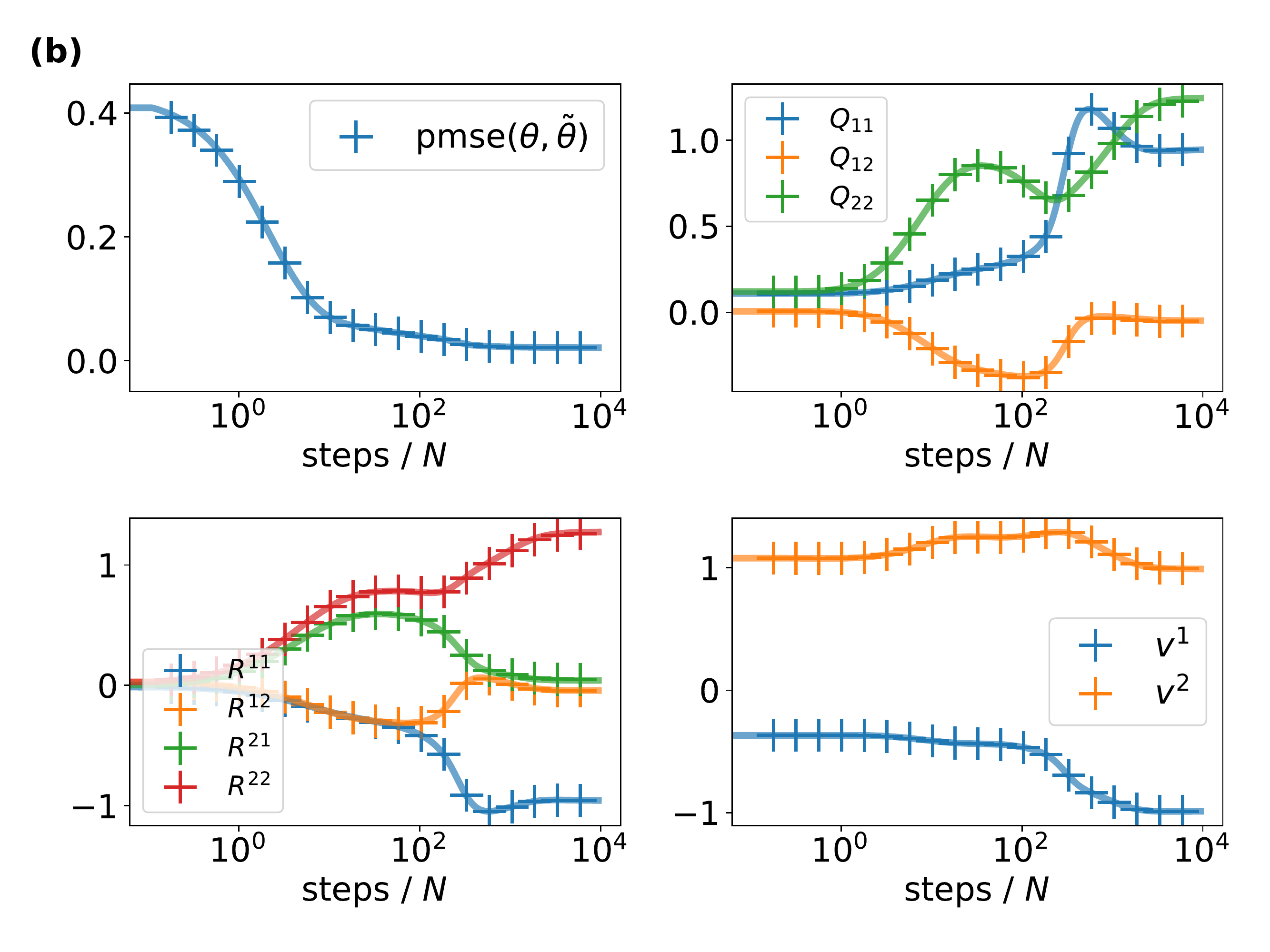}}
  \caption{\label{fig:dcgan_rand} \textbf{Dynamics of two-layer networks: Theory
      vs experiments for random generators.} We compare the evolution of the
    $\pmse$ and the order parameters obtained from integration of
    Eqns.~(\ref{eq:Q_int}-\ref{eq:eom-v}, solid lines) and a single run of SGD
    (crosses). \textbf{(a)} Inputs are generated by a single-layer
    generator~\eqref{eq:proof-generators} with i.i.d.\ weight matrix $A$ and
    sign activation function ($D=800, N=8000$). \textbf{(b)} Inputs were
    generated by the five-layer DCGAN of~\citet{radford2015unsupervised} with
    random weights ($D=100, N=3072$). \emph{In both plots:}
    $M=K=2, \tilde v^m=1, \eta=0.2, g(x) = \tilde g(x) = \erf(x/\sqrt{2})$,
    integration time step $\dd t=0.01$.}
\end{figure}

\section{Experiments}
\label{sec:experiments}

The derivations of both the dynamical equations~(\ref{eq:Q_int}-\ref{eq:eom-v})
for online SGD and the iterative equations~\eqref{eq:saddlepoint} for full-batch
learning with features rely on the deep GEC. While Theorem~\ref{thm:get} gives
verifiable conditions under which the conjecture is true for one-layer
generators, it remains an open problem to establish the deep GEC rigorously. We
thus conducted a set of experiments to compare the predictions for the $\pmse$
made by the theoretical results of Secs~\ref{sec:odes} and~\ref{sec:replicas} to
the test error measured in simulations. For the dynamical equations, this means
comparing the evolution of the $\pmse$ and the order parameters obtained by (i)
integrating Eqns.~(\ref{eq:Q_int}-\ref{eq:eom-v}) and (ii) by evaluating
Eq.~\eqref{eq:order-parameters} explicitly during a \emph{single} run of SGD for
a two-layer student with $K=2$ hidden units. For the full-batch analysis, we
compare the $\pmse$ obtained from iterating Eq.~\eqref{eq:saddlepoint} with the
result obtained by numerically minimising the empirical risk in
Eq.\eqref{eq:erm} with gradient descent for a given sample complexity
$\alpha=\samples/\tilde{N}$. For the dynamical equations, the teacher is taken to be a
two-layer network with $M=2$ hidden units, and for the full-batch learning it is
taken to be a $M=1$ generalised linear model. In both cases, the teacher weights
are drawn i.i.d.\ from the standard normal distribution.

\subsection{Fully-connected and convolutional generators with random weights}

As a first test, we verified that the equations correctly predict the dynamics
of online SGD in a setting where Theorem~\ref{thm:get} applies: a one-layer
generator $\gen(c)$~\eqref{eq:proof-generators} with i.i.d.\ weight matrix $A$
and sign activation function. In a second set of experiments, we drew the inputs
from the deep convolutional GAN (dcGAN) of~\citet{radford2015unsupervised} with
random i.i.d.\ weights. The dcGAN consists of five convolutional layers, each
followed by a Batch Normalisation layer and a ReLU activation function. The
final activation function is $\tanh(x)$ (see Sec.~\ref{sec:furth-exper-results}
for a detailed description). We show an example of the comparison for both
generators in Fig.~\ref{fig:dcgan_rand}, with more runs in
Sec.~\ref{sec:furth-exper-results}. The agreement between equations and
simulations in both experiments is very good.

\subsection{Pre-trained deep convolutional GAN}

\begin{figure}[t!]
  \centering
  \includegraphics[align=c, width=.35\textwidth]{./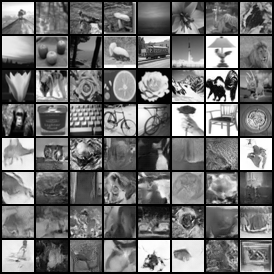}%
  \includegraphics[align=c, width=.55\textwidth]{./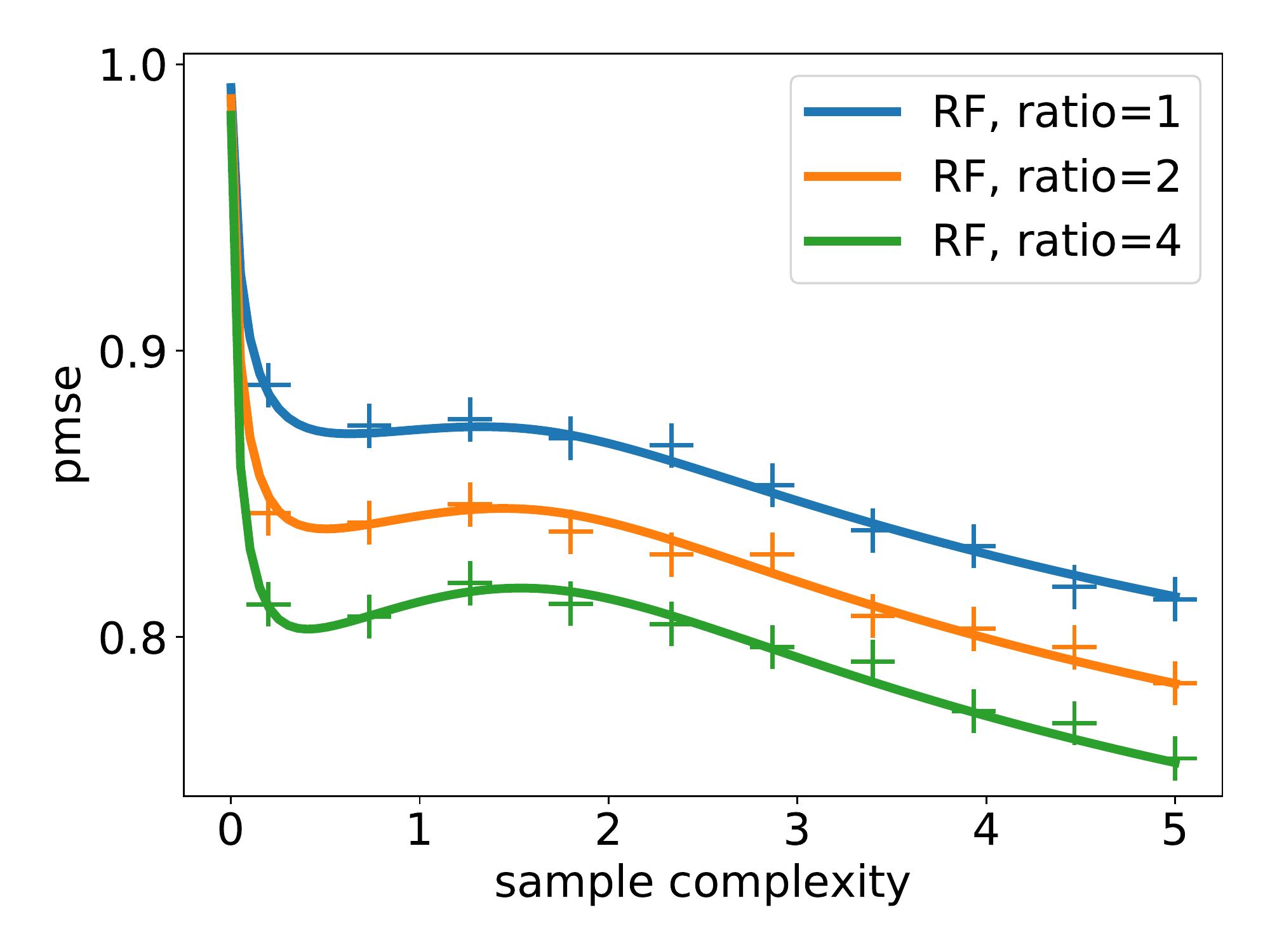}
  \caption{\label{fig:replicas}\textbf{Theory vs experiment for random-features logistic
      regression: } (\emph{Left}) Images drawn from the CIFAR100 dataset in
    grayscale (top four rows) and drawn from the deep convolutional GAN trained
    on CIFAR100 (bottom four rows). (\emph{Right}) Generalisation performance of
    Random Features logistic regression. The random features matrix
    $\mat{F}\in\mathbb{R}^{\tilde{N}\times N}$ was taken to be Gaussian and the
    non-linearity $\sigma = \rm{sign}$. The input data was generated from a
    dcGAN pre-trained on CIFAR100 grayscale data set~\cite{krizhevsky2009learning}  as a function of
    the sample complexity $\alpha = P/\tilde{N}$ and fixed weight decay
    $\lambda = 10^{-2}$. Different curves correspond to different projection
    aspect ratios $\tilde{N}/N$.}
\end{figure}
We also used an instance of a dcGAN that was pre-trained on CIFAR100
dataset~\cite{krizhevsky2009learning} in grayscale, with weights provided
by~\cite{ganweights}. On the left of Fig.~\ref{fig:replicas}, we show 32 samples
of the original dataset (top four rows) and 32 images generated by this network
(bottom four rows). On the level of the replica analysis~\eqref{eq:saddlepoint},
the change of generator weights is reflected in the change of the covariance
matrices $\Omega_{ij}$ and $\Phi_{ir}$~\eqref{eq:covariances}, which need to be
estimated precisely.  In Fig.~\ref{fig:replicas} we compare the $\pmse$ at
different sample complexities predicted by eq.~\eqref{eq:test_error} for
logistic regression with Gaussian features $\mat{F}$ of different sizes with the
result obtained by running gradient descent on the empirical risk. Although we
didn't include the plots for conciseness, we observe the same good agreement for
other tasks and for all the generative models discussed in this section.

\subsection{Normalising flows: the real NVP}
\label{sec:normalising-flows}

\begin{figure}[t!]
  \centering
  \includegraphics[align=c, width=.33\textwidth]{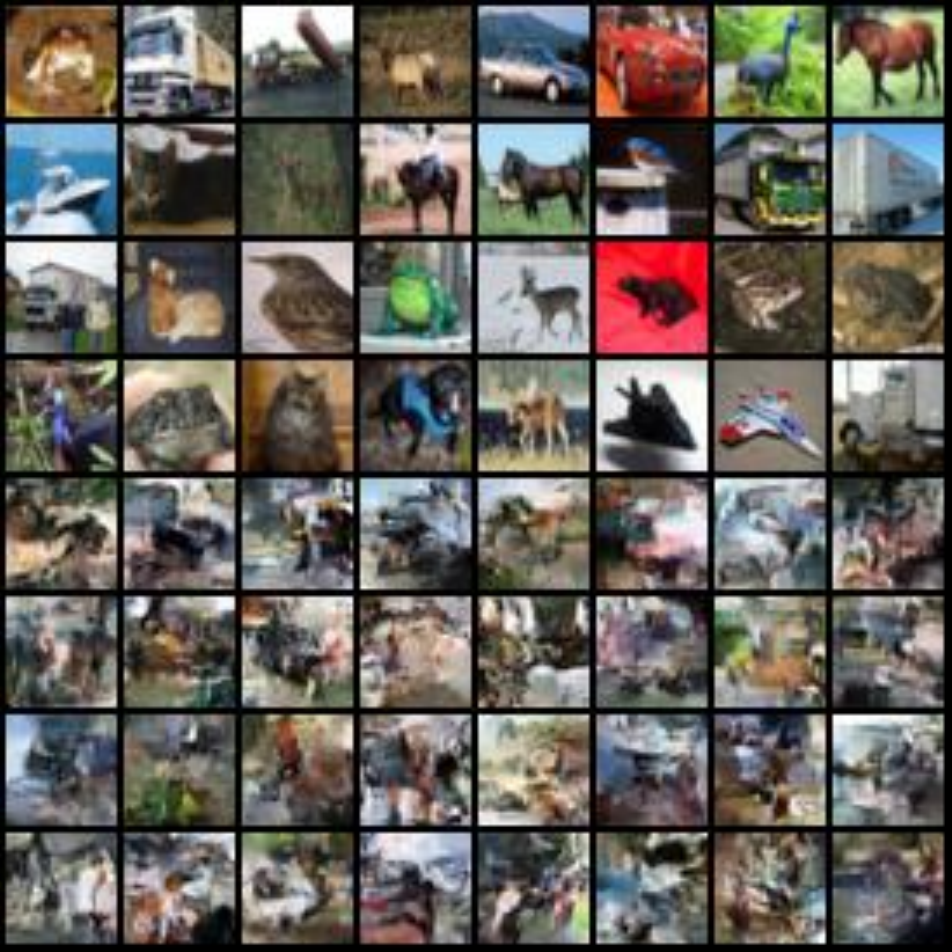}\quad%
  \includegraphics[align=c, width=.6\textwidth]{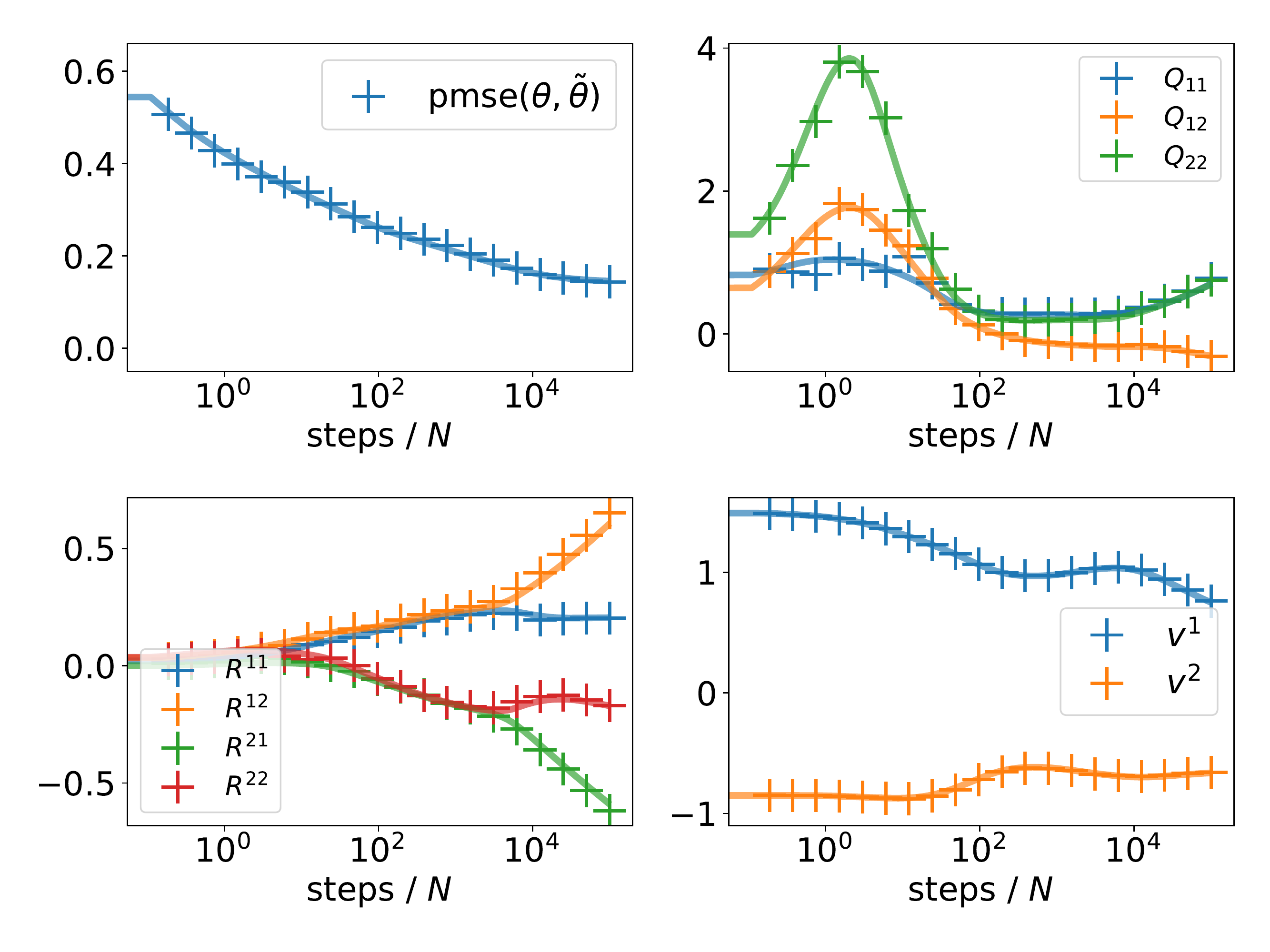}%
  \caption{\label{fig:pretrained}\textbf{Theory vs experiments for online SGD
      with deep, pre-trained realNVP model of~\citet{dinh2017density}}.
    \emph{(Left)} The top four rows show images drawn randomly from the CIFAR10
    data set, the bottom four rows show images drawn randomly from the realNVP
    model trained on CIFAR10. \emph{(Right)} Same plot as
    Fig.~\ref{fig:dcgan_rand} when inputs are drawn from the pre-trained
    realNVP. $D=N=3072$. \emph{In all experiments:}
    $M=K=2, \tilde v^m=1, \eta=0.2, g(x) = \tilde g(x) = \erf(x/\sqrt{2})$,
    integration time step $\dd t=0.01$.}
\end{figure}

We finally tested the validity of the deep GEC with a generative model from the class
of normalising flows~\cite{tabak2010density, tabak2013family,
  rezende2015variational, kobyzev2020normalizing,
  papamakarios2019normalizing}. These models obtain a
given target distribution from a series of bijective transformations of a much
simpler distribution, say the multidimensional normal distribution. Constructing
a probability density in this way has the advantage that the model's output
distribution can be written down exactly, making it possible to minimise the
exact log-likelihood. This should be contrasted with variational
auto-encoders~\cite{kingma2014auto}, where a bound on the log-likelihood is
optimised, or GANs, where the unsupervised problem of density estimation is
transformed into a supervised learning
problem~\cite{goodfellow2014generative}. For the purpose of verifying the GET
via the validity of the dynamical equations, normalising flows have the
desirable property 
that their latent dimension $D$ is equal to the dimension of the output,
i.e.\ for CIFAR10 images, $D=N=3072$, which is close to the regime
$D,N\to\infty$ of our
analysis. 
We trained an instance of the real NVP model of~\citet{dinh2017density} using
the pyTorch port of the original TensorFlow implementation provided
by~\citet{realNVPimplementation}. Using the original
hyper-parameters~\cite{dinh2017density}, we reached an average value of
$\approx 3.5$ bits/dim on the validation set, which agrees with the value of
3.49 bits / dim reported there. Images generated by the trained model are shown
in the bottom four rows of the grid at the bottom of
Fig.~\ref{fig:pretrained}. The comparison between ODEs and simulation (bottom
right of Fig.~\ref{fig:pretrained}) shows very good agreement between the
simulation and the prediction from the ODEs, demonstrating the validity of the
Gaussian Equivalence Property for this instance of a pre-trained generative
model with $\sim 6.3 \cdot 10^6$ trained parameters.

\section*{Acknowledgements}

We thank A.~Maillard and F.~Gerace for valuable discussions. We acknowledge funding from the
ERC under the European Union’s Horizon 2020 Research and Innovation Programme
Grant Agreement 714608-SMiLe, from ``Chaire de recherche sur les modèles et
sciences des données'', Fondation CFM pour la Recherche-ENS, and from the French
National Research Agency grants ANR-17-CE23-0023-01 PAIL and ANR-19-P3IA-0001 PRAIRIE.

\bibliography{nn}

\begin{thebibliography}{96}
\providecommand{\natexlab}[1]{#1}
\providecommand{\url}[1]{\texttt{#1}}
\expandafter\ifx\csname urlstyle\endcsname\relax
  \providecommand{\doi}[1]{doi: #1}\else
  \providecommand{\doi}{doi: \begingroup \urlstyle{rm}\Url}\fi

\bibitem[Advani et~al.(2020)Advani, Saxe, and
  Sompolinsky]{advani2020highdimensional}
M.S. Advani, A.M. Saxe, and H.~Sompolinsky.
\newblock High-dimensional dynamics of generalization error in neural networks.
\newblock \emph{Neural Networks}, 132:\penalty0 428 -- 446, 2020.

\bibitem[Ansuini et~al.(2019)Ansuini, Laio, Macke, and
  Zoccolan]{ansuini2019intrinsic}
A.~Ansuini, A.~Laio, J.H. Macke, and D.~Zoccolan.
\newblock Intrinsic dimension of data representations in deep neural networks.
\newblock In \emph{Advances in Neural Information Processing Systems}, pages
  6109--6119, 2019.

\bibitem[Aubin et~al.(2018)Aubin, Maillard, Barbier, Krzakala, Macris, and
  Zdeborov{\'{a}}]{aubin2018committee}
B.~Aubin, A.~Maillard, J.~Barbier, F.~Krzakala, N.~Macris, and
  L.~Zdeborov{\'{a}}.
\newblock {The committee machine: Computational to statistical gaps in learning
  a two-layers neural network}.
\newblock In \emph{Advances in Neural Information Processing Systems 31}, pages
  3227--3238, 2018.

\bibitem[Aubin et~al.(2019)Aubin, Loureiro, Maillard, Krzakala, and
  Zdeborov\'{a}]{aubin2019spiked}
B.~Aubin, B.~Loureiro, A.~Maillard, F.~Krzakala, and L.~Zdeborov\'{a}.
\newblock The spiked matrix model with generative priors.
\newblock In \emph{Advances in Neural Information Processing Systems 32}, pages
  8366--8377. 2019.

\bibitem[Bahri et~al.(2020)Bahri, Kadmon, Pennington, Schoenholz,
  Sohl-Dickstein, and Ganguli]{bahri2020statistical}
Y.~Bahri, J.~Kadmon, J.~Pennington, S.S. Schoenholz, J.~Sohl-Dickstein, and
  S.~Ganguli.
\newblock {Statistical Mechanics of Deep Learning}.
\newblock \emph{Annual Review of Condensed Matter Physics}, 11\penalty0
  (1):\penalty0 501--528, 2020.

\bibitem[Baity-Jesi et~al.(2018)Baity-Jesi, Sagun, Geiger, Spigler, Arous,
  Cammarota, LeCun, Wyart, and Biroli]{Baity-Jesi2018}
M.~Baity-Jesi, L.~Sagun, M.~Geiger, S.~Spigler, G.B. Arous, C.~Cammarota,
  Y.~LeCun, M.~Wyart, and G.~Biroli.
\newblock {Comparing Dynamics: Deep Neural Networks versus Glassy Systems}.
\newblock In \emph{Proceedings of the 35th International Conference on Machine
  Learning}, 2018.

\bibitem[Baldi and Hornik(1989)]{baldi1989neural}
P.~Baldi and K.~Hornik.
\newblock Neural networks and principal component analysis: Learning from
  examples without local minima.
\newblock \emph{Neural networks}, 2\penalty0 (1):\penalty0 53--58, 1989.

\bibitem[Benaych-Georges and Nadakuditi(2012)]{benaych2012singular}
F.~Benaych-Georges and R.R. Nadakuditi.
\newblock The singular values and vectors of low rank perturbations of large
  rectangular random matrices.
\newblock \emph{Journal of Multivariate Analysis}, 111:\penalty0 120--135,
  2012.

\bibitem[Biehl and Schwarze(1995)]{Biehl1995}
M.~Biehl and H.~Schwarze.
\newblock {Learning by on-line gradient descent}.
\newblock \emph{J. Phys. A. Math. Gen.}, 28\penalty0 (3):\penalty0 643--656,
  1995.

\bibitem[Bobkov(2003)]{bobkov:2003}
S.~G. Bobkov.
\newblock On concentration of distributions of random weighted sums.
\newblock \emph{The Annals of Probability}, 31\penalty0 (1):\penalty0 195--215,
  2003.

\bibitem[Borra et~al.(2019)Borra, Lagomarsino, Rotondo, and
  Gherardi]{borra2019generalization}
F.~Borra, M.C. Lagomarsino, P.~Rotondo, and M.~Gherardi.
\newblock Generalization from correlated sets of patterns in the perceptron.
\newblock \emph{Journal of Physics A: Mathematical and Theoretical},
  52\penalty0 (38):\penalty0 384004, 2019.

\bibitem[Brock et~al.(2019)Brock, Donahue, and Simonyan]{brock2018large}
A.~Brock, J.~Donahue, and K.~Simonyan.
\newblock Large scale {GAN} training for high fidelity natural image synthesis.
\newblock In \emph{International Conference on Learning Representations}, 2019.

\bibitem[Bruna and Mallat(2013)]{bruna2013invariant}
J.~Bruna and S.~Mallat.
\newblock Invariant scattering convolution networks.
\newblock \emph{IEEE transactions on pattern analysis and machine
  intelligence}, 35\penalty0 (8):\penalty0 1872--1886, 2013.

\bibitem[Cheng and Singer(2013)]{cheng2013spectrum}
X.~Cheng and A.~Singer.
\newblock The spectrum of random inner-product kernel matrices.
\newblock \emph{Random Matrices: Theory and Applications}, 2\penalty0
  (04):\penalty0 1350010, 2013.

\bibitem[Chizat and Bach(2018)]{Chizat2018}
L.~Chizat and F.~Bach.
\newblock On the global convergence of gradient descent for over-parameterized
  models using optimal transport.
\newblock In \emph{Advances in Neural Information Processing Systems 31}, pages
  3040--3050, 2018.

\bibitem[Chizat et~al.(2019)Chizat, Oyallon, and Bach]{chizat2019lazy}
L.~Chizat, E.~Oyallon, and F.~Bach.
\newblock On lazy training in differentiable programming.
\newblock In \emph{Advances in Neural Information Processing Systems}, pages
  2937--2947, 2019.

\bibitem[Chung et~al.(2018{\natexlab{a}})Chung, Cohen, Sompolinsky, and
  Lee]{chung2018learning}
SY~Chung, U.~Cohen, H.~Sompolinsky, and D.D. Lee.
\newblock Learning data manifolds with a cutting plane method.
\newblock \emph{Neural computation}, 30\penalty0 (10):\penalty0 2593--2615,
  2018{\natexlab{a}}.

\bibitem[Chung et~al.(2018{\natexlab{b}})Chung, Lee, and
  Sompolinsky]{Chung2018}
SY~Chung, Daniel~D. Lee, and H.~Sompolinsky.
\newblock {Classification and Geometry of General Perceptual Manifolds}.
\newblock \emph{Physical Review X}, 8\penalty0 (3):\penalty0 31003,
  2018{\natexlab{b}}.

\bibitem[Cohen et~al.(2020)Cohen, Chung, Lee, and
  Sompolinsky]{cohen2020separability}
U.~Cohen, SY~Chung, D.D. Lee, and H.~Sompolinsky.
\newblock Separability and geometry of object manifolds in deep neural
  networks.
\newblock \emph{Nature communications}, 11\penalty0 (1):\penalty0 1--13, 2020.

\bibitem[Cover(1965)]{Cover1965}
T.M. Cover.
\newblock {Geometrical and Statistical Properties of Systems of Linear
  Inequalities with Applications in Pattern Recognition}.
\newblock \emph{IEEE Transactions on Electronic Computers}, EC-14\penalty0
  (3):\penalty0 326--334, 1965.

\bibitem[de~G.~Matthews et~al.(2018)de~G.~Matthews, Hron, Rowland, Turner, and
  Ghahramani]{matthews2018gaussian}
A.~G. de~G.~Matthews, J.~Hron, M.~Rowland, R.E. Turner, and Z.~Ghahramani.
\newblock Gaussian process behaviour in wide deep neural networks.
\newblock In \emph{International Conference on Learning Representations}, 2018.

\bibitem[Dhifallah and Lu(2020)]{dhifallah2020precise}
O.~Dhifallah and Y.~M. Lu.
\newblock A precise performance analysis of learning with random features.
\newblock \emph{arXiv:2008.11904}, 2020.

\bibitem[Diaconis and Freedman(1984)]{diaconis:1984}
P.~Diaconis and D.~Freedman.
\newblock Asymptotics of graphical projection pursuit.
\newblock \emph{The Annals of Statistics}, 12\penalty0 (3):\penalty0 793--815,
  1984.

\bibitem[Dinh et~al.(2017)Dinh, Sohl-Dickstein, and Bengio]{dinh2017density}
L.~Dinh, J.~Sohl-Dickstein, and S.~Bengio.
\newblock {Density estimation using real NVP}.
\newblock In \emph{5th International Conference on Learning Representations
  2017}, 2017.

\bibitem[Du et~al.(2018)Du, Lee, Tian, Singh, and Poczos]{du2018gradient}
S.~Du, J.~Lee, Y.~Tian, A.~Singh, and B.~Poczos.
\newblock Gradient descent learns one-hidden-layer {CNN}: Don’t be afraid of
  spurious local minima.
\newblock In \emph{Proceedings of the 35th International Conference on Machine
  Learning}, volume~80, pages 1339--1348, 2018.

\bibitem[Engel and {Van den Broeck}(2001)]{Engel2001}
A.~Engel and C.~{Van den Broeck}.
\newblock \emph{{Statistical Mechanics of Learning}}.
\newblock Cambridge University Press, 2001.

\bibitem[Fan and Montanari(2019)]{fan2019spectral}
Z.~Fan and A.~Montanari.
\newblock The spectral norm of random inner-product kernel matrices.
\newblock \emph{Probability Theory and Related Fields}, 173\penalty0
  (1-2):\penalty0 27--85, 2019.

\bibitem[Fukushima and Miyake(1982)]{fukushima1982neocognitron}
K.~Fukushima and S.~Miyake.
\newblock Neocognitron: A new algorithm for pattern recognition tolerant of
  deformations and shifts in position.
\newblock \emph{Pattern recognition}, 15\penalty0 (6):\penalty0 455--469, 1982.

\bibitem[Gabri{\'{e}}(2020)]{gabrie2020meanfield}
M.~Gabri{\'{e}}.
\newblock Mean-field inference methods for neural networks.
\newblock \emph{Journal of Physics A: Mathematical and Theoretical},
  53\penalty0 (22):\penalty0 223002, 2020.

\bibitem[Gabri{\'{e}} et~al.(2018)Gabri{\'{e}}, Manoel, Luneau, Barbier,
  Macris, Krzakala, and Zdeborov{\'{a}}]{gabrie2018entropy}
M.~Gabri{\'{e}}, A.~Manoel, C.~Luneau, J.~Barbier, N.~Macris, F.~Krzakala, and
  L.~Zdeborov{\'{a}}.
\newblock {Entropy and mutual information in models of deep neural networks}.
\newblock In \emph{Advances in Neural Information Processing Systems 31}, pages
  1826--1836, 2018.

\bibitem[Gardner and Derrida(1989)]{Gardner1989}
E.~Gardner and B.~Derrida.
\newblock {Three unfinished works on the optimal storage capacity of networks}.
\newblock \emph{Journal of Physics A: Mathematical and General}, 22\penalty0
  (12):\penalty0 1983--1994, 1989.

\bibitem[Gerace et~al.(2020)Gerace, Loureiro, Krzakala, M{\'e}zard, and
  Zdeborov{\'a}]{gerace2020generalisation}
F.~Gerace, B.~Loureiro, F.~Krzakala, M.~M{\'e}zard, and L.~Zdeborov{\'a}.
\newblock Generalisation error in learning with random features and the hidden
  manifold model.
\newblock In \emph{37th International Conference on Machine Learning (ICML)},
  2020.

\bibitem[Ghorbani et~al.(2019)Ghorbani, Mei, Misiakiewicz, and
  Montanari]{ghorbani2019limitations}
B.~Ghorbani, S.~Mei, T.~Misiakiewicz, and A.~Montanari.
\newblock Limitations of lazy training of two-layers neural network.
\newblock In \emph{Advances in Neural Information Processing Systems 32}, pages
  9111--9121. 2019.

\bibitem[Goldt et~al.(2019)Goldt, Advani, Saxe, Krzakala, and
  Zdeborov{\'a}]{goldt2019dynamics}
S.~Goldt, M.S. Advani, A.M. Saxe, F.~Krzakala, and L.~Zdeborov{\'a}.
\newblock Dynamics of stochastic gradient descent for two-layer neural networks
  in the teacher-student setup.
\newblock In \emph{Advances in Neural Information Processing Systems 32}, 2019.

\bibitem[Goldt et~al.(2020)Goldt, M{\'e}zard, Krzakala, and
  Zdeborov{\'a}]{goldt2019modelling}
S.~Goldt, M.~M{\'e}zard, F.~Krzakala, and L.~Zdeborov{\'a}.
\newblock Modeling the influence of data structure on learning in neural
  networks: The hidden manifold model.
\newblock \emph{Phys. Rev. X}, 10\penalty0 (4):\penalty0 041044, 2020.

\bibitem[Goodfellow et~al.(2014)Goodfellow, Pouget-Abadie, Mirza, Xu,
  Warde-Farley, Ozair, Courville, and Bengio]{goodfellow2014generative}
I.~Goodfellow, J.~Pouget-Abadie, M.~Mirza, B.~Xu, D.~Warde-Farley, S.~Ozair,
  A.~Courville, and Y.~Bengio.
\newblock Generative adversarial nets.
\newblock In \emph{Advances in neural information processing systems}, pages
  2672--2680, 2014.

\bibitem[Hachem et~al.(2007)Hachem, Loubaton, and
  Najim]{hachem2007deterministic}
W.~Hachem, P.~Loubaton, and J.~Najim.
\newblock Deterministic equivalents for certain functionals of large random
  matrices.
\newblock \emph{Ann. Appl. Probab.}, 17\penalty0 (3):\penalty0 875--930, 2007.

\bibitem[Hall and Li(1993)]{hall:1993}
P.~Hall and K.-C. Li.
\newblock On almost linearity of low dimensional projections from high
  dimensional data.
\newblock \emph{The Annals of Statistics}, 21\penalty0 (2):\penalty0 867--889,
  1993.

\bibitem[Hand et~al.(2018)Hand, Leong, and Voroninski]{hand2018phase}
P.~Hand, O.~Leong, and V.~Voroninski.
\newblock Phase retrieval under a generative prior.
\newblock In \emph{Advances in Neural Information Processing Systems}, pages
  9136--9146, 2018.

\bibitem[Horn and Johnson(2012)]{horn2012matrix}
R.A. Horn and C.R. Johnson.
\newblock \emph{Matrix analysis}.
\newblock Cambridge university press, 2012.

\bibitem[Hu and Lu(2020)]{hu2020universality}
H.~Hu and Y.M. Lu.
\newblock Universality laws for high-dimensional learning with random features.
\newblock \emph{arXiv:2009.07669}, 2020.

\bibitem[Ioffe and Szegedy(2015)]{ioffe2015batch}
S.~Ioffe and C.~Szegedy.
\newblock Batch normalization: Accelerating deep network training by reducing
  internal covariate shift.
\newblock In \emph{Proceedings of the 32nd International Conference on Machine
  Learning}, volume~37, pages 448--456, 2015.

\bibitem[Jacot et~al.(2018)Jacot, Gabriel, and Hongler]{Jacot2018}
A.~Jacot, F.~Gabriel, and C.~Hongler.
\newblock Neural tangent kernel: Convergence and generalization in neural
  networks.
\newblock In \emph{Advances in Neural Information Processing Systems 31}, pages
  8571--8580, 2018.

\bibitem[Kingma and Welling(2014)]{kingma2014auto}
D.P. Kingma and M.~Welling.
\newblock Auto-encoding variational bayes.
\newblock In \emph{ICLR}, 2014.

\bibitem[{Kobyzev} et~al.(2020){Kobyzev}, {Prince}, and
  {Brubaker}]{kobyzev2020normalizing}
I.~{Kobyzev}, S.~{Prince}, and M.~{Brubaker}.
\newblock Normalizing flows: An introduction and review of current methods.
\newblock \emph{IEEE Transactions on Pattern Analysis and Machine
  Intelligence}, 2020.

\bibitem[Koehler and Risteski(2019)]{koehler2019comparative}
F.~Koehler and A.~Risteski.
\newblock The comparative power of re{LU} networks and polynomial kernels in
  the presence of sparse latent structure.
\newblock In \emph{International Conference on Learning Representations
  (ICLR)}, 2019.

\bibitem[Kolouri et~al.(2019)Kolouri, Nadjahi, Simsekli, Badeau, and
  Rohde]{kolouri2019generalized}
S.~Kolouri, K.~Nadjahi, U.~Simsekli, R.~Badeau, and G.~Rohde.
\newblock Generalized sliced wasserstein distances.
\newblock In \emph{Advances in Neural Information Processing Systems 32}, pages
  261--272. 2019.

\bibitem[Krizhevsky et~al.(2009)Krizhevsky, Hinton,
  et~al.]{krizhevsky2009learning}
A.~Krizhevsky, G.~Hinton, et~al.
\newblock Learning multiple layers of features from tiny images.
\newblock 2009.
\newblock URL
  \url{https://www.cs.toronto.edu/~kriz/learning-features-2009-TR.pdf}.

\bibitem[Krogh and Hertz(1992)]{krogh1992generalization}
A.~Krogh and J.A. Hertz.
\newblock Generalization in a linear perceptron in the presence of noise.
\newblock \emph{Journal of Physics A: Mathematical and General}, 25\penalty0
  (5):\penalty0 1135, 1992.

\bibitem[Le~Cun et~al.(1991)Le~Cun, Kanter, and Solla]{le1991eigenvalues}
Y.~Le~Cun, I.~Kanter, and S.A. Solla.
\newblock Eigenvalues of covariance matrices: Application to neural-network
  learning.
\newblock \emph{Physical Review Letters}, 66\penalty0 (18):\penalty0 2396,
  1991.

\bibitem[LeCun et~al.(1990)LeCun, Boser, Denker, Henderson, Howard, Hubbard,
  and Jackel]{lecun1990handwritten}
Y.~LeCun, B.E. Boser, J.S. Denker, D.~Henderson, R.E. Howard, W.E. Hubbard, and
  L.D. Jackel.
\newblock Handwritten digit recognition with a back-propagation network.
\newblock In \emph{Advances in neural information processing systems}, pages
  396--404, 1990.

\bibitem[Lee et~al.(2018)Lee, Sohl-Dickstein, Pennington, Novak, Schoenholz,
  and Bahri]{lee2018deep}
J.~Lee, J.~Sohl-Dickstein, J.~Pennington, R.~Novak, S.~Schoenholz, and
  Y.~Bahri.
\newblock Deep neural networks as gaussian processes.
\newblock In \emph{International Conference on Learning Representations}, 2018.

\bibitem[Louart et~al.(2018)Louart, Liao, and Couillet]{louart2018random}
C.~Louart, Z.~Liao, and Romain Couillet.
\newblock A random matrix approach to neural networks.
\newblock \emph{The Annals of Applied Probability}, 28\penalty0 (2):\penalty0
  1190--1248, 2018.

\bibitem[Meckes(2010)]{meckes:2010aa}
E.~Meckes.
\newblock Approximation of projections of random vectors.
\newblock \emph{Journal of Theoretical Probability}, 25\penalty0 (2):\penalty0
  333--352, 2010.

\bibitem[Mei and Montanari(2019)]{mei2019generalization}
S.~Mei and A.~Montanari.
\newblock The generalization error of random features regression: Precise
  asymptotics and double descent curve.
\newblock arXiv:1908.05355, 2019.

\bibitem[Mei et~al.(2018)Mei, Montanari, and Nguyen]{Mei2018}
S.~Mei, A.~Montanari, and P.~Nguyen.
\newblock {A mean field view of the landscape of two-layer neural networks}.
\newblock \emph{Proceedings of the National Academy of Sciences}, 115\penalty0
  (33):\penalty0 E7665--E7671, 2018.

\bibitem[M\'ezard(2017)]{mezard2017mean}
M.~M\'ezard.
\newblock Mean-field message-passing equations in the hopfield model and its
  generalizations.
\newblock \emph{Physical Review E}, 95\penalty0 (2):\penalty0 022117, 2017.

\bibitem[Mirza and Osindero(2014)]{mirza2014conditional}
M.~Mirza and S.~Osindero.
\newblock Conditional generative adversarial nets.
\newblock arXiv:1411.1784, 2014.

\bibitem[Mohri et~al.(2012)Mohri, Rostamizadeh, and Talwalkar]{Mohri2012}
M.~Mohri, A.~Rostamizadeh, and A.~Talwalkar.
\newblock \emph{{Foundations of Machine Learning}}.
\newblock MIT Press, 2012.

\bibitem[Montanari et~al.(2019)Montanari, Ruan, Sohn, and
  Yan]{montanari2019generalization}
A.~Montanari, F.~Ruan, Y.~Sohn, and J.~Yan.
\newblock The generalization error of max-margin linear classifiers:
  High-dimensional asymptotics in the overparametrized regime.
\newblock arXiv:1911.01544, 2019.

\bibitem[Mossel(2016)]{mossel2016deep}
E.~Mossel.
\newblock Deep learning and hierarchical generative models.
\newblock arXiv:1612.09057, 2016.

\bibitem[Mu()]{realNVPimplementation}
Fangzhou Mu.
\newblock {Port of the original TensorFlow implementation of realNVP to
  pyTorch}.
\newblock \url{https://github.com/fmu2/realNVP}.

\bibitem[Neal(1995)]{neal1995bayesian}
R.M. Neal.
\newblock \emph{{Bayesian Learning for Neural Networks}}.
\newblock PhD thesis, University of Toronto, 1995.

\bibitem[O'Donnell(2014)]{odonnell2014analysis}
R.~O'Donnell.
\newblock \emph{{Analysis of Boolean Functions}}.
\newblock Cambridge University Press, 2014.

\bibitem[Papamakarios et~al.(2019)Papamakarios, Nalisnick, Rezende, Mohamed,
  and Lakshminarayanan]{papamakarios2019normalizing}
G.~Papamakarios, E.~Nalisnick, D.J. Rezende, S.~Mohamed, and
  B.~Lakshminarayanan.
\newblock {Normalizing Flows for Probabilistic Modeling and Inference}.
\newblock 2019.

\bibitem[Paszke et~al.(2019)Paszke, Gross, Massa, Lerer, Bradbury, Chanan,
  Killeen, Lin, Gimelshein, Antiga, Desmaison, Kopf, Yang, DeVito, Raison,
  Tejani, Chilamkurthy, Steiner, Fang, Bai, and Chintala]{paszke2019pytorch}
A.~Paszke, S.~Gross, F.~Massa, A.~Lerer, J.~Bradbury, G.~Chanan, T.~Killeen,
  Z.~Lin, N.~Gimelshein, L.~Antiga, A.~Desmaison, A.~Kopf, E.~Yang, Z.~DeVito,
  M.~Raison, A.~Tejani, S.~Chilamkurthy, B.~Steiner, L.~Fang, J.~Bai, and
  S.~Chintala.
\newblock Pytorch: An imperative style, high-performance deep learning library.
\newblock In \emph{Advances in Neural Information Processing Systems 32}, pages
  8024--8035. 2019.

\bibitem[Patel et~al.(2016)Patel, Nguyen, and Baraniuk]{PatelNIPS2016}
A.B. Patel, M.T. Nguyen, and R.~Baraniuk.
\newblock A probabilistic framework for deep learning.
\newblock In D.~D. Lee, M.~Sugiyama, U.~V. Luxburg, I.~Guyon, and R.~Garnett,
  editors, \emph{Advances in Neural Information Processing Systems 29}, pages
  2558--2566. Curran Associates, Inc., 2016.

\bibitem[Pennington and Worah(2017)]{pennington2017nonlinear}
J.~Pennington and P.~Worah.
\newblock Nonlinear random matrix theory for deep learning.
\newblock In \emph{Advances in Neural Information Processing Systems}, pages
  2637--2646, 2017.

\bibitem[Radford et~al.(2016)Radford, Metz, and
  Chintala]{radford2015unsupervised}
A.~Radford, L.~Metz, and S.~Chintala.
\newblock Unsupervised representation learning with deep convolutional
  generative adversarial networks.
\newblock In \emph{ICLR}, 2016.

\bibitem[Rahimi and Recht(2008)]{rahimi2008random}
A.~Rahimi and B.~Recht.
\newblock Random features for large-scale kernel machines.
\newblock In \emph{Advances in neural information processing systems}, pages
  1177--1184, 2008.

\bibitem[Rahimi and Recht(2009)]{rahimi2009weighted}
A.~Rahimi and B.~Recht.
\newblock Weighted sums of random kitchen sinks: Replacing minimization with
  randomization in learning.
\newblock In \emph{Advances in neural information processing systems}, pages
  1313--1320, 2009.

\bibitem[Reeves(2017)]{reeves:2017c}
G.~Reeves.
\newblock Conditional central limit theorems for {G}aussian projections.
\newblock In \emph{IEEE International Symposium on Information Theory}, pages
  3055--3059, June 2017.

\bibitem[Rezende and Mohamed(2015)]{rezende2015variational}
D.~Rezende and S.~Mohamed.
\newblock Variational inference with normalizing flows.
\newblock In Francis Bach and David Blei, editors, \emph{Proceedings of the
  32nd International Conference on Machine Learning}, volume~37 of
  \emph{Proceedings of Machine Learning Research}, pages 1530--1538, Lille,
  France, 2015.

\bibitem[Rotondo et~al.(2020)Rotondo, Lagomarsino, and Gherardi]{Rotondo2019a}
P.~Rotondo, M.~C. Lagomarsino, and M.~Gherardi.
\newblock Counting the learnable functions of geometrically structured data.
\newblock \emph{Phys. Rev. Research}, 2:\penalty0 023169, 2020.

\bibitem[Rotskoff and Vanden-Eijnden(2018)]{Rotskoff2018}
G.M. Rotskoff and E.~Vanden-Eijnden.
\newblock {Parameters as interacting particles: long time convergence and
  asymptotic error scaling of neural networks}.
\newblock In \emph{Advances in Neural Information Processing Systems 31}, pages
  7146--7155, 2018.

\bibitem[Saad(2009)]{saad2009line}
D.~Saad.
\newblock \emph{On-line learning in neural networks}, volume~17.
\newblock Cambridge University Press, 2009.

\bibitem[Saad and Solla(1995{\natexlab{a}})]{Saad1995a}
D.~Saad and S.A. Solla.
\newblock {Exact Solution for On-Line Learning in Multilayer Neural Networks}.
\newblock \emph{Phys. Rev. Lett.}, 74\penalty0 (21):\penalty0 4337--4340,
  1995{\natexlab{a}}.

\bibitem[Saad and Solla(1995{\natexlab{b}})]{Saad1995b}
D.~Saad and S.A. Solla.
\newblock {On-line learning in soft committee machines}.
\newblock \emph{Phys. Rev. E}, 52\penalty0 (4):\penalty0 4225--4243,
  1995{\natexlab{b}}.

\bibitem[Saxe et~al.(2014)Saxe, McClelland, and Ganguli]{saxe2014exact}
A.M. Saxe, J.L. McClelland, and S.~Ganguli.
\newblock {Exact solutions to the nonlinear dynamics of learning in deep linear
  neural networks}.
\newblock In \emph{ICLR}, 2014.

\bibitem[Saxe et~al.(2018)Saxe, Bansal, Dapello, Advani, Kolchinsky, Tracey,
  and Cox]{saxe2018information}
A.M. Saxe, Y.~Bansal, J.~Dapello, M.S. Advani, A.~Kolchinsky, B.D. Tracey, and
  D.D. Cox.
\newblock {On the information bottleneck theory of deep learning}.
\newblock In \emph{ICLR}, 2018.

\bibitem[Seddik et~al.(2019)Seddik, Tamaazousti, and
  Couillet]{seddik2019kernel}
M.E.A. Seddik, M.~Tamaazousti, and R.~Couillet.
\newblock Kernel random matrices of large concentrated data: the example of
  gan-generated images.
\newblock In \emph{ICASSP 2019-2019 IEEE International Conference on Acoustics,
  Speech and Signal Processing (ICASSP)}, pages 7480--7484. IEEE, 2019.

\bibitem[Seddik et~al.(2020)Seddik, Louart, Tamaazousti, and
  Couillet]{seddik2020random}
M.E.A. Seddik, C.~Louart, M.~Tamaazousti, and R.~Couillet.
\newblock Random matrix theory proves that deep learning representations of
  gan-data behave as gaussian mixtures.
\newblock In \emph{37th International Conference on Machine Learning (ICML)},
  2020.

\bibitem[Seung et~al.(1992)Seung, Sompolinsky, and Tishby]{Seung1992}
H.~S. Seung, H.~Sompolinsky, and N.~Tishby.
\newblock {Statistical mechanics of learning from examples}.
\newblock \emph{Physical Review A}, 45\penalty0 (8):\penalty0 6056--6091, 1992.

\bibitem[Singh()]{ganweights}
C.~Singh.
\newblock {Pre-trained dcGAN model}.
\newblock \url{https://github.com/csinva/gan-vae-pretrained-pytorch}.

\bibitem[Sirignano and Spiliopoulos(2019)]{Sirignano2018}
J.~Sirignano and K.~Spiliopoulos.
\newblock {Mean field analysis of neural networks: A central limit theorem}.
\newblock \emph{Stochastic Processes and their Applications}, 2019.

\bibitem[Soltanolkotabi et~al.(2018)Soltanolkotabi, Javanmard, and
  Lee]{soltanolkotabi2018theoretical}
M.~Soltanolkotabi, A.~Javanmard, and J.D. Lee.
\newblock Theoretical insights into the optimization landscape of
  over-parameterized shallow neural networks.
\newblock \emph{IEEE Transactions on Information Theory}, 65\penalty0
  (2):\penalty0 742--769, 2018.

\bibitem[Sudakov(1978)]{sudakov:1978}
V.~N. Sudakov.
\newblock Typical distributions of linear functionals in finite-dimensional
  spaces of high dimension.
\newblock \emph{Soviet Math. Doklady}, 16\penalty0 (6):\penalty0 1578--1582,
  1978.

\bibitem[Tabak and Turner(2013)]{tabak2013family}
E.~G Tabak and C.V. Turner.
\newblock A family of nonparametric density estimation algorithms.
\newblock \emph{Communications on Pure and Applied Mathematics}, 66\penalty0
  (2):\penalty0 145--164, 2013.

\bibitem[Tabak et~al.(2010)Tabak, Vanden-Eijnden, et~al.]{tabak2010density}
E.~G Tabak, E.~Vanden-Eijnden, et~al.
\newblock Density estimation by dual ascent of the log-likelihood.
\newblock \emph{Communications in Mathematical Sciences}, 8\penalty0
  (1):\penalty0 217--233, 2010.

\bibitem[Tian(2017)]{tian2017analytical}
Y.~Tian.
\newblock An analytical formula of population gradient for two-layered relu
  network and its applications in convergence and critical point analysis.
\newblock In \emph{Proceedings of the 34th International Conference on Machine
  Learning (ICML)}, page 3404–3413, 2017.

\bibitem[Vapnik(2013)]{Vapnik2013}
V.~Vapnik.
\newblock \emph{{The nature of statistical learning theory}}.
\newblock Springer science {\&} business media, 2013.

\bibitem[Watkin et~al.(1993)Watkin, Rau, and Biehl]{Watkin1993}
T.L.H. Watkin, A.~Rau, and M.~Biehl.
\newblock {The statistical mechanics of learning a rule}.
\newblock \emph{Reviews of Modern Physics}, 65\penalty0 (2):\penalty0 499--556,
  1993.

\bibitem[Yoshida and Okada(2019)]{yoshida2019datadependence}
Y.~Yoshida and M.~Okada.
\newblock Data-dependence of plateau phenomenon in learning with neural network
  --- statistical mechanical analysis.
\newblock In \emph{Advances in Neural Information Processing Systems 32}, pages
  1720--1728, 2019.

\bibitem[Zdeborov{\'{a}}(2020)]{zdeborova2020understanding}
L.~Zdeborov{\'{a}}.
\newblock {Understanding deep learning is also a job for physicists}.
\newblock \emph{Nature Physics}, 2020.

\bibitem[Zdeborov{\'{a}} and Krzakala(2016)]{Zdeborova2016}
L.~Zdeborov{\'{a}} and F.~Krzakala.
\newblock {Statistical physics of inference: thresholds and algorithms}.
\newblock \emph{Adv. Phys.}, 65\penalty0 (5):\penalty0 453--552, 2016.

\bibitem[Zhong et~al.(2017)Zhong, Song, Jain, Bartlett, and
  Dhillon]{zhong2017recovery}
K.~Zhong, Z.~Song, P.~Jain, P.L. Bartlett, and I.S. Dhillon.
\newblock Recovery guarantees for one-hidden-layer neural networks.
\newblock In \emph{Proceedings of the 34th International Conference on Machine
  Learning (ICML)}, pages 4140--4149. JMLR. org, 2017.

\end{thebibliography}

\appendix

\numberwithin{equation}{section}
\clearpage


\section{Proof of the Gaussian Equivalence Theorem}
\label{sec:get-proof}
There are two main steps to the proof. First we provide a one-dimensional GET
(Theorem~\ref{thm:get_1d}), which is stated under a more general setting and
then we show how Theorem~\ref{thm:get} of the main text follows as a special
case.

\subsection{One-dimensional GET}\label{sec:get_1d}
Let $Z = (Z_1, \ldots, Z_d)$ be a vector of standard Gaussian variables and let
$X = (X_1, \dots, X_n)$ be generated according to $X_i = \sigma_i(a_i^\top Z)$,
$i = 1, \dots, n$, where each $\sigma_i \colon \reals \to \reals$ and each $a_i$ is a
unit vector in $\reals^d$. Let $\rho$ be the $n \times n$ positive semi-definite
matrix $\rho_{ij} = a_i^\top a_j$ and let $\tilde{\rho} = \rho - I_n$ be the
matrix obtained by setting the diagonal entries to zero. 

The main result of this section provides a Gaussian approximation for a one-dimensional projection of $X$. We define $I$ to be the subset of $[n] = \{1,\dots, n\}$ such that $\sigma_i$ is not affine. Notice that the variables indexed by the complement of the set, namely $\{ X_i , \: \, i \in [n]\backslash I\}$, are jointly Gaussian by construction. 

\begin{assumption}[Weak Correlation]\label{assump:corr}There exists a constant $C_\rho$ such that 
\begin{align}
\sum_{ i,j \in I } \tilde{\rho}_{ij}^4   \le C_\rho^4.
\end{align}
\end{assumption}

\begin{assumption}[Smoothness]\label{assump:smoothness2} Each $\sigma_i$ is twice differentiable. Furthermore, there exists a constant $C_\sigma$ such that for all $i \in I$,
\begin{align}
 \max \left\{ \bEx[\left( \sigma_i(u)\right)^4]^{1/4} , \left (\ex{ (\sigma'_i(u))^2} \right)^{1/2}  , \ex{(\sigma''_i(u))^2}^{1/2}   \right\} \le C_\sigma,
\end{align}
where $u \sim \normal(0,1)$. 
\end{assumption}

Each $\sigma_i$ can be expressed via its Hermite expansion
\begin{align}
\sigma_i(u) = \sum_{k=0}^\infty \hat{\sigma}_i(k) h_k(u),
\end{align}
where $\hat{\sigma}_i(k)$ is the $k$th Hermite coefficient of $\sigma_i$ and
$h_k$ is the $k$th (normalised) probabilist's Hermite polynomial. Note that if
$\sigma_i$ is affine then $\hat{\sigma}_i(k) = 0$ for $k \ge 2$.

\begin{theorem}\label{thm:get_1d}
Let $P$ be the distribution of $\frac{1}{\sqrt{n}} \sum_{i=1}^n X_i$ and let $\hat{P}$ be the Gaussian distribution with the same mean and variance. Under Assumptions~\ref{assump:corr} and~\ref{assump:smoothness2},
\begin{align}
d(P, \hat{P} ) \le \frac{ C  C_\sigma}{ \sqrt{n}} \left( \delta_1 +\sqrt{ n\, \delta_2} +C_\sigma (C_\rho^2 + C_\rho^3) +
C_\sigma^2 (1 + C_\rho^4)   
\right),
\end{align}
where $C$ is a universal constant,
\begin{subequations}
\label{eq:delta} 
\begin{align}
\delta_1 & =\frac{1}{n}  \sum_{i, j, \ell  \in I}   \tilde{\rho}_{ij} \tilde{\rho}_{i \ell}   \left(  \hat{\sigma}_j(1) \hat{\sigma}_\ell(1)   + 2  \rho_{j\ell} \hat{\sigma}_j(2) \hat{\sigma}_\ell(2)\right) + \frac{1}{n} \sum_{i \in I}   \left( \sum_{j  \in [n]\backslash I} \tilde{\rho}_{ij}  \hat{\sigma}_j(1)  \right)^2\\
\delta_2 & = \frac{1}{n} \sum_{i , j, \ell \in I}    \tilde{\rho}^2_{ij} \tilde{\rho}^2_{i\ell}\left( 2 \hat{\sigma}_j(2) \hat{\sigma}_\ell(2)   + 6  \rho_{j\ell} \hat{\sigma}_j(3) \hat{\sigma}_\ell(3)   \right) 
\end{align}
\end{subequations}
and $I $ is the subset of $\{1, \dots, n\}$ such that $\sigma_i$ is not affine.
\end{theorem}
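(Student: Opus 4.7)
The strategy is Stein's method for Gaussian approximation, combined with the Hermite expansion of each $\sigma_i$ and Gaussian integration by parts. I would fix $f \in \mathcal F$ and introduce the Stein solution $g$ satisfying $\sigma_W^2\, g'(w) - (w-\mu_W)\, g(w) = f(w) - \mathbb{E} f(\hat W)$, where $\mu_W = \mathbb{E} W$ and $\sigma_W^2 = \var(W)$. Standard Stein regularity theory (e.g.\ Nourdin--Peccati) yields $\|g''\|_\infty = O(1)$ and $\|g'''\|_\infty = O(1)$ provided $\sigma_W^2 = \Theta(1)$, which holds under Assumption~\ref{assump:smoothness2}. This reduces the problem to bounding $|\mathbb{E}[\sigma_W^2\, g'(W) - (W-\mu_W) g(W)]|$ uniformly over such $g$.

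The core computation handles $\mathbb{E}[(W-\mu_W) g(W)]$ by expanding each $\sigma_i = \sum_{k \ge 0} \hat\sigma_i(k)\, h_k$ and using the orthogonality identity $\mathbb{E}[h_k(\xi_i) h_\ell(\xi_j)] = \delta_{k\ell}\, \rho_{ij}^k$ for $\xi_i = a_i^\top Z$, together with the multivariate Stein identity $\mathbb{E}[Z_r\, h(Z)] = \mathbb{E}[\partial_r h(Z)]$. Each term $\mathbb{E}[h_k(\xi_i)\, g(W)]$ thereby transforms into $\mathbb{E}[g'(W) \cdot A_{ik}]$, where $A_{ik}$ is a contraction built from the $a_j$'s and derivatives of the $\sigma_j$. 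The diagonal part of these contractions reproduces exactly $\sigma_W^2 = \tfrac{1}{n}\sum_{i,j}\sum_{k \ge 1} \hat\sigma_i(k) \hat\sigma_j(k) \rho_{ij}^k$, cancelling the $\sigma_W^2\, \mathbb{E}[g'(W)]$ term. What remains is off-diagonal error indexed by $i \ne j$ and weighted by $\tilde\rho_{ij}^k$.

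Next I would isolate the dominant error contributions according to the Hermite degree. For $k=1$ and $k=2$, a first-order Taylor expansion of $g'(W)$ around a leave-one-out surrogate, combined with $\|g''\|_\infty = O(1)$, produces terms of the form $\tfrac{1}{n}\sum_{i,j,\ell \in I} \tilde\rho_{ij} \tilde\rho_{i\ell}\, \hat\sigma_j(1)\hat\sigma_\ell(1)$ and $\tfrac{1}{n}\sum_{i,j,\ell \in I} \tilde\rho_{ij} \tilde\rho_{i\ell}\, \rho_{j\ell} \hat\sigma_j(2)\hat\sigma_\ell(2)$, which are precisely the two pieces of $\delta_1$; interactions between non-affine $\sigma_i$ ($i \in I$) and affine $\sigma_j$ ($j \in [n]\setminus I$) produce the second summand of $\delta_1$. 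For $k=3$, a second-order Taylor expansion controlled by $\|g'''\|_\infty = O(1)$ yields contributions proportional to $\tfrac{1}{n}\sum_{i,j,\ell \in I} \tilde\rho_{ij}^2 \tilde\rho_{i\ell}^2\, (\hat\sigma_j(2)\hat\sigma_\ell(2) + \rho_{j\ell}\hat\sigma_j(3)\hat\sigma_\ell(3))$; applying Cauchy--Schwarz to sum these $n$ triple sums converts them into the $\sqrt{n\delta_2}$ term after multiplying by the overall $1/\sqrt n$ prefactor.

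Finally, the Hermite tail $k \ge 4$ and the cubic remainder of the Taylor expansion of $g$ produce the residual constants $C_\sigma(C_\rho^2 + C_\rho^3)$ and $C_\sigma^2(1+C_\rho^4)$. I would control them by Gaussian hypercontractivity: the $L^p$ norm of $\sum_{k \ge 4} \hat\sigma_i(k) h_k(\xi_i)$ is bounded by a constant multiple of $C_\sigma$ under Assumption~\ref{assump:smoothness2}, and pairing this with the weak-correlation bound $\sum_{i,j \in I} \tilde\rho_{ij}^4 \le C_\rho^4$ of Assumption~\ref{assump:corr} (plus $\sum_j \rho_{ij}^2 \le 1+C_\rho^2$, etc.) closes the estimate. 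The main obstacle is the bookkeeping: the Stein cancellation leaves behind a cascade of off-diagonal error terms that must each be assigned to exactly one piece of the bound, and the separation between $\delta_1$ (linear in $\tilde\rho_{ij}$) and $\sqrt{n\delta_2}$ (quadratic in $\tilde\rho_{ij}$) requires tracking carefully which contributions come from a first- versus a second-order Taylor expansion of $g$. A secondary subtlety is ensuring that the Stein regularity constants do not pick up hidden dependence on $\sigma_W^{-2}$ that would spoil the stated rate; this requires verifying $\sigma_W^2 = \Theta(1)$ a priori from Assumptions~\ref{assump:corr} and~\ref{assump:smoothness2} before invoking the Stein equation.
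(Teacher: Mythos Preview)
Your proposal takes a genuinely different route from the paper. The paper does \emph{not} use Stein's method at all: instead it runs a Lindeberg replacement argument against an explicit Gaussian surrogate $\hat X_i = \hat\sigma_i(1)\,U_i + \xi_i$ (with $U_i=a_i^\top Z$ and $\xi_i$ an independent Gaussian chosen so that $\var(\hat X_i)=\var(X_i)$). A preliminary lemma shows that it suffices to compare $P$ to \emph{any} Gaussian with the same mean at the cost of a factor~2, so the paper never needs $\sigma_W^2=\Theta(1)$ or Stein regularity. Each replacement increment $\Delta_i(f)$ is bounded by a third-order Taylor expansion of the test function $f$ itself, and the key device is to write the leave-one-out sum as $S_i=g_i(U_i,V_i)$ with $V_i\perp U_i$, then take $A_i=\mathbb E[S_i\mid V_i]$ and $B_i=\mathbb E[U_iS_i\mid V_i]$ and control $S_i-A_i$ and $S_i-A_i-B_iU_i$ via the Gaussian Poincar\'e inequality (iterated once). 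The quantities $\delta_1$ and $\delta_2$ then drop out of $\mathbb E[(g_i')^2]$ and $\mathbb E[(g_i'')^2]$ respectively, after a final averaging over permutations of the replacement order.

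Your Stein/Malliavin route is a legitimate alternative, and the intuition that $\delta_1$ and $\delta_2$ correspond to low-degree Hermite contributions is correct in spirit. But two steps in your outline are not right as stated. First, the claim that Gaussian integration by parts sends $\mathbb E[h_k(\xi_i)\,g(W)]$ to $\mathbb E[g'(W)\cdot A_{ik}]$ is only true for $k=1$; for $k\ge 2$ iterated IBP produces $g''(W), g'''(W),\dots$ together with products of $\sigma_j'(\xi_j)$'s, so the cancellation with $\sigma_W^2\,\mathbb E[g'(W)]$ is not as clean as you describe, and the bookkeeping that isolates exactly $\delta_1$ (linear in $\tilde\rho_{ij}$) versus $\sqrt{n\delta_2}$ (quadratic in $\tilde\rho_{ij}$) would have to go through the full Malliavin--Stein $\Gamma$-operator machinery rather than a single Taylor step. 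Second, Assumptions~\ref{assump:corr} and~\ref{assump:smoothness2} do \emph{not} imply $\sigma_W^2=\Theta(1)$ (take all $\sigma_i$ nearly constant), so the Stein solution bounds you invoke can blow up; you would need a separate small-variance case. The paper's Lindeberg-plus-Poincar\'e argument sidesteps both issues and is considerably shorter for this particular statement.
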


\subsection{Proof of Theorem~\ref{thm:get}}
Having established the one-dimensional GET, we are now in a position to prove Theorem~\ref{thm:get} of the main text. Let $P$ be the distribution on $ \reals^{K + M}$ defined by the variables
\begin{align*}
\lambda^k  = \frac{1}{\sqrt{N}} \sum_{i=1}^N w_i^k x_i , \quad k  =1, \dots, K, \qquad \nu^m  = \frac{1}{\sqrt{D}} \sum_{r=1}^D \tilde{w}_r^m c_r, \quad m = 1, \dots, M
\end{align*}
where $W = (w_i^k) \in \reals^{K \times N}$ and
$\tilde{W} = (\tilde w_r^m) \in \reals^{M \times D}$ are weight matrices and
$c \sim \normal(0, I_D)$ is a vector of latent Gaussian variables.  Recall that
$x \in \reals^N$ is generated according to $ x_i = \sigma( a_i^\top c)$ where
$\sigma : \reals \to \reals$ is a non-linearity and each $a_i$ is a unit vector
in $\reals^D$.

To bound the maximum-sliced distance between $P$ and a Gaussian approximation it is sufficient to bound the difference with respect to  every one-dimensional projection.  Given any unit vector $\alpha \in \reals^{K+M}$ the variable $S \sim \alpha^\top P$ is given by
\begin{align}
S & = \frac{1}{\sqrt{N}} \sum_{i=1}^N    \sum_{k=1}^K \alpha^k  w_i^k  x_i  + \frac{1}{\sqrt{D}} \sum_{r=1}^D    \sum_{m=1}^M  \alpha^{K+m} \tilde{w}_r^m  c_r.
\end{align}
We will now express this variable using the notation in Section~\ref{sec:get_1d} with problem dimensions given by $d = D$ and $n = N+D$. Define $w=(w_i) \in \reals^N$ and $ \tilde{w} = (\tilde{w}_r) \in \reals^D$ according to 
\begin{align}
w_i = \sum_{k=1}^K \alpha^k  w_i^k, \qquad   \tilde{w}_r = \sum_{m=1}^M  \alpha^{K+m} \tilde{w}_{i-N}^m .
\end{align}
Letting  $Z = (Z_1, \dots, Z_d)$ be a vector of i.i.d.\ standard Gaussian variables, the distribution of $S$ is equal to the distribution  $  \frac{1}{\sqrt{n}} \sum_{i=1}^n X_i$
where
\begin{align}
  X_i  & = \begin{dcases}  \sqrt{\frac{n}{N}}  w_i  \sigma( a_i^\top Z) , &  1\le i \le N\\
    \sqrt{\frac{n}{D}}   \tilde{w}_r  e_{i-N}^\top Z , \quad&  N < i \le N+D 
  \end{dcases}
\end{align}
and $e_r$ denotes the $r$th standard basis vector in $\reals^d$. Furthermore, the
assumptions of Theorem~\ref{thm:get_1d} are satisfied where $I = \{ 1, \dots, N\}$ is the set of indices for which $X_i$ is a
non-affine function of $Z$, the constant 
$C_{\sigma}$ is bounded uniformly by the assumptions on $\sigma$ and the students weights, and  $C_\rho = (\sum_{i\ne j} (a_i^\top a_j)^4)^{1/4}$. Applying Theorem~\ref{thm:get_1d} and retaining the dominant terms with respect to $C_\rho$, one finds that the distance between the projection of  $P$
and the projection of the Gaussian distribution $\hat{P}$ with matched first and second moments satisfies
\begin{align}
d(\alpha^\top P  , \alpha^\top \hat{P}  )  \le  \tilde{C} \left( \frac{ \delta_1}{ \sqrt{N}}   + \sqrt{\delta_2}  + \frac{ \sum_{i\ne j} (a_i^\top a_j)^4  }{\sqrt{N}} + \frac{1}{ \sqrt{N}} \right) ,
\end{align}
where $\tilde{C}$ is a constant that depends on the regularity assumption of $\sigma$ and the maximum magnitude of the students weights and 
\begin{align}
\delta_1 & =\frac{1}{N}   \sum_{i, j, \ell = 1}^N  w_j w_\ell   \tilde{\rho}_{ij} \tilde{\rho}_{i \ell}   \left(  \hat{\sigma}^2(1)    + 2  \rho_{j\ell} \hat{\sigma}^2(2) \right) + \frac{1}{D}   \sum_{i = 1}^N \sum_{ r , r'  =1}^D    a_{i r} a_{i r'}  \tilde{w}_r \tilde{w}_{r'}  \\
\delta_2 & = \frac{1}{N}   \sum_{i ,j,  \ell = 1}^N  w_j w_\ell   \tilde{\rho}^2_{ij} \tilde{\rho}^2_{i\ell}\left( 2 \hat{\sigma}^2(2)   + 6  \rho_{j\ell} \hat{\sigma}^2(3)    \right).
\end{align}
Recalling the definitions of the matrices $M_1$ and $M_2$, it follows that 
\begin{align}
 \frac{ \delta_1}{ \sqrt{N}} &= O\left( \left\| \frac{1}{\sqrt{N} } w^\top M_1^{1/2} \right \|^2    + \frac{1}{\sqrt{N}} \left\| \frac{1}{\sqrt{D}} \tilde{w}^\top A^\top\right  \|^2 \right)  \\
 \sqrt{\delta_2} & = O\left(   \left\| \frac{1}{\sqrt{N} } w^\top M_2^{1/2} \right \| \right).
\end{align}
Finally, recalling the definition of $(w, \tilde{w})$ we see that the 
following bounds holds uniformly with respect to $\alpha$: 
\begin{align}
 \frac{ \delta_1}{ \sqrt{N}} &= O\left( \left\| \frac{1}{\sqrt{N} } W M_1^{1/2} \right \|^2 + \frac{\hat{\sigma}(1)^2}{\sqrt{N}} \left\| \frac{1}{\sqrt{N}} W A \right  \|^2  + \frac{1}{\sqrt{N}} \left\| \frac{1}{\sqrt{D}} \tilde{W}^\top A^\top\right  \|^2 \right)  \\
 \sqrt{\delta_2} & = O\left(   \left\| \frac{1}{\sqrt{N} } W M_2^{1/2} \right \| \right).
\end{align}
This completes the proof of Theorem~\ref{thm:get}.

\subsection{Proof of Theorem~\ref{thm:get_1d}}

\subsubsection{Gaussian comparison}
The following results show that it is sufficient to bound the distance between $P$ and a Gaussian distribution that has the same mean but possibly different variance. 

\begin{lemma}\label{lem:distance_Gaussian}
For any $\sindex \in \reals$ and $v_1, v_2  \ge 0$, 
\begin{align}
d(\normal( \sindex, v_1) , \normal(\sindex, v_2) )  = \frac{1}{2} \left| v_1 - v_2\right|
\end{align}
\end{lemma}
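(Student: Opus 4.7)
The claim reduces immediately to the centered case by translation invariance: $f(\cdot)$ is in $\mathcal{F}$ iff $f(\cdot + \sindex)$ is, so WLOG take $\sindex = 0$. My plan is to prove matching upper and lower bounds.

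\textbf{Lower bound via an explicit test function.} The plan is to exhibit a single $f \in \mathcal{F}$ achieving $|\mathbb{E}_{P}[f] - \mathbb{E}_Q[f]| = \tfrac{1}{2}|v_1 - v_2|$. The natural choice is $f(x) = \tfrac{1}{2}(x - \sindex)^2$, for which $f''(x) \equiv 1$ and $f'''(x) \equiv 0$, so $\|f''\|_\infty = 1 \le 1$ and $\|f'''\|_\infty = 0 \le 1$, placing $f$ in $\mathcal{F}$. Evaluating, $\mathbb{E}_{\normal(\sindex, v_i)}[f] = v_i/2$, which gives the inequality $d(\normal(\sindex, v_1), \normal(\sindex, v_2)) \ge \tfrac{1}{2}|v_1 - v_2|$.

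\textbf{Upper bound via Gaussian interpolation.} Fix $f \in \mathcal{F}$ and let $Z \sim \normal(0,1)$. Define the interpolating variable $X_t = \sindex + \sqrt{t v_1 + (1-t) v_2}\, Z$ for $t \in [0,1]$, so $X_1 \sim \normal(\sindex, v_1)$ and $X_0 \sim \normal(\sindex, v_2)$. Set $\phi(t) = \mathbb{E}[f(X_t)]$. Differentiating under the expectation and then applying Gaussian integration by parts (Stein's identity) to the factor $\mathbb{E}[f'(X_t) Z]$, the $\sqrt{\cdot}$ terms cancel and one obtains the clean identity
\begin{equation*}
\phi'(t) = \tfrac{1}{2}(v_1 - v_2)\,\mathbb{E}[f''(X_t)].
\end{equation*}
Integrating from $0$ to $1$ and using $\|f''\|_\infty \le 1$ yields
\begin{equation*}
|\mathbb{E}_P[f] - \mathbb{E}_Q[f]| = |\phi(1) - \phi(0)| \le \tfrac{1}{2}|v_1 - v_2|\,\|f''\|_\infty \le \tfrac{1}{2}|v_1 - v_2|.
\end{equation*}
Taking the supremum over $f \in \mathcal{F}$ gives the matching upper bound.

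\textbf{Remarks on difficulty.} There is essentially no obstacle: the only slightly non-obvious step is recognising that the interpolation calculus collapses to a derivative depending only on $f''$, which is precisely what matches the bound on $\|f''\|_\infty$ in $\mathcal{F}$ (the $\|f'''\|_\infty$ bound plays no role here, and is a leftover from the way $\mathcal{F}$ is used in the multivariate GET). Notably, although $\mathcal{F}$ contains unbounded functions such as the quadratic, both sides of each display are finite since Gaussians have finite moments of all orders, so the manipulations require no truncation.
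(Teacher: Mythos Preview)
Your proof is correct. The lower bound via $f(x)=\tfrac{1}{2}(x-\sindex)^2$ is exactly what the paper does. For the upper bound, however, the paper takes a different (and slightly shorter) route: assuming $v_1\le v_2$, it couples the two Gaussians by writing $X_2=X_1+\sqrt{v_2-v_1}\,U_2$ with $U_2\sim\normal(0,1)$ independent of $X_1$, applies a second-order Taylor bound so that the linear term $\sqrt{v_2-v_1}\,U_2 f'(X_1)$ vanishes in expectation by independence, and the quadratic remainder is bounded by $\tfrac{1}{2}(v_2-v_1)\|f''\|_\infty$. Your variance-interpolation argument with Stein's identity yields the same heat-equation derivative $\phi'(t)=\tfrac{1}{2}(v_1-v_2)\,\mathbb{E}[f''(X_t)]$ and hence the same bound; it is the standard ``smart path'' technique and generalises more readily (e.g.\ to multivariate or semigroup settings), whereas the paper's coupling is a one-line Taylor argument. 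Both approaches use only the constraint $\|f''\|_\infty\le 1$, as you correctly note.
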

\begin{proof}
Without loss of generality assume $v_1 \le v_2$. Letting $U_1, U_2$ be independent standard Gaussian variables we have $X_1 = \sindex + \sqrt{v_1}  U_1 \sim \normal(\sindex, v_1)$ and $X_2 =X_1+ \sqrt{ v_2  - v_1} U_2 \sim \normal(\sindex, v_2)$. For each $f \in \cF$, a second order Taylor series expansion gives
\begin{align}
 f(X_2)  - f(X_1)  \le    \sqrt{ v_2- v_1} U_2  f'(X_1) + \frac{1}{2} (v_2 - v_1) U^2_2 \|f''\|_\infty.
 \end{align}
The first term has zero mean, because $U_2$ is independent of $X_1$. By assumption $\|f''\|_\infty \le 1$ and thus $ \left| \ex{ f(X_2) }  - \ex{  f(X_1) } \right|  \le  \frac{1}{2}  |v_2 - v_1|$ for all $f \in \cF$. To see that this upper bound is tight, note that the inequality is attained for the choice  $f(x) =\frac{1}{2} (x-\sindex)^2$.
\end{proof}

\begin{lemma}\label{lem:d_bound}
Let $P$ be a distribution on $\reals$ with mean $\sindex$ and variance $v$. For  all $\tilde{v} \ge 0$,
\begin{align}
d(P, \normal(\sindex, v)) \le 2 d(P, \normal(\sindex, \tilde{v})) .
\end{align}
\end{lemma}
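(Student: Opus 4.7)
The plan is to combine the triangle inequality for the metric $d$ with the previous lemma and a single well-chosen test function. Since $d(P,Q) = \sup_{f \in \mathcal F} |\mathbb{E}_P[f] - \mathbb{E}_Q[f]|$ is defined as a supremum of absolute differences, it inherits the triangle inequality from the absolute value, giving
\begin{align*}
d(P, \normal(\mu, v)) \;\le\; d(P, \normal(\mu, \tilde v)) + d(\normal(\mu, \tilde v), \normal(\mu, v)).
\end{align*}
By Lemma~\ref{lem:distance_Gaussian}, the second term equals $\tfrac{1}{2}|v - \tilde v|$, so it remains to show $|v - \tilde v| \le 2\, d(P, \normal(\mu, \tilde v))$.

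For this lower bound on $d(P, \normal(\mu, \tilde v))$ I would use the specific test function $f(x) = \tfrac{1}{2}(x - \mu)^2$, which satisfies $f''(x) \equiv 1$ and $f'''(x) \equiv 0$ and therefore belongs to the class $\mathcal F$. Since $P$ has mean $\mu$ and variance $v$ by assumption, $\mathbb{E}_P[f] = v/2$, while $\mathbb{E}_{\normal(\mu, \tilde v)}[f] = \tilde v / 2$. Taking this particular $f$ inside the supremum defining $d$ therefore yields
\begin{align*}
d(P, \normal(\mu, \tilde v)) \;\ge\; \bigl| \mathbb{E}_P[f] - \mathbb{E}_{\normal(\mu, \tilde v)}[f] \bigr| \;=\; \tfrac{1}{2}|v - \tilde v|.
\end{align*}

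Plugging this back into the triangle inequality gives
\begin{align*}
d(P, \normal(\mu, v)) \;\le\; d(P, \normal(\mu, \tilde v)) + \tfrac{1}{2}|v - \tilde v| \;\le\; 2\, d(P, \normal(\mu, \tilde v)),
\end{align*}
which is the claimed bound. There is no real obstacle here: the argument is essentially two lines once one notices that the quadratic $\tfrac{1}{2}(x-\mu)^2$ is admissible in the class $\mathcal F$ (because its higher derivatives vanish) and that it exactly extracts the variance, so that any distance to a Gaussian with mismatched variance automatically controls the variance discrepancy.
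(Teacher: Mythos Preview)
Your proof is correct and essentially identical to the paper's: both apply the triangle inequality, invoke Lemma~\ref{lem:distance_Gaussian} for the Gaussian-to-Gaussian term, and use the test function $f(x)=\tfrac{1}{2}(x-\mu)^2\in\mathcal F$ to lower-bound $d(P,\normal(\mu,\tilde v))$ by $\tfrac{1}{2}|v-\tilde v|$.
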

\begin{proof}
By the triangle inequality,
\begin{align}
d(P, \normal(\sindex, v)) \le  d(P, \normal(\sindex, \tilde{v})) +d(\normal(\sindex, v), \normal(\sindex, \tilde{v})). 
\end{align}
Noting that the function $f(x) = \frac{1}{2} (x-\sindex)^2$ belongs to $\cF$ the first term satisfies $
d(P, \normal(\sindex, \tilde{v})) \ge \frac{1}{2} | v - \tilde{v}  |$. 
By Lemma~\ref{lem:distance_Gaussian}, the second term satisfies $   d(\normal(\sindex, v), \normal(\sindex, \tilde{v})) = \frac{1}{2}|v - \tilde{v}|$. Combining these inequalities gives the stated result. 
\end{proof}


\subsubsection{Replacement method}

We assume with without loss of generality that each $X_i$ has zero mean and thus $\hat{\sigma}_i(0) = 0$.  For the purposes of comparison, we define the Gaussian variables 
\begin{align}
U_i = a_i^\top Z ,  \qquad  \hat{X}_i  =  \hat{\sigma}_i(1) U_i +  \xi_i, 
\end{align}
where $\xi_1, \dots, \xi_n$ are independent Gaussian variables with mean zero and variance $\var(\xi_i) = \var(X_i) - \hat{\sigma}^2_i(1)$ chosen such that $X_i$ and $\hat{X}_i$ have the same second moment. Notice that each $U_i$ has mean zero, unit variance, and $\cov(U_i, U_j) =\rho_{ij}$. Moreover, 
since $\frac{1}{\sqrt{n}} \sum_{i=1}^n \hat{X}_i$ is a Gaussian variable with the same mean as $\frac{1}{\sqrt{n}} \sum_{i=1}^n X_i$ it follows from  Lemma~\ref{lem:d_bound} that $d(P, \hat{P} ) \le 2 \sup_{f \in \cF} \Delta(f)$ where 
\begin{align}
\Delta(f) =  \ex{ f\left( \frac{1}{\sqrt{n}} \sum_{i=1}^n X_i\right )  -  f\left( \frac{1}{\sqrt{n}} \sum_{i=1}^n \hat{X}_i\right )}.
\end{align}

We use the replacement method to bound the term  $ \Delta(f)$. For $i = 1, \dots, n$ define the hybrid random variable  
 \begin{align}
S_{i} = \frac{1}{\sqrt{n}} \sum_{ j =1 }^{i-1} X_j  + \frac{1}{\sqrt{n}}  \sum_{j = i+1}^n  \hat{X}_j,
\end{align}
which excludes the contribution of the $i$th term. Then, we obtain the telescoping sum:
\begin{align} 
\Delta(f)  & = \sum_{i=1}^n \Delta_i(f)  , \quad \text{where} \quad \Delta_i(f)   =  \ex{ f\left(S_i +  \tfrac{1}{\sqrt{n}} X_i \right) - f\left(S_i +  \tfrac{1}{\sqrt{n}}  \hat{X}_i  \right)}.  \label{eq:Delta_decomp} 
\end{align}

The next result provides a useful bound on $\Delta_i(f)$ in terms of auxiliary random variables. 


\begin{lemma}\label{lem:Delta_i} 
Let $(A_i, B_i)$ be a pair of random variables that is independent of $(U_i, \xi_i)$. Then, 
\begin{align}
\Delta_i(f) \le \frac{ C K_i  }{ \sqrt{n}}  \left(    \ex{  B_i^2 }   
 + \ex{  ( S_i  - A_i  )^2 }   + \sqrt{  \ex{  (S_i  - A_i   - B_i  U_i  )^2}  }   + \frac{K_i^2 }{ n}    \right) 
\end{align}
where $C$ is a universal constant and $K_i = (\ex{ X_i^4})^{1/4}$. 
\end{lemma}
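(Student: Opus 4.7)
The plan is to combine a third-order Taylor expansion of $f$ with three matched-moment orthogonality identities. I would first write
\[
f\!\left(S_i + \tfrac{t}{\sqrt n}\right) = f(S_i) + \tfrac{t}{\sqrt n}\,f'(S_i) + \tfrac{t^2}{2n}\,f''(S_i) + R(t), \qquad |R(t)| \le \tfrac{|t|^3}{6 n^{3/2}},
\]
apply it at $t = X_i$ and $t = \hat X_i$, and subtract. Because $\hat X_i$ is Gaussian with $\var(\hat X_i) = \var(X_i) \le K_i^2$, we have $\ex{|\hat X_i|^3}\le C K_i^3$, so the cubic remainder contributes $\order{K_i^3/n^{3/2}} = \tfrac{K_i}{\sqrt n}\cdot \tfrac{K_i^2}{n}$, which is the last term in the bound. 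What remains are
\[
(\mathrm I) = \tfrac{1}{\sqrt n}\,\ex{(X_i - \hat X_i)\,f'(S_i)}, \qquad (\mathrm{II}) = \tfrac{1}{2n}\,\ex{(X_i^2 - \hat X_i^2)\,f''(S_i)}.
\]

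The workhorse of the argument is that the construction $\hat X_i = \hat\sigma_i(1) U_i + \xi_i$, with $\var(\xi_i)$ chosen to match $\var(X_i)$, yields
\[
\ex{X_i - \hat X_i} = 0, \qquad \ex{(X_i - \hat X_i)\,U_i} = 0, \qquad \ex{X_i^2 - \hat X_i^2} = 0,
\]
i.e.\ $\hat X_i$ matches the mean of $X_i$, its linear Hermite coefficient in $U_i$, and its second moment. These will annihilate the leading terms once I substitute the proxies $A_i + B_i U_i$ and $A_i$ for $S_i$.

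For (I), I would expand $f'(S_i) = f'(A_i + B_i U_i) + (S_i - A_i - B_i U_i)\,f''(\eta)$. The second piece, by Cauchy--Schwarz, $\|f''\|_\infty \le 1$ and $\ex{(X_i - \hat X_i)^2}\le 4K_i^2$, contributes $\tfrac{2 K_i}{\sqrt n}\sqrt{\ex{(S_i - A_i - B_i U_i)^2}}$. For the first piece, condition on $(A_i, B_i)$, which by hypothesis is independent of $(U_i, \xi_i)$, and inside that conditional expectation Taylor expand $f'(a + b U_i) = f'(a) + b U_i f''(a) + \tfrac{b^2 U_i^2}{2}\,f'''(\zeta)$. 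The first two orthogonality identities kill the $f'(a)$ and $b U_i f''(a)$ terms; the quadratic remainder is $O(b^2 K_i)$ via Cauchy--Schwarz and $\ex{U_i^4} = 3$. Averaging over $(A_i, B_i)$ yields the $\tfrac{C K_i}{\sqrt n}\,\ex{B_i^2}$ contribution. For (II), expand $f''(S_i) = f''(A_i) + (S_i - A_i)\,f'''(\zeta)$: the $f''(A_i)$ piece vanishes after conditioning on $A_i$ by the third orthogonality identity, and the remainder is bounded by $\tfrac{C K_i^2}{n}\sqrt{\ex{(S_i - A_i)^2}}$ using $\ex{(X_i^2 - \hat X_i^2)^2}\le C K_i^4$. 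Young's inequality $\tfrac{K_i}{\sqrt n}\sqrt{\ex{(S_i - A_i)^2}} \le \tfrac12\bigl(\tfrac{K_i^2}{n} + \ex{(S_i - A_i)^2}\bigr)$ then converts this to $\tfrac{C K_i}{\sqrt n}\bigl(\ex{(S_i - A_i)^2} + K_i^2/n\bigr)$, and collecting all terms gives the lemma.

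The main obstacle is the bookkeeping around independence: $(A_i, B_i)$ is independent of $(U_i, \xi_i)$ but not of everything else, so the orthogonality identities must be used strictly \emph{inside} the conditional expectation over $(U_i, \xi_i)$ given $(A_i, B_i)$. This forces a two-step Taylor expansion in the correct order (outer around the proxy $A_i + B_i U_i$, then inner in $U_i$ around $U_i = 0$) so that the independence is in place before Hermite-type cancellations are invoked. Apart from that, the only non-obvious manoeuvre is the Young's inequality that upgrades $\sqrt{\ex{(S_i - A_i)^2}}$ to a linear dependence on $\ex{(S_i - A_i)^2}$ at the price of an extra $K_i^2/n$ absorbed into the last term of the bound; everything else reduces to Cauchy--Schwarz and standard moment bounds on $X_i$ and on the Gaussian $\hat X_i$.
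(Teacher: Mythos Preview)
Your proposal is correct and is essentially the same argument as the paper's proof. The paper front-loads all the Taylor inequalities (for $f$, $f'$, and $f''$) into a single combined bound before substituting $(s,a,b,u,x,y)\mapsto(S_i,A_i,B_i,U_i,\tfrac{1}{\sqrt n}X_i,\tfrac{1}{\sqrt n}\hat X_i)$ and factorising via independence, whereas you handle (I) and (II) separately with a two-step expansion; but the three orthogonality identities, the Cauchy--Schwarz steps, the moment bounds on $X_i$ and $\hat X_i$, and the final Young-inequality trade-off are identical.
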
 
\begin{proof}

For any real numbers $s,x,y$, a third order Taylor series expansion of $f$ about $s$ yields
\begin{align}
\left| f( s  + x  )  -  f( s  +y  )  -    (x- y) f'( s  )    -  (x^2 - y^2)  f''( s)  \right|  \le  \frac{1}{6} (|x|^3 + |y|^3) \|f'''\|_\infty.
\end{align} 
Furthermore, for any real numbers $a,b,u$, we can write
\begin{align}
\left| f'(s) -  f'(a  + b u) \right| &\le  |s - a - b u| \|f''\|_\infty\\
 \left| f'(s) - f'(a) - b u f''(a) \right|  & \le  |s - a - b u| \|f''\|_\infty+ \frac{1}{2} (bu)^2 \|f'''\|_\infty \\
\left| f''(s) - f''(a) \right| & \le  |s - a| \|f'''\|_\infty.
\end{align}

Combining the above displays with the assumption $\|f''\|_\infty, \|f'''\|_\infty \le 1$ yields
\begin{align}
f( s  + x  )  -  f( s  +y  ) & \le   (x- y)  \left[ f'( a   )   + bu   f''(a) \right]   + (x^2 - y^2)  f''( a  )  +  | x- y  |   \, | s -a  -  b u  |  \notag \\
&\quad  + \frac{1}{2}  |x - y |   \, (bu)^2  +  |x^2 - y^2| \, | s - a |   + \frac{1}{6}  \left(  |x|^3 + |y)|^3   \right).
\end{align}
Evaluating this inequality with $(a,b,s,u,x,y)$ replaced by $(A_i, B_i, S_i, U_i, \frac{1}{\sqrt{n}} X_i, \frac{1}{\sqrt{n}} \hat{X}_i)$ and then taking the expectation of both sides leads to
\begin{align}
\Delta_i(f) 
 & \le    \frac{1}{\sqrt{n}} \ex{ X_i - \hat{X}_i } \ex{f'( A_i   ) }  +    \frac{1}{\sqrt{n}} \ex{   (X_i - \hat{X}_i ) U_i  }  \ex{ B_i    f''(A_i)  } \notag \\
 &\quad + \frac{1}{n}  \ex{ (X_i^2 - \hat{X}_i ^2) } \ex{  f''( A_i   ) } + \frac{1}{\sqrt{n}}  \ex{  | X_i - \hat{X}_i  |   \, | S_i A_i   - B_i  U_i  | }   \notag \\
& \quad + \frac{1}{2 \sqrt{n}} \ex{ | X_i - \hat{X}_i  | U_i^2 } \, \ex{  B_i^2 }    + \frac{1}{n} \ex{  |X_i^2 - \hat{X}_i^2| \, | S_i  - A_i  | }   \notag \\
& \quad  + \frac{1}{6 n^{3/2} }  \left( \ex{  |X_i|^3}  + \ex{ |\hat{X}_i|^3}    \right).
\end{align}
Here,  we have used the independence between $(A_i, B_i)$  and  $(U_i, \xi_i)$ to factorise the expectations. By the construction of $\hat{X}_i$ the first three terms on the right-hand side are zero. Using the Cauchy-Schwarz inequality and the Jensen's inequality, the upper bound can be simplified as follows: 
\begin{align}
\Delta_i(f) 
 & \le   \frac{1}{\sqrt{n}}  \sqrt{  \ex{  ( X_i - \hat{X}_i )^2}    \ex{  (S_i A_i   - B_i  U_i  )^2}  }   + \frac{1}{2 \sqrt{n}} \sqrt{ \ex{(  X_i - \hat{X}_i )^2} \ex{  U_i^4 } }  \, \ex{  B_i^2 }    \notag \\
& \quad + \frac{1}{n}  \sqrt{ \ex{  (X_i^2 - \hat{X}_i^2)^2 } \ex{  ( S_i  - A_i  )^2 } }   + \frac{1}{6 n^{3/2} }  \left( \ex{  |X_i|^3}  + \ex{ |\hat{X}_i|^3}    \right). 
\end{align}
From the construction of $\hat{X}_i$ it is straightforward to verify that 
\begin{align*} 
\ex{ (X_i - \hat{X}_i)^2} \le  C_1 K_i^2, \quad \ex{ (X^2_i - \hat{X}^2_i)^2} \le  C_2 K_i^4 , \quad \left( \ex{  |X_i|^3}  + \ex{ |\hat{X}_i|^3}    \right) \le C_3 K_i^3
\end{align*}
for universal constants $C_1, C_2, C_3$, and thus
\begin{align}
\Delta_i(f) \le \frac{ C K_i  }{ \sqrt{n}}  \left(    \ex{  B_i^2 }   +  
\frac{K_i  }{\sqrt{n} } \sqrt{ \ex{  ( S_i  - A_i  )^2 } }   + \sqrt{  \ex{  (S_i  - A_i   - B_i  U_i  )^2}  }   + \frac{K_i^2 }{ n}    \right).
\end{align}
Finally, by  the basic inequality $x y \le  \frac{1}{2} ( x^2 +  y^2)$  we have
\begin{align}
\frac{K_i }{\sqrt{n} } \sqrt{ \ex{  ( S_i  - A_i  )^2 } }  &  \le  \frac{ K_i^2}{ 2 n} +     \frac{1}{2} \ex{  ( S_i  - A_i  )^2 } ,
\end{align}
and combining the last two displays gives the stated bound. 
\end{proof}

\subsubsection{Decomposition argument} 
In view of Lemma~\ref{lem:Delta_i}, the next question is how to specify the variables $(A_i, B_i)$. We use a decomposition argument that leverages the Gaussianity of $U$.   Let $i$ be fixed and for each $j\ne i$ define the Gaussian variables $\tilde{U}_j  = U_j - \rho_{ij} U_i$. Note that $U_i$ and $\tilde{U}_j$ are uncorrelated and thus independent. Further define $V_i = (\tilde{U}_{1}, \dots, \tilde{U}_{i-1}, \tilde{U}_{i+1}, \dots, \tilde{U}_n, \xi_1, \dots, \xi_{i-1}, \xi_{i+1} , \dots, \xi_n)$. Then, we can write $S_i = g_i(U_i ,  V_i) $
where
\begin{align}
g_i(U_i ,  V_i) & =  \frac{1}{\sqrt{n}} \sum_{ j =1 }^{i-1} \sigma_j( \rho_{ij} U_i   +\tilde{U}_j)  + \frac{1}{\sqrt{n}}  \sum_{j = i+1}^n \left( \hat{\sigma}_j(1) ( \rho_{ij} U_i  + \tilde{U}_j)   + \xi_j \right) .
\end{align} 
Since $V_i$ is independent of $(U_i, \xi_i)$ we can define $(A_i, B_i)$ as a function of $V_i$. Specially, we define the variables to be the first and second Hermite coefficients of the mapping  $u \mapsto g_i(u  , V_i)$:
\begin{align}
A_i & = \ex{ S_i \mid V_i} = \hat{g}_i(0 ; V_i)  , \qquad B_i  = \ex{ U_i S_i \mid V_i}  = \hat{g}_i(1 ; V_i).
\end{align}
By Gaussian integration by parts, we can also write  $B_i =  \ex{ g'_i(U_i, V_i)  \mid V_i}$ 
where $g'_i(u,v)$ denotes the partial derivative with respect to the first argument. In conjunction with Jensen's inequality, we obtain the following upper bound: 
\begin{align}
\ex{ B_i^2}  & = \ex{  \ex{ g'_i(U_i, V_i)  \mid V_i}^2} \le  \ex{  (g'_i(U_i, V_i) )^2}. \label{eq:B2_bound}
\end{align}

\begin{lemma}\label{lem:Poincare}
Let $U \sim \normal(0,1)$ and let $g\colon \reals \to \reals$ be a twice differentiable  with $\ex{ g^2(U)} < \infty$. Then, 
\begin{align}
\ex{ (g(U) - \hat{g}(0)  )^2} & \le \ex{ (g'(U))^2} \\
\ex{ (g(U) - \hat{g}(0) - \hat{g}(1) U )^2} & \le \ex{ (g''(U))^2} .
\end{align}
\end{lemma}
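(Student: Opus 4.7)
The plan is to expand $g$ in the orthonormal basis of normalised probabilist's Hermite polynomials $\{h_k\}_{k\ge 0}$ in $L^2(\gamma)$, where $\gamma$ denotes the standard Gaussian measure on $\reals$, and to read off both inequalities from Parseval together with the derivative identity $h_k' = \sqrt{k}\, h_{k-1}$. This is the classical route to the Gaussian Poincar\'e inequality and its higher-order analogue, and it fits naturally with the rest of the paper since all Hermite manipulations needed here already appear in Section~\ref{sec:get_1d}.

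Concretely, since $\ex{g^2(U)} < \infty$ one has the $L^2(\gamma)$-convergent expansion $g(u) = \sum_{k\ge 0} \hat g(k) h_k(u)$ with $\hat g(k) = \ex{g(U) h_k(U)}$. Noting that $h_0 \equiv 1$ and $h_1(u) = u$, the coefficients $\hat g(0)$ and $\hat g(1)$ match those in the statement. Parseval's identity then immediately yields
\begin{align*}
\ex{(g(U) - \hat g(0))^2} &= \sum_{k\ge 1} \hat g(k)^2,\\
\ex{(g(U) - \hat g(0) - \hat g(1) U)^2} &= \sum_{k\ge 2} \hat g(k)^2.
\end{align*}

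Next I would use Gaussian integration by parts (equivalently, the identity $\ex{g(U) h_k(U)} = \tfrac{1}{\sqrt{k}}\ex{g'(U) h_{k-1}(U)}$ which follows from $h_k'=\sqrt{k}\, h_{k-1}$ and $\phi'(u) = -u\phi(u)$ for the standard Gaussian density $\phi$) to read off the Hermite coefficients of $g'$ and $g''$, giving
\begin{align*}
\ex{(g'(U))^2} &= \sum_{k\ge 1} k\, \hat g(k)^2, & \ex{(g''(U))^2} &= \sum_{k\ge 2} k(k-1)\, \hat g(k)^2.
\end{align*}
Both inequalities then follow by a trivial term-by-term comparison, since $k \ge 1$ for $k \ge 1$ and $k(k-1) \ge 2 \ge 1$ for $k \ge 2$.

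The main obstacle is a purely bookkeeping one: to invoke the Hermite expansion of $g'$ (resp.\ $g''$) one needs $g' \in L^2(\gamma)$ (resp.\ $g'' \in L^2(\gamma)$). This is handled by noting that whenever the right-hand side of either inequality is infinite the statement is vacuous, so it suffices to treat the case in which $\ex{(g'(U))^2}$ (resp.\ $\ex{(g''(U))^2}$) is finite. Under this assumption the integration-by-parts identity above is valid, the boundary terms at infinity vanish because $h_k(u)\phi(u)$ decays faster than any polynomial, and the formal Hermite series of $g'$ and $g''$ converge in $L^2(\gamma)$ to their intended limits, making the computation above fully rigorous.
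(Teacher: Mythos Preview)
Your proposal is correct and follows essentially the same route as the paper: the paper invokes the Gaussian Poincar\'e inequality for the first bound and, for the second, applies the Plancherel/Parseval formula together with the Hermite shift relation $\widehat{g''}(k) = \sqrt{(k+1)(k+2)}\,\hat g(k+2)$, which is exactly your term-by-term comparison in different notation. Your treatment is in fact slightly more careful, since you explicitly address the case where the right-hand side is infinite and the integrability of $g',g''$ needed for the integration-by-parts identity.
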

\begin{proof}
The first inequality is the  Gaussian Poincar\'e inequality. For the second inequality we use the Plancherel formula~\cite[Proposition 11.36]{odonnell2014analysis} to write
\begin{align}
\ex{ (g(U) - \hat{g}(0) - \hat{g}(1) U )^2} & = \sum_{k=2}^\infty \hat{g}(k)^2 
\le  \frac{1}{\sqrt{2}}  \sum_{k=0}^\infty \widehat{g''}(k)^2  = \frac{1}{\sqrt{2}} \ex{ ( g''(Z) )^2 } 
\end{align} 
where the third step follows from the  relation 
 $\widehat{g''}(k) = \sqrt{k + 1} \sqrt{ k + 2}  \hat{g}(k+2)$ for non-negative inters $k$.
\end{proof}

Using Lemma~\ref{lem:Poincare}, we obtain
\begin{align}
 \ex{  ( S_i  - A_i  )^2 }    & \le  \ex{ ( g'(U_i, V_i))^2 } , \quad 
  \ex{  ( S_i  - A_i - B_i U_i   )^2 }     \le  \ex{ ( g''(U_i, V_i))^2 } . \label{eq:S_moment_bound}
\end{align} 
Combining  Lemma~\ref{lem:Delta_i} with and~\eqref{eq:B2_bound}  and~\eqref{eq:S_moment_bound} yields
\begin{align}
\Delta_i(f) \le  \frac{C K_i}{\sqrt{n}} \left(  \ex{ ( g'_i(U_i, V_i))^2 } +   \sqrt{ \ex{ ( g_i''(U_i, V_i))^2 } }   + \frac{K_i^2}{n}  \right). \label{eq:Delta_bound_c}
\end{align}

\begin{lemma}\label{lem:gi_bound}
 Under Assumptions~\ref{assump:corr} and~\ref{assump:smoothness2},
\begin{align}
\ex{  (g_i'(U_i, V_i))^2}&  \le \left(  \frac{1}{\sqrt{n}}  \sum_{j \in [n]}   \tilde{\rho}_{ij} \hat{\sigma}_j(1) \right)^2 +\frac{2}{n}  \sum_{j_1, j_2  \in [i]  }  \tilde{\rho}_{ij_1} \tilde{\rho}_{ij_2}\tilde{\rho}_{j_1j_2}   \hat{\sigma}_{j_1}(2) \hat{\sigma}_{j_2}(2) \notag \\& \quad   + \frac{C^2_\sigma (1 + C^2_\rho) }{n} \sum_{j   \in I  } \tilde{\rho}_{ij}^2 \\
\ex{  (g''(U_i, V_i))^2}&  \le \left(  \frac{\sqrt{2}}{\sqrt{n}}  \sum_{j \in [i]}   \tilde{\rho}^2_{ij} \hat{\sigma}_j(2) \right)^2 + \frac{6}{n}  \sum_{j_1, j_2  \in [i]  }  \tilde{\rho}^2_{ij_1} \tilde{\rho}^2_{ij_2} \tilde{\rho}_{j_1j_2} \widehat{\sigma}_{j_1}(3) \widehat{\sigma}_{j_2}(3)    \notag \\ &\quad  + \frac{C_\sigma^2 (1 + C_\rho^2)  }{n} \sum_{j   \in I  } \tilde{\rho}_{ij}^4 
\end{align} 
\end{lemma}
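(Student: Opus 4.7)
The plan is to differentiate $g_i(u, V_i)$ explicitly in $u$, expand the squared expectations as Hermite series via Mehler's formula, keep the two lowest Hermite orders as the explicit terms of the bound, and absorb the rest into the final remainder using Assumptions~\ref{assump:smoothness2} and~\ref{assump:corr}. Since the summands of $g_i(u, V_i)$ with $j > i$ are affine in $u$,
\begin{align*}
g_i'(u, V_i) &= \frac{1}{\sqrt{n}} \sum_{j < i} \rho_{ij}\, \sigma_j'(\rho_{ij} u + \tilde U_j) + \frac{1}{\sqrt{n}} \sum_{j > i} \rho_{ij}\, \hat\sigma_j(1), \\
g_i''(u, V_i) &= \frac{1}{\sqrt n} \sum_{j < i,\, j \in I} \rho_{ij}^2\, \sigma_j''(\rho_{ij} u + \tilde U_j),
\end{align*}
where the restriction $j \in I$ in $g_i''$ uses $\sigma_j'' \equiv 0$ for affine indices. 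Writing $U_j = \rho_{ij} U_i + \tilde U_j$, each $U_j$ is a standard Gaussian and $(U_{j_1}, U_{j_2})$ has correlation $\rho_{j_1 j_2}$.

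Squaring and taking expectation yields double sums in $(j_1, j_2)$. Mehler's formula combined with the Hermite identities $\widehat{\sigma'}(k) = \sqrt{k+1}\,\hat\sigma(k+1)$ and $\widehat{\sigma''}(k) = \sqrt{(k+1)(k+2)}\,\hat\sigma(k+2)$ gives
\begin{align*}
\bEx[\sigma_{j_1}'(U_{j_1})\sigma_{j_2}'(U_{j_2})] &= \sum_{k \ge 0} \rho_{j_1 j_2}^k (k+1)\, \hat\sigma_{j_1}(k+1) \hat\sigma_{j_2}(k+1), \\
\bEx[\sigma_{j_1}''(U_{j_1})\sigma_{j_2}''(U_{j_2})] &= \sum_{k \ge 0} \rho_{j_1 j_2}^k (k+1)(k+2)\, \hat\sigma_{j_1}(k+2) \hat\sigma_{j_2}(k+2).
\end{align*}
I then extract the $k = 0, 1$ contributions and split each into a diagonal piece ($j_1 = j_2$, for which $\rho = 1$ and $\tilde\rho = 0$) and an off-diagonal piece (where $\rho = \tilde\rho$). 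The off-diagonal $k = 0$ contribution to $g_i'$, combined with the $j > i$ constants, assembles into $\bigl(\tfrac{1}{\sqrt n} \sum_{j \in [n]} \tilde\rho_{ij} \hat\sigma_j(1)\bigr)^2$, and the off-diagonal $k = 1$ contribution supplies the $\tilde\rho_{j_1 j_2}$ term. The analogous extraction for $g_i''$ yields both explicit terms, the $k = 0$ piece collapsing into the perfect square $\bigl(\tfrac{\sqrt 2}{\sqrt n} \sum_j \tilde\rho_{ij}^2 \hat\sigma_j(2)\bigr)^2$.

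It remains to bound the diagonal parts of the $k \le 1$ terms together with the full $k \ge 2$ tail. Parseval applied to Assumption~\ref{assump:smoothness2} gives $\hat\sigma_j(k)^2 \le C_\sigma^2$ for $k = 2, 3$, so the diagonal pieces contribute at most $\tfrac{C_\sigma^2}{n} \sum_j \tilde\rho_{ij}^2$ (respectively $\sum_j \tilde\rho_{ij}^4$ for $g_i''$). For the tail, $\|a_j\| = 1$ forces $|\rho_{j_1 j_2}| \le 1$, so $|\rho_{j_1 j_2}|^k \le \tilde\rho_{j_1 j_2}^2$ when $j_1 \ne j_2$ and $k \ge 2$; Cauchy--Schwarz in $k$ combined with Parseval then bounds the inner Hermite tail uniformly by $C_\sigma^2 \tilde\rho_{j_1 j_2}^2$. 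The resulting off-diagonal remainder reads $\tfrac{C_\sigma^2}{n} v^\top M v$ with $v_j = |\tilde\rho_{ij}|$ (resp.\ $\tilde\rho_{ij}^2$ for $g_i''$) and $M_{jk} = \tilde\rho_{jk}^2$ restricted to $j, k \in I$, $j \ne k$; since $\|M\|_{\mathrm{op}} \le \|M\|_F \le (\sum_{j, k \in I} \tilde\rho_{jk}^4)^{1/2} \le C_\rho^2$ by Assumption~\ref{assump:corr}, this is at most $\tfrac{C_\sigma^2 C_\rho^2}{n} \|v\|^2$, which combines with the diagonal to give the stated $(1 + C_\rho^2)$ prefactor. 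The principal technical obstacle is precisely this last operator-norm step: reducing $\|M\|_{\mathrm{op}}$ to $\|M\|_F$ is what makes Assumption~\ref{assump:corr}'s $\ell^4$ bound directly applicable and avoids the $n$-dependent factors that pointwise control on the entries of $\tilde\rho$ would introduce.
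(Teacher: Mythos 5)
Your proof is correct and follows essentially the same route as the paper's: expand the second moment of $g_i'$ (resp.\ $g_i''$) through the Hermite/Mehler covariance expansion in powers of $\rho_{j_1 j_2}$, keep the $k=0,1$ terms explicitly, bound the $k\ge 2$ tail by $\tilde{\rho}_{j_1 j_2}^2$ times a Parseval/Cauchy--Schwarz constant, and control the off-diagonal remainder $\sum_{j_1,j_2 \in I}|\tilde{\rho}_{ij_1}\tilde{\rho}_{ij_2}|\,\tilde{\rho}_{j_1 j_2}^2 \le C_\rho^2 \sum_{j\in I}\tilde{\rho}_{ij}^2$ via Cauchy--Schwarz and Assumption~\ref{assump:corr} (your operator-norm-bounded-by-Frobenius-norm step is literally the same inequality). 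The only cosmetic difference is that the paper first splits $\ex{(g_i'(U_i,V_i))^2}$ into squared mean plus variance before expanding the covariance, whereas you assemble the perfect square from the $k=0$ terms of the full Mehler expansion.
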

\begin{proof}
Recalling that  $\tilde{\rho}_{ij} = \rho_{ij} \one_{i \ne j}$ and using the relation $\hat{\sigma}_j(1) =  \ex{ \sigma'_j( U_j) }$ leads to
\begin{align}
 g'_i(U_i, V_i)  
 & =   \frac{1}{\sqrt{n}} \sum_{ j \in [i] }  \tilde{\rho}_{ij}  ( \sigma'_j( U_j)   - \ex{ \sigma'_j( U_j) })  + \frac{1}{\sqrt{n}}  \sum_{j \in [n]}   \tilde{\rho}_{ij} \hat{\sigma}_j(1).
\end{align} 
Because the first term has zero mean and the second term is non-random, it follows that
\begin{align*}
\var( g_i'(U_i, V_i))
& = \frac{1}{n}  \sum_{j_1, j_2  \in [i]  \, : \, j_1 \ne j_2}  \tilde{\rho}_{ij_1} \tilde{\rho}_{ij_2} \cov(\sigma'_{j1}(U_j) , \sigma'_{j_2}(U_j))   + \frac{1}{n} \sum_{j   \in [i]  } \tilde{\rho}_{ij}^2 \var(\sigma'_j(U_j)) . 
\end{align*} 
Expanding the covariance in terms of the Hermite coefficients yields
\begin{align}
\cov(\sigma'_{j1}(U_{j_1}) , \sigma'_{j_2}(U_{j_2}))   & = \sum_{k=1}^\infty \rho_{j_1j_2} \widehat{\sigma'}_{j_1}(k) \widehat{\sigma'}_{j_2}(k)\\
&\le \rho_{j_1j_2} \widehat{\sigma'}_{j_1}(k) \widehat{\sigma'}_{j_2}(k) +  \rho^2_{j_1j_2} \sum_{k=1}^\infty |\widehat{\sigma'}_{j_1}(k) \widehat{\sigma'}_{j_2}(k)|\\
&\le  2 \rho_{j_1j_2} \widehat{\sigma'}_{j_1}(k) \widehat{\sigma'}_{j_2}(k) +  \rho^2_{j_1j_2} \sqrt{ \var(\sigma'_{j1}(U_{j_1}) \var(\sigma'_{j1}(U_{j_2})}
\end{align} 
where the last line follows from  $\widehat{\sigma'}_{j}(1) = \sqrt{2}  \widehat{\sigma}_{j}(2)$ and the Cauchy-Schwarz inequality. Since  $\var(\sigma'_{j}(U_{j})$ is equal to zero if $\sigma_{j}$ is affine and bounded by $C_\sigma^2$ otherwise, we can write
\begin{align*}
\var( g_i'(U_i, V_i))
&  \le \frac{2}{n}  \sum_{j_1, j_2  \in [i]  }  \tilde{\rho}_{ij_1} \tilde{\rho}_{ij_2}\tilde{\rho}_{j_1j_2}   \hat{\sigma}_{j_1}(2) \hat{\sigma}_{j_2}(2)    + \frac{C_\sigma^2}{n}  \sum_{j_1, j_2  \in I  } | \tilde{\rho}_{ij_1} \tilde{\rho}_{ij_2} | \tilde{\rho}^2_{j_1j_2}    + \frac{C^2_\sigma}{n} \sum_{j   \in I  } \tilde{\rho}_{ij}^2 .
\end{align*} 
Finally, by the Cauchy-Schwarz inequality, the second term can be simplified as follows:
\begin{align}
 \sum_{j_1, j_2  \in  I }  \tilde{\rho}_{ij_1} \tilde{\rho}_{ij_2}  \tilde{\rho}_{j_1j_2}^2  & \le  \sqrt { \sum_{j_1, j_2  \in   I }  \tilde{\rho}^2_{ij_1} \tilde{\rho}^2_{ij_2} } \sqrt{     \sum_{j_1, j_2  \in  I }   \tilde{\rho}_{j_1j_2}^4  }  \le   C_\rho^2  \sum_{j  \in I }  \tilde{\rho}^2_{ij} 
 \end{align}

 Using a similar approach for $g''_i(U_i,V_i)$ and noting that $\widehat{\sigma''}_{j}(0) = \sqrt{2}  \hat{\sigma}_{j}(2)$ and  $\widehat{\sigma''}_{j}(1) = \sqrt{6}  \hat{\sigma}_{j}(3)$ leads to
 \begin{align*}
 \ex{ g''_i(U_i, V_i) } & = \frac{\sqrt{2}}{\sqrt{n}}  \sum_{j \in [i]}   \tilde{\rho}^2_{ij} \hat{\sigma}_j(2)\\
 \var( g''_i(U_i, V_i)) &  = \frac{1}{n}  \sum_{j_1, j_2  \in [i]  \, : \, j_1 \ne j_2}  \tilde{\rho}^2_{ij_1} \tilde{\rho}^2_{ij_2} \cov(\sigma''_{j1}(U_j) , \sigma''_{j_2}(U_j))   + \frac{1}{n} \sum_{j   \in [i]  } \tilde{\rho}_{ij}^4 \var(\sigma''_j(U_j))\\
 &  \le  \frac{1}{n}  \sum_{j_1, j_2  \in [i]  }  \tilde{\rho}^2_{ij_1} \tilde{\rho}^2_{ij_2} \tilde{\rho}_{j_1j_2} \widehat{\sigma''}_{j_1}(1) \widehat{\sigma''}_{j_2}(1)   +   \frac{C_\sigma^2}{n}  \sum_{j_1, j_2  \in I  }  \tilde{\rho}^2_{ij_1} \tilde{\rho}^2_{ij_2}  \tilde{\rho}^2_{j_1j_2}     + \frac{C_\sigma^2 }{n} \sum_{j   \in I  } \tilde{\rho}_{ij}^4 \\
  &  \le  \frac{6}{n}  \sum_{j_1, j_2  \in [i]  }  \tilde{\rho}^2_{ij_1} \tilde{\rho}^2_{ij_2} \tilde{\rho}_{j_1j_2} \widehat{\sigma}_{j_1}(3) \widehat{\sigma}_{j_2}(3)      + \frac{C_\sigma^2 (1 + C_\rho^2)  }{n} \sum_{j   \in I  } \tilde{\rho}_{ij}^4 
 \end{align*}
\end{proof}

\subsubsection{Final steps in proof}

In view of~\eqref{eq:Delta_decomp},~\eqref{eq:Delta_bound_c}, and Lemma~\ref{lem:gi_bound}, we have all the ingredients needed to  bound $\Delta(f)$. To simplify the analysis, 
observe that the replacement method can be applied with respect to any permutation $\pi$ of the problem indices $[n]$. Averaging over all possible permutations of $\pi$ of $[n]$ we can write
\begin{align}
\Delta(f)  & = \frac{1}{n!}  \sum_{\pi}  \sum_{i=1}^n  \Delta_{i,\pi}(f)   
\end{align}
where $ \Delta_{i,\pi}(f)$ is defined with respect to the permuted variables $(X_{\pi(1)}, \dots, X_{\pi(n)})$. Swapping expectation over $\pi$ and the summation over $i$, and combining with~\eqref{eq:Delta_bound_c} and Lemma~\ref{lem:gi_bound} we obtain an bound that holds uniformly for all $i$:
\begin{align}
 \frac{1}{n!}  \sum_{\pi}   \Delta_{i,\pi}(f)   & \le  \frac{1}{n}   \frac{C C_\sigma}{\sqrt{n}} \left( \delta_1 + C_{\sigma}^2 (1 + C_\rho^2)  \frac{1}{n} \sum_{i,j \in I}  \tilde{\rho}_{ij}^2   +  \sqrt{n \delta_2 + C_\sigma^2 (1 + C_\rho^2) C_{\rho}^4}    +C_\sigma^2  \right).
\end{align}
Noting that $\sum_{i,j \in I} \tilde{\rho}_{ij}^2 \le n C_\rho^2$ and simplifying the dependence on the constants $C_\sigma, C_\rho$ gives the stated result. This concludes thee proof of Theorem~\ref{thm:get_1d}

\section{Conditions for the GET}
\label{sec:conditions-get}

In this appendix we explore the conditions for the Gaussian equivalence theorem
in more detail. For an $N \times D$ matrix  $A$, we define the symmetric $N \times N$ matrices $\rho \equiv AA^\top$ and $\tilde{\rho} \equiv AA^\top - \Id_N$. Then, the matrices $M_1$ and $M_2$ appearing in Theorem~\ref{thm:get} can be expressed as
\begin{align}
  M_1 &= \hat{\sigma}^2(1) K_{11}  + \hat{\sigma}^2(2) K_{21} \\
  M_2 &= \hat{\sigma}^2(2) K_{21} + \hat{\sigma}^2(3) K_{22},
\end{align}
where
\begin{align}
  K_{11} & =\frac{1}{\sqrt{N}} \,  \tilde{\rho}^2\\
  K_{12} & =  \frac{1}{\sqrt{N}} \,  \tilde{\rho}^2 \circ \rho\\
  K_{21} & = (\tilde{\rho} \circ \tilde{\rho})^2 \\
  K_{22} & = (\tilde{\rho} \circ \tilde{\rho})^2 \circ \rho
\end{align}
These matrices are positive definite by the Schur product
theorem~\cite[Sec. 7.5]{horn2012matrix}, and thus have positive real eigenvalues. We are
interested in how the leading eigenvalues and eigenvectors depend on $A$.


To gain insight into the scaling behaviour of the matrices, we consider a setting where the entries of $A$ are i.i.d.\ according to
\[
A_{ij}  =  \frac{1}{\sqrt{D}} \left(  \mu     + \sqrt{ 1- \mu^2}  \,  Z_{ij}\right) 
\]
where $\mu \in [0,1]$ is a deterministic parameter and  $\{Z_{ij}\}$ are i.i.d.\ standard Gaussian variables. The normalisation by $1/\sqrt{D}$ ensures that the column norms of $A$ converges to one almost surely as $D \to \infty$.

\subsection{Deterministic setting} 

In the limit where $N$ is fixed and $D \to \infty$, it follows from the law of large numbers that  $\rho = A A^\top$ converges almost surely to the deterministic $N \times N$ matrix given by
\begin{align}
\rho = \mu^2 \one_{N \times N }  + (1- \mu^2) \Id_N. 
\end{align}
Notice that this is the same matrix given Example 2 with $\mu^2 = c/\sqrt{N}$. The matrices $K_{ij}$ can be computed exactly as
\begin{subequations}
\label{eq:K_deterministic} 
%
\begin{align}
K_{11} &
= \frac{\mu^4}{ \sqrt{N}} \left( (N- 2) \one_{N \times N} + \Id_N \right)\\
    K_{12}
    & =  \frac{\mu^4}{ \sqrt{N}} \left( (N- 2)  \mu^2 \one_{N \times N}  +   [ (N-
      2)(1- \mu^2) + 1]  \Id_N \right) 
\\
  K_{21} &
 =\mu^4 N^{1/2} K_{11} \\
K_{22} & = \mu^4 N^{1/2} K_{12}.
\end{align}
\end{subequations}
Since each of these matrices can be expressed as a weighted sum of the all ones matrix and the identity matrix, their eigenvalue decompositions can be described using using the following elementary result.

\begin{lemma}\label{lem:ones_plust_Id} 
If $K = \alpha \one_{N \times N} + \beta \Id_N$ for real numbers $\alpha, \beta$ with $\alpha  \ge 0$, then the leading eigenvector of $K$ is proportional to the all ones vector and the ordered real eigenvalues $\lambda_1(K) \ge \lambda_2(K) \ge \dots \ge \lambda_N(K)$ are given by
\begin{align}
\lambda_i(K) = \begin{dcases} 
 \alpha  N + \beta   , & i = 1\\
\beta, & i  \ge 2
\end{dcases}
\end{align}
\end{lemma}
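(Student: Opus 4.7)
The plan is to recognise that the matrix $\one_{N\times N}$ is the rank-one outer product $\one\one^\top$, where $\one \in \reals^N$ denotes the all-ones vector. Since $\one^\top \one = N$, applying $\one\one^\top$ to $\one$ returns $N\one$, so $\one/\sqrt{N}$ is a unit eigenvector with eigenvalue $N$; every vector orthogonal to $\one$ lies in the kernel of $\one\one^\top$ and is thus an eigenvector with eigenvalue $0$. This yields the full spectrum of $\one_{N\times N}$ and, crucially, an orthonormal eigenbasis that diagonalises it.

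Having fixed such a basis, I would write $K = \alpha\, \one\one^\top + \beta\, \Id_N$ and observe that $\Id_N$ acts as a scalar on every vector, so the same basis simultaneously diagonalises $K$. Applying $K$ to $\one/\sqrt{N}$ gives $(\alpha N + \beta)\,\one/\sqrt{N}$, while applying $K$ to any unit vector orthogonal to $\one$ gives $\beta$ times that vector. This identifies the eigenvalues as $\alpha N + \beta$ (simple, with eigenvector proportional to $\one$) and $\beta$ (with multiplicity $N-1$ on the orthogonal complement).

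Finally I would verify the ordering. The hypothesis $\alpha \ge 0$ together with $N \ge 1$ gives $\alpha N + \beta \ge \beta$, so $\alpha N + \beta = \lambda_1(K)$ and the remaining eigenvalues equal $\beta$, matching the claimed formula. There is no genuine obstacle; the only pedantic point worth mentioning is the degenerate case $\alpha = 0$, in which $K = \beta\, \Id_N$ and the statement remains true (the $\one$ direction is still a valid leading eigenvector, it is just no longer the unique one).
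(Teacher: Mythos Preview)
Your proposal is correct and is precisely the standard argument for this elementary fact. The paper itself states this lemma without proof, treating it as a well-known result, so your write-up would serve perfectly well as the omitted justification.
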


By Lemma~\ref{lem:ones_plust_Id}, each of the $K_{ij}$ matrices has a leading eigenvector that is proportional to the all ones vector. Furthermore, the leading order terms in the eigenvalues are summarised in the Table~\ref{tab:evals} as a function of $N$ and $\mu$. Here, we see that if the mean parameter satisfies $\mu = O( N^{-\beta})$  for a fixed constant  $\beta > 1/8$ then all of the eigenvalues except for the maximum converge to zero as $N \to \infty$. In other words, the GET holds provided that the weights are orthogonal to the all ones vector. 

Evaluating with $\mu^2 = c/\sqrt{N}$  for fixed constant $c$  (or equivalently $\beta = 1/4$) recovers the scalings given in Example~2.


\begin{table}
\centering
\caption{\label{tab:evals} Leading order terms for the eigenvalues of the matrices  in~\eqref{eq:K_deterministic}.  }
\begin{tabular}{l | c c c c} 
& $K_{11}$ &$K_{12}$ &$K_{21}$ &$K_{22}$ \\
\hline
maximum eigenvalue & $\mu^4 N^{3/2}$ & $\mu^6 N^{3/2}  + \mu^4 N^{1/2} $ & $\mu^8 N^2$ & $\mu^{10} N^2  + \mu^8 N$\\
2nd largest eigenvalue&  $\mu^4 N^{-1/2}$  & $\mu^4 N^{1/2}  $ & $\mu^8$ & $\mu^8 N$
\end{tabular}
\end{table}

\subsection{Fixed aspect ratio} 

Next we consider the setting where $D/N  \to  \delta \in (0, \infty)$. Note that $A$ can be expressed as a rank-one perturbation of an $N \times D$ matrix with i.i.d.\ entries. 
In the high dimensional setting $N \to \infty$, the asymptotic distribution of the
singular values and singular vectors are given by~\cite{benaych2012singular}. 
In particular, the maximum eigenvalue satisfies 
\begin{align}
\lambda_1(AA^\top )
& \to \begin{dcases}
 \frac{(1 -\mu^2  +\mu^2 N  )( (1- \mu^2) /\delta + \mu^2 N )}{ \mu^2 N }  ,  &  \frac{\mu^2 N}{ 1- \mu^2}  \ge \delta^{-1/2}   \\
 (1- \mu^2)   \left(1 + \sqrt{1/\delta} \right)^2  , & \text{otherwise} 
\end{dcases},
\end{align}
and the asymptotic empirical distribution of the remaining eigenvalues converges almost surely to the Marchenko-Pastur distribution. Based on these results, the leading order terms in the first and second eigenvalues of $K_{11}$ satisfy the following bounds almost surely: 
\begin{align}
\lambda_1(K_{11}) &= 
O\left(  [ \delta^{-1} + \delta^{-2} ] N^{-1/2} + \mu^4 N^{3/2} \right)\\
\lambda_2(K_{11}) &= O\left(  [ \delta^{-2} + \delta^{-1}  + \mu^4 ] N^{-1/2}  \right). 
\end{align}
Notice that the $\delta \to \infty$ limit of these conditions recovers the scaling given in Table~\ref{tab:evals}.




The scaling behaviour of the matrices $K_{12}, K_{21}$, and $K_{22}$ is more difficult to characterise theoretically because these matrices involve the Hadamard product of random matrices. In the following section we explore their behaviour
numerically. For fixed $\delta$ and $\mu = O(N^{-\beta})$ we make the following observations:
\begin{itemize}
 
\item Fig.~\ref{fig:scaling_K_1} shows the empirical scaling of the eigenvalues for the case $\mu = 0$ (which corresponds to Example~1) and $\mu = O(1/\sqrt{n})$. In both cases, we see that all of the eigenvalues converge to zero  expect for the maximum eigenvalue of $K_{21}$ which is order one. Moreover, the rate of convergence appears to be the same for these two cases. 
\item Fig.~\ref{fig:scaling_K_2} shows the empirical scaling of the eigenvalues  for $\beta \in  \{1/5,1/6,1/7,1/8\}$. For $\beta \ge 1/6$ the second largest eigenvalues of all matrices appear to be decreasing with 
$N$. However, for $\beta = 1/7$ the eigenvalues in $K_{12}$ do not appear to be decreasing (at least for the scale of $N$ shown) and this suggests that the conditions on $\mu$ needed to ensure convergence are more stringent then in the deterministic setting ($\delta \to \infty$) for which the condition $\beta > 1/8$ is sufficient. 
\end{itemize}

\begin{figure}[t!]
  \includegraphics[width= .5 \textwidth]{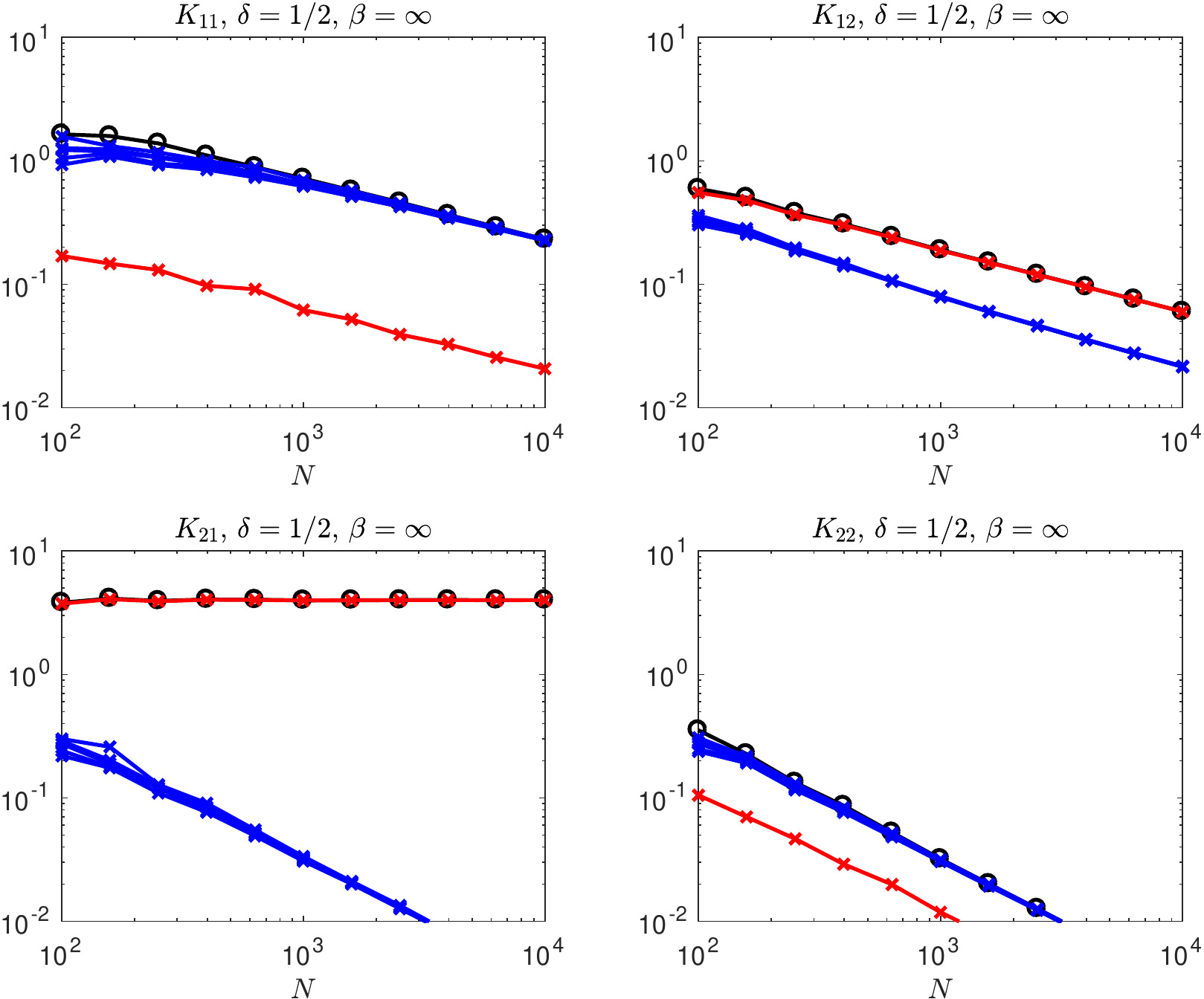}
  \includegraphics[width= .5 \textwidth]{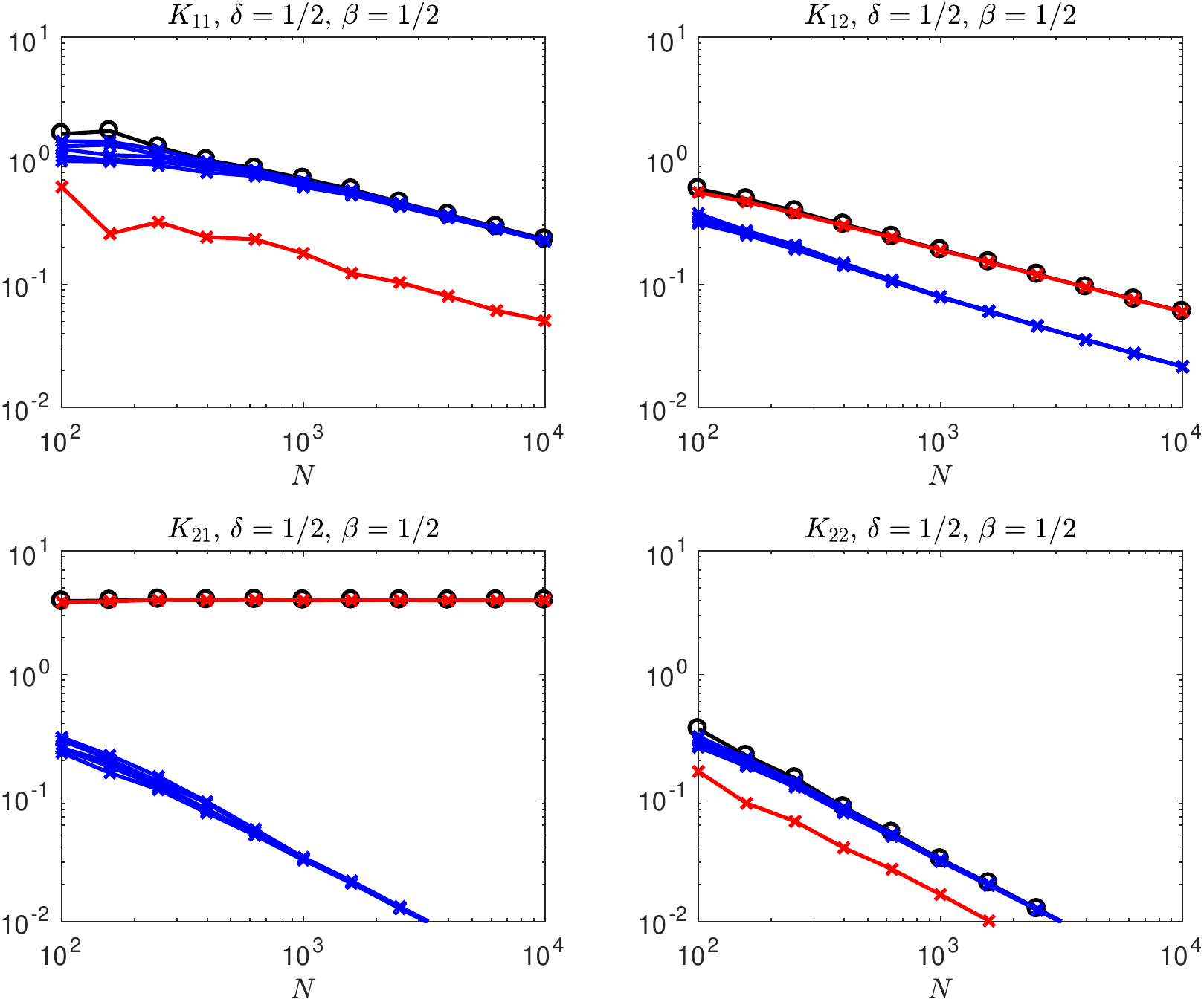}
  \caption{\label{fig:scaling_K_1}Scaling of eigenvalues in the
      $K$ matrices. The black line is maximum eigenvalue $\lambda_1$.  Blue
      lines are $\lambda_i$ for $i \in \{2,6\}$. The red line is the correlation
      with the all ones matrix; when this value is close to the maximum
      eigenvalue it means the leading eigenvector is close to the all ones
      vectors. The left panel is the case $\mu =0$ and the right panel is the
      case $\mu = O( N^{-1/2})$.}
\end{figure}

\begin{figure}[t!]
  \includegraphics[width= .5\textwidth]{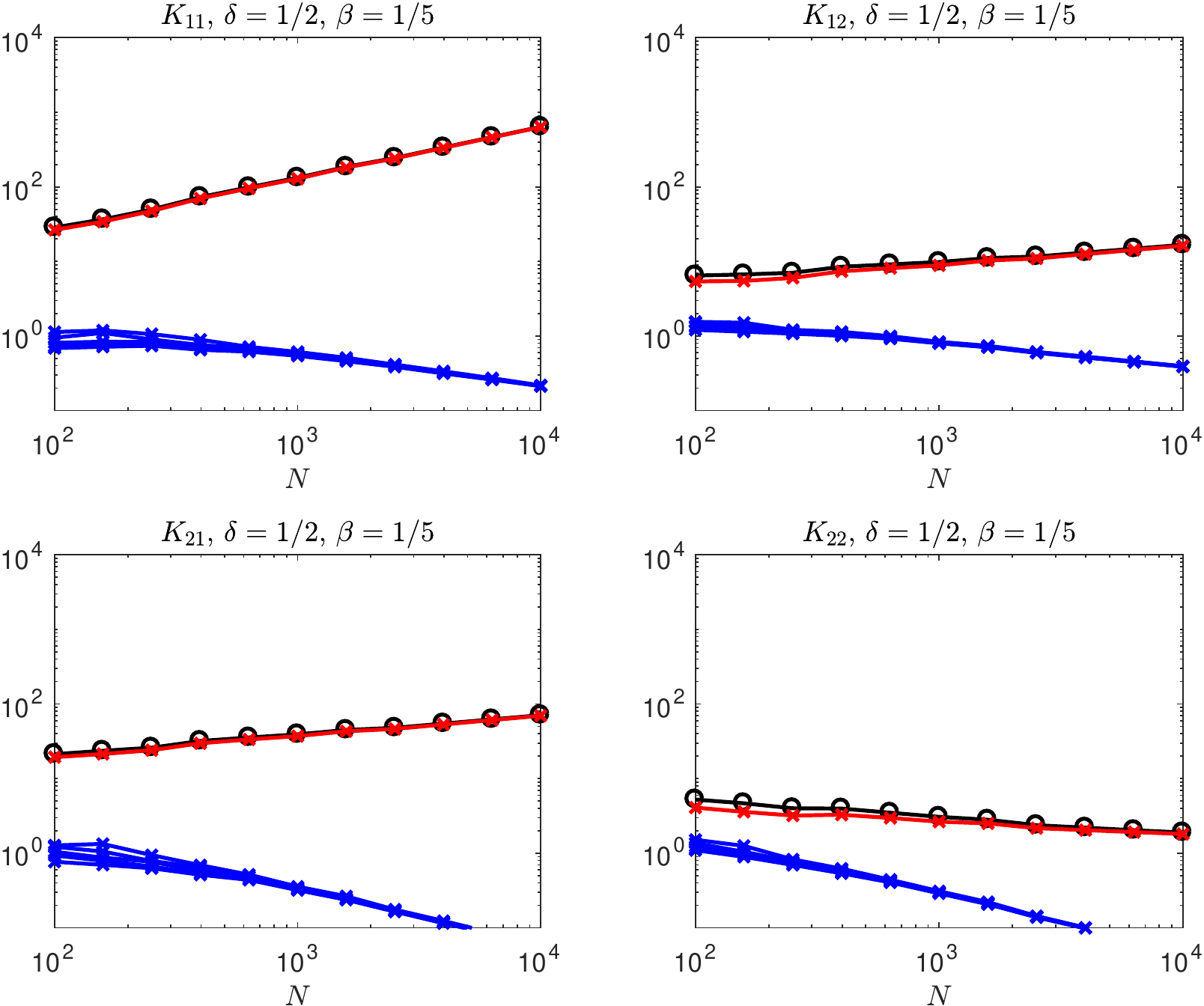}
  \includegraphics[width= .5\textwidth]{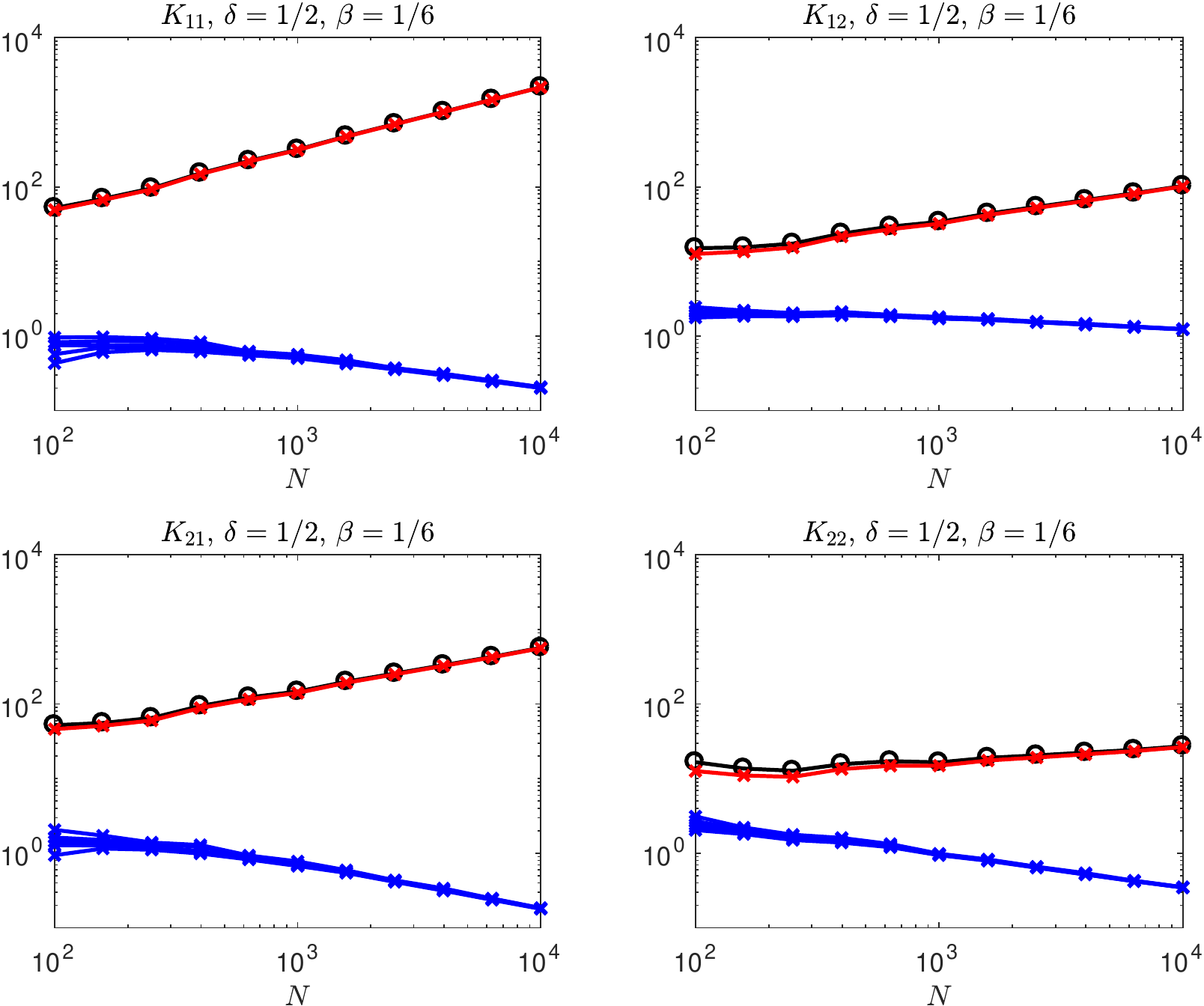}
  \includegraphics[width= .5\textwidth]{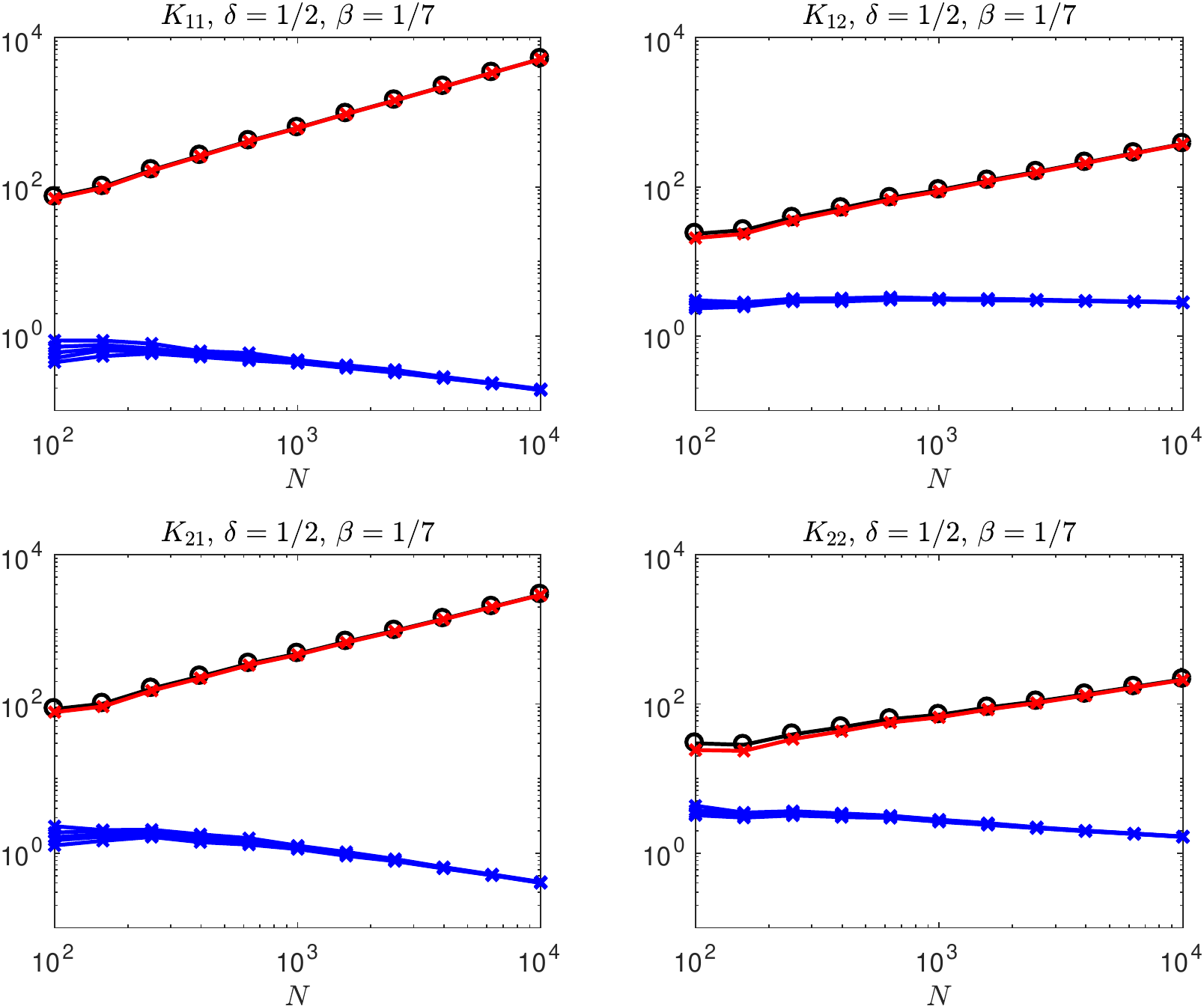}
  \includegraphics[width= .5\textwidth]{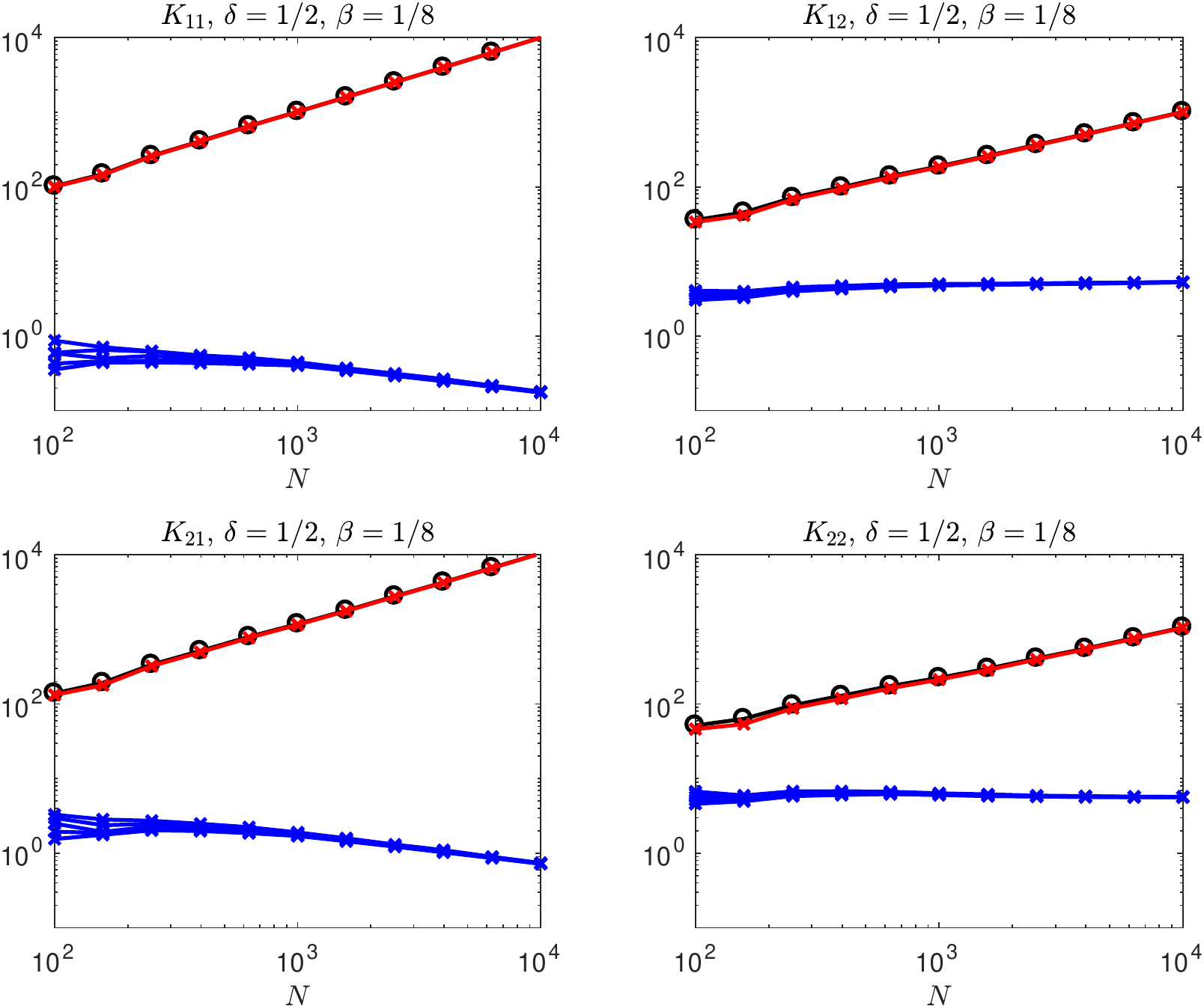}
  \caption{\label{fig:scaling_K_2}Same as in Figure~\ref{fig:scaling_K_1} but
    with $\nu = O(N^{-\beta})$ for $\beta \in \{1/5,1/6,1/7,1/8\}$.}
\end{figure}

\section{Derivation of the equations of motion of Sec.~\ref{sec:odes}}
\label{sec:ode-derivation}
Here we give a detailed derivation of the equations of motion that describe the
dynamics of the two-layer neural net studied in Sec.~\ref{sec:odes}. We
refer to this section for a detailed description of the setup. The GEP allows us
to express the prediction mean-squared error $\pmse$ as a function of the
second-layer weights $v$ and $\tilde v$ as well as the second moments of
$(\lambda, \nu)$, which we can write in terms of the covariance matrices
$\Omega_{ij} = \EE x_i x_j$ and $\Phi_{ir} = \EE x_i c_r $ as
\begin{equation}
  \label{eq:order-params-explicit}
  \begin{gathered}
    Q^{k\ell} \equiv \EE \lambda^k \lambda^\ell = \usN \sum_{i, j}^N w_i^k
    \Omega_{ij} w_j^k, \qquad R^{km} \equiv \EE \lambda^k \nu^m= \ussdelta \usN
    \sum_{i, r}
    w^k_i \Phi_{ir} \tilde w^m_r    \\
    T^{mn} \equiv \EE \nu^m \nu^n = \frac{1}{D} \sum_{r,s}^D \tilde w^m_r \tilde w_r^n.
  \end{gathered}
\end{equation}
We will adopt the notational convention for tensors such as $Q^{k \ell}$ that
extensive indices (taking values up to $D$, $N$) are below the line, while we'll
use upper indices when they take a finite number of values up to $M$ or $K$. The
challenge of controlling the learning in the thermodynamic limit will be to
write closed equations using matrices with only ``upper'' indices left. Finally,
we will adopt the convention that the indices $j,k,\ell,\iota=1,\ldots,K$ always
denote \emph{student} nodes, while $n,m=1,\ldots,M$ are reserved for teacher
hidden nodes.

\paragraph{Rotating the dynamics}

The first step in the derivation is to rotate the order parameters into the
basis given by the eigen-decomposition of the covariance matrix
with eigenvalues $\rho_\tau$ and eigenvectors $\psi_\tau$ that are normalised as
$\sum_\tau \psi_{\tau i}\psi_{\tau j} = N \delta_{ij}$ and
$\sum_i \psi_{\tau i}\psi_{\tau' i} = N \delta_{\tau \tau'}$.  We can then
re-write the ``teacher-student overlap'' $R$~\eqref{eq:order-params-explicit} as
\begin{equation}
  R^{km} =\frac{1}{\sqrt{\delta}N}\sum_\tau \Gamma^k_\tau \tilde{\Gamma}^m_\tau
\end{equation}
where we have introduced the student and teacher projections
\begin{equation}
  \label{eq:Gammas}
  \Gamma_\tau^k \equiv \ussN \sum_i \psi_{\tau i} w_i^k, \qquad
  \tilde{\Gamma}_\tau^m \equiv \ussN \sum_i \psi_{\tau i} \tilde{\omega}^m_i, \qquad
  \tilde \omega^m_i \equiv \sum_r \Phi_{ir} \tilde w^m_r.
\end{equation}
Note the normalisation (or lack thereof); this is due the fact that
$\Phi_{ir} = \EE x_i c_r \sim O(1 / \sqrt{N})$. The student-student overlap
becomes likewise
\begin{equation}
  Q^{kl}=\usN \sum_\tau  \rho_\tau \Gamma^k_\tau \Gamma^\ell_\tau,\label{eq:Qsum}
\end{equation}
and we also introduce a new teacher-teacher overlap, which is given by
\begin{equation}
  \label{eq:tildeT}
  \tilde T^{nm} =\usN \sum_\tau  \tilde{\Gamma}^n_\tau \tilde{\Gamma}^m_\tau =
  \usN \sum_i \sum_{r,s} \tilde w^n_r \Phi_{ir} \Phi_{is} \tilde w^m_s
\end{equation}
This order parameter can be interpreted as a teacher-teacher overlap with the
teacher weights ``rotated'' by~$\left[\Phi^\top \Phi\right]_{rs}$. This
is a key observation: having the teacher act on the latent variables means that
instead of having the actual teacher-teacher overlap, the student also sees a
rotated version, rendering perfect learning impossible.

\paragraph{Teacher-student overlap}

To analyse quantities that are linear in the weights, such as the
teacher-student overlap $R^{km}$, we have to analyse the SGD update
\begin{equation}
  \dd \Gamma^k_\tau =-\frac{\eta}{\sqrt{N}}v^k
  \left[\sum_{j\neq k}^K v^j \mathcal{A}^{jk}_\tau + v^k \mathcal{B}^k_{\tau}-\sum_n^M \tilde v^n \mathcal{C}^{nk}_\tau\right].
\end{equation}
We will use $\dd$~to denote the change in time-dependent quantities during one
step of SGD. We have defined the following averages
\begin{equation}
  \label{eq:avgABC}
  \mathcal{A}^{jk}_\tau= \EE g(\lambda^j)g'(\lambda^k)\beta_{\tau},\qquad
  \mathcal{B}^k_\tau=\EE g(\lambda^k)g'(\lambda^k)\beta_{\tau},\qquad
  \mathcal{C}^{nk}_\tau=\EE \tilde{g}(\nu^n)g'(\lambda^k)\beta_{\tau}.
\end{equation}
where we have introduced the projected input
\begin{equation}
  \label{eq:beta}
  \beta_\tau \equiv \ussN \sum_i \psi_{\tau i} x_i.
\end{equation}
As we discussed in the main text, there are now two crucial facts that make
computing these averages possible. The online assumption asserts that
at each step $\sindex$ of SGD, the input $x_\sindex$ used to evaluate the gradient is
generated from a previously unused latent vector $c_\sindex$, which is uncorrelated
to the students weights at that time. We also \emph{assume} that the $K + M$
variables $\{\lambda^k, \nu^m\}$ are jointly Gaussian, making it possible to
express the averages over $\{\lambda^k, \nu^m\}$ in terms of only their
covariances, and hence later to close the equations. For the special-case of a
single-layer generative network, Theorem~\ref{thm:get} gives us verifiable
conditions on the weights of the generator under which this holds. Using a
simple Lemma~\ref{lemma:1} to evaluate the averages~\eqref{eq:avgABC} yields
\begin{align}
  \begin{split}
    \mathcal{A}^{jk}_\tau &=\frac{1}{Q^{kk}Q^{j
        j}-(Q^{kj})^2}  \left( Q^{jj} \EE\left[ g'(\lambda^k)\lambda^k
    g(\lambda^j) \right] \;
    \EE\left[ \lambda^k \beta_\tau\right] -Q^{kj} \EE\left[ g'(\lambda^k)\lambda^j g(\lambda^j) \right] \;
    \EE\left[ \lambda^k \beta_\tau\right] \right.  \\
     & \hspace*{10em}\left. -Q^{kj} \EE\left[ g'(\lambda^k) \lambda^k g(\lambda^j) \right] \;
     \EE\left[ \lambda^j \beta_\tau\right]  +Q^{kk} \EE\left[ g'(\lambda^k) \lambda^j g(\lambda^j) \right] \;
     \EE\left[ \lambda^j \beta_\tau\right] \right),
   \end{split}
\end{align}
and similarly for $\mathcal{B}^k_{\tau}$ and $\mathcal{C}^{nk}_\tau$. At this
point, it is convenient to introduce a short-hand notation for the
three-dimensional Gaussian averages
\begin{equation}
  \label{eq:I3}
  I_3(k, j, n) \equiv \EE\left[ g'(\lambda^k) \lambda^j
    \tilde{g}(\nu^n) \right] ,
\end{equation}
which was introduced by~\citet{Saad1995a}. Arguments passed to
$I_3$ should be translated into local fields on the right-hand side by using the
convention where the indices $j,k,\ell,\iota$ always refer to student local
fields $\lambda^j$, etc., while the indices $n,m$ always refer to teacher local
fields $\nu^n$, $\nu^m$. Similarly,
$I_3(k, j, j) \equiv \EE\left[ g'(\lambda^k) \lambda^j g(\lambda^j) \right]$,
where having the index $j$ as the third argument means that the third factor is
$g(\lambda^j)$, rather than $\tilde{g}(\nu^m)$ in Eq.~\eqref{eq:I3}. The average
in Eq.~\eqref{eq:I3} is taken over a three-dimensional normal distribution with
mean zero and covariance matrix
\begin{equation}
  \label{eq:Phi3}
  \Phi^{(3)}(k, j, n) = \begin{pmatrix}
    Q^{kk} &  Q^{kj} & R^{kn} \\
    Q^{kj} &  Q^{jj} & R^{jn} \\

    R^{kn} &  R^{jn} & T^{nn}
  \end{pmatrix}.
\end{equation}
There are now two types of averages remaining. We first have
$\EE \lambda^k \beta_\tau = \nicefrac{1}{\sqrt{N}}\rho_\tau \Gamma_\tau^k$, and,
likewise,
$\EE\nu^n \beta_\tau = \nicefrac{1}{\sqrt{\delta N}} \; \tilde \Gamma^n_\tau$.
Putting everything together, we can write down the evolution of $\Gamma_\tau^k$
and identify the equations $h_{(1)}^{kj}$ etc. We have
\begin{equation}
  \label{eq:eom-Gamma}
  \begin{split}
    \dd \Gamma_\tau^k =-\frac{\eta}{N} v^k & \left( \rho_\tau \sum_{j\neq k}
      \left[ \Gamma_\tau^k v^j h_{(1)}^{kj}(Q) + v^j \Gamma_\tau^j
        h_{(2)}^{kj}(Q) \right]
      +  \rho_\tau v^k \Gamma_\tau^k h_{(3)}^k(Q) \right .\\
    &\qquad -\left. \sum_n \left[ \rho_\tau \tilde v^n \Gamma_\tau^k
        h_{(4)}^{kn}(Q,R,T) + \ussdelta \tilde v^n \tilde{\Gamma}_\tau^n
        h_{(5)}^{kn}(Q, R, T) \right] \right)
  \end{split}
\end{equation}
where we have introduced the auxiliary functions
$ h_{(3)}^k = I_3(k, k,k) / Q^{kk}$ and
\begin{subequations}
  \label{eq:h}
  \begin{alignat}{2}
    h_{(1)}^{kj} &= \frac{ Q^{jj} I_3(k, k,j) -Q^{kj} I_3(k, j, j)}
                   {Q^{kk}Q^{jj}-(Q^{kj})^2} &\qquad 
    h_{(2)}^{kj} &= \frac{ Q^{kk} I_3(k,j,j) -
                   Q^{kj} I_3(k, k,j)}
                   {Q^{kk}Q^{jj}-(Q^{kj})^2} \\
    h_{(4)}^{kn} &=\frac{ T^{nn} I_3(k, k, n) -R^{kn} I_3(k,n,n)}
                   {Q^{kk}T^{nn}-(R^{kn})^2}  & \qquad
    h_{(5)}^{kn} &= \frac{Q^{kk} I_3(k, n,n) -R^{kn}I_3(k,k,n)}
              {Q^{kk}T^{nn}-(R^{kn})^2}
  \end{alignat}
\end{subequations}

\paragraph{Introducing order parameter densities}

We are now in a position to write down the equation for $R^{km}$ Performing the
sum over $\tau$ in Eq.~\eqref{eq:eom-Gamma}, two types of terms remain. For the
first four terms, we are left with the sum
$\sum_\tau \rho_\tau \Gamma_\tau^k \tilde{\Gamma}_\tau^m$. This term cannot be
reduced to an order parameter in a straightforward way. Instead, we can make
progress by introducing the continuous function:
\begin{equation}
  \label{eq:r}
  r^{km}(\rho)\equiv\frac{1}{\varepsilon_\rho} \usN \sum_\tau \Gamma_\tau^k
  \tilde{\Gamma}_\tau^m \; \ind\left(\rho_\tau \in
    \mathopen[\rho,\rho+\varepsilon_\rho\mathclose[\right),
\end{equation}
where $\ind(\cdot)$ is the indicator function which evaluates to~1 if the
condition given to it as an argument is true, and which otherwise evaluates to
0. We take the limit $\varepsilon_\rho\to 0$ after the thermodynamic limit. Then
we can rewrite the order parameter $R^{km}$ as an integral over the density
$r^{km}$, weighted by the spectral density of the covariance $\Omega_{ij}$:
\begin{equation}
  \label{eq:supp_R_int}
  R^{km}= \ussdelta \int \dd \sindex_\Omega(\rho)\;  r^{km}(\rho).
\end{equation}
For the final term in eq.~\eqref{eq:eom-Gamma}, we introduce the density
\begin{equation}
  \label{eq:t}
  \tilde t^{nm}(\rho)\equiv\frac{1}{\varepsilon_\rho} \usN \sum_\tau \tilde \Gamma_\tau^n
  \tilde{\Gamma}_\tau^m \; \ind\left(\rho_\tau \in
    \mathopen[\rho,\rho+\varepsilon_\rho\mathclose[\right),
\end{equation}
which allows us to write the first equation of motion, which we state in full in
eq.~\eqref{eq:eom-r}.

\paragraph{Student-student overlap}

It is also convenient to re-write the student-student overlap as an integral
\begin{equation}
  \label{eq:supp_Q_int}
  Q^{k\ell} = \int \dd \sindex_\Omega(\rho) \; \rho \;  q^{k\ell}(\rho).
\end{equation}
over a density $q^{kl}(\rho)$ that is defined analogously to $r^{km}(\rho)$,
\begin{equation}
  \label{eq:q}
  q^{k\ell}(\rho) \equiv \frac{1}{\varepsilon_\rho} \usN \sum_\tau \Gamma^k_\tau
  \Gamma^\ell_\tau \; \ind\left(\rho_\tau \in
    \mathopen[\rho,\rho+\varepsilon_\rho\mathclose[\right),
\end{equation}
The part of the time-derivative of $q^{kl}(\rho)$ that is linear in
$\Gamma_\tau$ can be obtained directly from eq.~\eqref{eq:eom-Gamma} as for
$R^{km}$. For the quadratic part, we have to leading order in $N$
\begin{equation}
  \frac{\eta^2}{N} \sum_\tau v^k v^\ell \EE \Delta^2 g'(\lambda^k)
  g'(\lambda^\ell) \beta_\tau^2  = \eta^2\gamma v^k v^j \EE \Delta^2 g'(\lambda^k) g'(\lambda^\ell)
\end{equation}
where we used that $\EE \beta_\tau^2=\rho_\tau$ and we have defined
$\gamma \equiv \sum_\tau \rho_\tau / N$,
which is a constant of the motion. The remaining averages of the type
$\EE \Delta^2 g'(\lambda^k) g'(\lambda^\ell)$ 
can again be expressed succinctly using the shorthands~\cite{Saad1995a}
\begin{equation}
  \label{eq:I4}
  I_4(k, \ell, j, n) \equiv \EE\left[ g'(\lambda^k)
    g'(\lambda^\ell) g(\lambda^j) g(\nu^n)\right].
\end{equation}
that use the same notational conventions as for $I_3$. 
Putting it all together, we obtain the equation of motion~\eqref{eq:eom-q}
where we have introduced a final auxiliary function,
\begin{multline}
  \label{eq:h6}
  h_{(6)}^{k\ell}(Q, R, T, v, \tilde v) = \sum_{j,\iota}^K v^j v^\iota I_4(k, \ell, j, \iota)
  \\- 2 \sum_j^K \sum_m^M v^j \tilde{v}^m I_4(k, \ell, j, m) + \sum_{n,m}^M
  \tilde{v}^n\tilde{v}^m I_4(k, \ell, n, m).
\end{multline}

\paragraph{Second-layer weights}

Finally, we treat each of the second-layer weights of the student $v$ as an
order parameter in its own right. Their equations of motion~\eqref{eq:eom-v} are
readily found from from their SGD update~\eqref{eq:sgd}
and require only the auxiliary funciton
$h_{(7)}^{kn}(Q, R) \equiv \EE\left[ g(\lambda^k) g(\nu^n)\right]$ using the
same convention for the subscript of $h_{(7)}^{kn}$ that we used for the
integrals $I_3$ and $I_4$.

\paragraph{A simple lemma}
\label{sec:simple-lemma}

The derivation of the dynamical equations uses a simple Lemma that we recently
used to analyse single-layer generators~\cite{goldt2019modelling}. To be as
self-contained as possible, we repeat the Lemma here, and refer the interested reader to their paper for the proof.
\begin{lemma}
  \label{lemma:1}
  Suppose you have $T$ random variables $x^1,\dots,x^T$ with jointly Gaussian
  distribution $p(x^1, \ldots, x^T)$. We assume that the distribution has zero
  first moments that the second moments matrix $q^{t t'}$ is positive
  definite. Suppose that an extra random variable $y$ is jointly distributed
  with the $x^1,\ldots,x^T$ and has mean zero, a finite variance
  $\langle y^2\rangle$, and correlations $\langle x^t y\rangle$ which are
  $O(1/\sqrt{N})$.  Then for any two functions $\phi (x^1,\dots,x^T)$ and
  $\psi(y)$ that are odd in each of their arguments, we have, to leading order
  when $N\to \infty$:
\begin{equation}
\langle \phi(x^1,\dots,x^T) \psi(y)\rangle = \sum_{t,s}(q^{-1})^{ts}\;
\frac{\langle x^s y\rangle}{\langle y^2\rangle}\; \langle x^t \phi
(x^1,\dots,x^T)\rangle \; \langle y\psi(y)\rangle
\end{equation}
\end{lemma}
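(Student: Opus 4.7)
The plan is to prove the lemma by exploiting the smallness of the cross-correlations $\langle x^t y\rangle = O(1/\sqrt N)$ through a perturbative expansion that isolates the piece of $y$ correlated with $x$ from a residual piece (approximately) independent of $x$. I would first settle the case in which $y$ is jointly Gaussian with $(x^1,\ldots,x^T)$, and then argue the non-Gaussian case reduces to it at leading order.

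In the jointly Gaussian case, introduce the orthogonal decomposition
\begin{equation*}
y = \sum_s c^s x^s + z,\qquad c^s = \sum_t (q^{-1})^{st}\langle x^t y\rangle,
\end{equation*}
where $z$ is Gaussian, independent of $(x^1,\ldots,x^T)$, with mean zero and $\langle z^2\rangle = \langle y^2\rangle - \sum_{s,t}(q^{-1})^{st}\langle x^t y\rangle\langle x^s y\rangle = \langle y^2\rangle - O(1/N)$. By the hypothesis, $c^s = O(1/\sqrt N)$. Taylor expand
\begin{equation*}
\psi(y) = \psi(z) + \psi'(z)\sum_s c^s x^s + O(1/N),
\end{equation*}
and take the expectation of $\phi(x)\psi(y)$. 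Independence of $z$ from $x$ factorises both terms. The zeroth-order term vanishes because $\phi$ is odd in each argument and the Gaussian measure on $x$ is symmetric, so $\langle \phi(x)\rangle = 0$. The first-order term is
\begin{equation*}
\sum_s c^s \langle x^s \phi(x)\rangle\,\langle \psi'(z)\rangle.
\end{equation*}
A Gaussian integration-by-parts (Stein's identity) applied to the Gaussian variable $z$ gives $\langle z\psi(z)\rangle = \langle z^2\rangle\,\langle \psi'(z)\rangle$, and since $\langle z^2\rangle = \langle y^2\rangle + O(1/N)$ and $\langle z\psi(z)\rangle = \langle y\psi(y)\rangle + O(1/\sqrt N)$ (because $y-z = O(1/\sqrt N)$ in $L^2$), replacing $\langle \psi'(z)\rangle$ by $\langle y\psi(y)\rangle/\langle y^2\rangle$ costs only subleading terms. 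Substituting the expression for $c^s$ reproduces the stated formula.

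For the general case in which $y$ is not Gaussian, I would write $\langle \phi(x)\psi(y)\rangle = \mathbb{E}_y[\psi(y)\,\mathbb{E}[\phi(x)\mid y]]$ and observe that the joint density admits the expansion $p(x,y)=p(x)p(y)\bigl[1+\sum_s \xi^s(y)\,h^s(x) + R(x,y)\bigr]$, where $h^s(x)=\sum_t (q^{-1})^{st}x^t$ are the linear Hermite duals to $x^s$ under the Gaussian marginal, the coefficients $\xi^s(y)$ have the property that $\mathbb{E}[\xi^s(y)\,y]=\langle x^s y\rangle/\langle y^2\rangle\cdot\langle y^2\rangle = \langle x^s y\rangle$ (up to a normalisation accounted for by the $q^{-1}$ factor), and $R$ collects corrections built from higher-order cumulants. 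Multiplying by $\phi(x)\psi(y)$ and integrating, the oddness of $\phi$ kills the $1$ and kills every term in $R$ that is even in $x$; what remains at leading order in $1/\sqrt N$ is exactly the linear Hermite term, and it evaluates to the right-hand side of the claim without any use of Gaussianity of $y$.

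The main obstacle is controlling the non-Gaussian corrections: one must verify that the higher-order contributions in $R$, as well as the error from replacing $\langle \psi'(z)\rangle$ and $\langle z^2\rangle$ by their $y$-counterparts, are genuinely $o(1/\sqrt N)$ rather than just $O(1/\sqrt N)$. This reduces to checking that higher joint cumulants $\langle x^{s_1}\cdots x^{s_k} y\rangle_c$ are at most $O(N^{-k/2})$, which is the natural extension of the $O(1/\sqrt N)$ hypothesis on the two-point correlations in the regime of interest (where $y$ itself is typically a sum of many weakly dependent terms, as in the applications to SGD dynamics of Section~\ref{sec:odes}). Once this scaling is granted, the rest of the argument is a routine Taylor expansion combined with Stein's identity.
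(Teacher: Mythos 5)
First, a point of reference: this paper does not actually prove Lemma~\ref{lemma:1} --- it states it and defers the proof to \cite{goldt2019modelling}, so the comparison below is against that referenced argument.

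Your proof is essentially correct in the jointly Gaussian case and takes the mirror image of the route used there. The cited proof perturbs $x$ rather than $y$: it sets $u^t = x^t - \frac{\langle x^t y\rangle}{\langle y^2\rangle}\,y$, assumes $u$ and $y$ are independent to leading order, Taylor-expands $\phi(u + \tfrac{\langle x y\rangle}{\langle y^2\rangle} y)$ in the small shift, kills the zeroth-order term by oddness, and converts $\langle \partial_t \phi(x)\rangle$ into $\sum_s (q^{-1})^{ts}\langle x^s \phi(x)\rangle$ by Stein's identity applied to the genuinely Gaussian variables $x^t$. You instead decompose $y = \sum_s c^s x^s + z$ and apply Stein to the residual $z$, which forces you to assume joint Gaussianity of $(x,y)$ in the first part of your argument and to patch the general case with a Lancaster-type density expansion; the referenced route needs only differentiability of $\phi$ and the (equally unstated) approximate independence of the residual from $y$, and so stays closer to the hypotheses as written. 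Your trade-off is that you require $\psi$ twice differentiable where they require $\phi$ once differentiable. Both arguments are perturbative in the $O(1/\sqrt{N})$ correlation and both rely on unstated regularity and higher-order decoupling. On that last point your caution is well placed and worth making explicit: the lemma is false as literally stated. Take $T=1$, $x\sim\normal(0,1)$, $y = (x^3-3x)/\sqrt{6}$, $\phi(x)=x^3$, $\psi(y)=y$; then $\langle xy\rangle = 0$ so the right-hand side vanishes, while $\langle \phi(x)\psi(y)\rangle = \sqrt{6}$. So the higher joint cumulants of $(x,y)$ must be assumed to decay (as they do for the projected fields $\beta_\tau$ in Sec.~\ref{sec:ode-derivation}), exactly as you say; this is a hypothesis of the regime of application, not something either proof derives.
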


\subsection{Increasing the number of neurons}

The dynamical equations we derived in this section are valid for any finite
$M,K$ after letting $N\to\infty$. For the simulations, it is thus natural to ask
up to which number of neurons the equations accurately predict the dynamics for
fixed $N$. We tested the accuracy of the equations by focusing on the
single-layer generator~\eqref{eq:proof-generators} with $D=500, N=1000$. In this
case, the Gaussian Equivalence holds rigorously thanks to Theorem~\ref{thm:get},
so as we increase $M, K$, we can expect deviations between theoretical
predictions from the dynamical equations and simulations to arise only due to
problems with the equations, rather than problems with Conjecture 1. We show the
results of such an experiment in Fig.~\ref{fig:varyingK}.

\begin{figure}[t!]
  \centering
  \includegraphics[width=.66\textwidth]{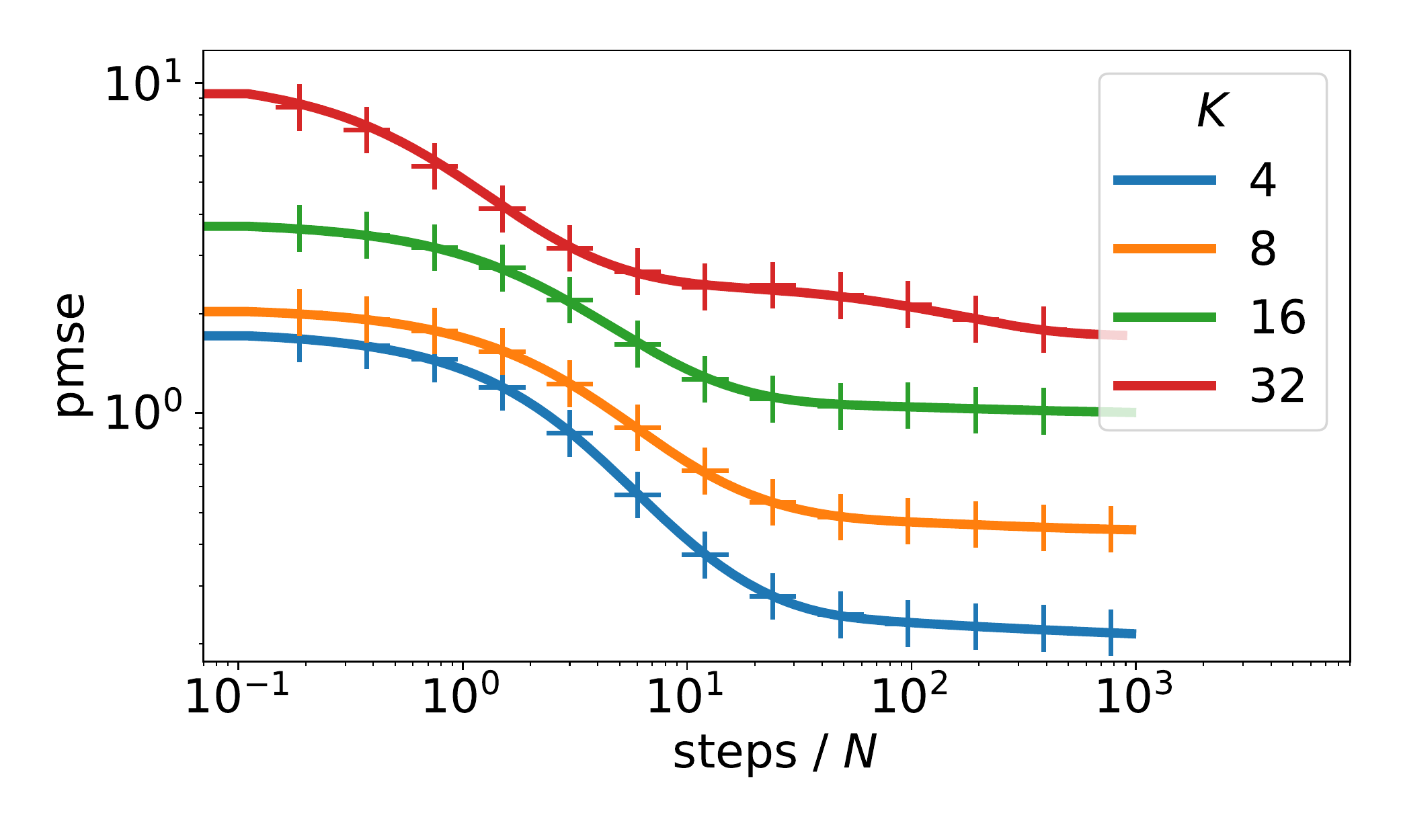}
  \caption{\label{fig:varyingK}\textbf{Theory vs experiments
        for online SGD with increasingly large students}.  We trained students
      with $K$ hidden neurons on teachers with $M=K$ neurons with inputs coming
      from a single-layer generator~\eqref{eq:proof-generators} with random
      weights.
      $D=500, N=1000, \tilde v^m=1, \eta=0.05, g(x) = \tilde g(x) =
      \erf(x/\sqrt{2})$, integration time step $\dd t=0.01$.}
\end{figure}

\section{Replica analysis}
\label{sec:app:replicas}
In this Appendix we give the main steps in the replica derivation of the result
in Section~\ref{sec:replicas} for the full-batch learning. Our analysis,
however, is restricted to the $K=M=1$ case.

\paragraph{Setting: } Consider the supervised learning problem introduced in
Section~\ref{sec:intro} with $K=M=1$. In this case, the model
$y=\phi_{\theta}(\vec{x})$ is simply a \emph{generalised linear model} with
parameter $\vec{w}\in\mathbb{R}^{N}$:
\begin{align}
    \hat{y} = \phi_{\theta}(\vec{x}) = g\left(\frac{1}{\sqrt{N}}\vec{x}\cdot \vec{w}\right)
\end{align}
Similarly, we assume data in independently sampled $(\vec{x}, y)\sim q$ from the generative model introduced in eq.~\eqref{eq:phi_teacher} with $M=1$, which is equivalent to:
\begin{align}
    y = \phi_{\tilde{\theta}}(\vec{c}) = \tilde{g}\left(\frac{1}{\sqrt{D}}\vec{c}\cdot \tilde{\vec{w}}\right), && \vec{x} = \mathcal{G}(\vec{c}), && \vec{c}\sim \mathcal{N}(\vec{0},\mat{I}_{D})
\end{align}
\noindent where $\mathcal{G}:\mathbb{R}^{D}\to \mathbb{R}^{N}$ is a deep generative network as introduced in eq.~\eqref{eq:generator}, $\vec{c}$ is the latent variable and $\tilde{w}\sim P_{\tilde{w}}$ are a fixed set of weights. Different from the online analysis, here we are interested in characterising the generalisation performance of this model when trained on a batch of $\samples$ independent samples from $q$. Let $\mathcal{D}_{\samples} = \{\vec{x}^{\sindex}, y^{\sindex}\}_{\sindex=1}^{\samples}$ denote this training set. Training will consist on finding the set of weights $\hat{\vec{w}}\in\mathbb{R}^{N}$ that minimise the following empirical risk:
\begin{align}
\hat{\vec{w}} = \underset{\vec{w}\in\mathbb{R}^{N}}{\argmin}\left[\sum\limits_{\sindex=1}^{\samples}\ell\left(y^{\sindex},\vec{x}^{\sindex}\cdot\vec{w}\right)+\frac{\lambda}{2}||\vec{w}||^2_{2}\right],
\label{eq:argmin}
\end{align}
\noindent where $\ell$ is a generic loss function and we have added an $\ell_2$ penalty with strength $\lambda>0$. Our aim is to characterise the prediction error on a fresh set of samples $\vec{x},y \sim q$,
\begin{align}
    \epsilon_{g} = \mathbb{E}_{(\vec{x},y)\sim q} \pmse(y,\hat{y}(\vec{x})),
\end{align}
\noindent in the high-dimensional limit where $N,P, D\to\infty$ while the ratios $\alpha = \samples/N$ (the sample complexity) and $\gamma = D/N$ (the compression rate) remain fixed. The key observation in our analysis is that precisely in this limit the asymptotic generalisation error can be fully characterised by only three scalar parameters $(\rho, m^{\star}, q^{\star})$. Indeed, the \emph{Gaussian Equivalence Property} (GEP) introduced in Section~\ref{sec:get-proof} allow us to write
\begin{align}
    \lim\limits_{N\to\infty}\epsilon_g = \mathbb{E}_{\nu,\lambda}\left(\tilde{g}(\nu)-g(\lambda)\right)^2
    \label{eq:app:generror}
\end{align}
\noindent where $(\nu,\lambda)\sim\mathcal{N}(0,\Sigma)$ are jointly Gaussian random variables with covariance $\Sigma = \begin{pmatrix}
    \rho & m^{\star}\\ m^{\star} & q^{\star}\end{pmatrix}$ given by:
\begin{align}
    \rho = \frac{1}{D}||\tilde{\vec{w}}||^2_{2}, && m^{\star} = \frac{1}{\sqrt{ND}}\hat{\vec{w}}^{\top}\Phi \tilde{\vec{w}}, && q^{\star} = \frac{1}{N}\hat{\vec{w}}^{\top}\Omega \hat{\vec{w}}
\end{align}
\noindent with $\Phi = \mathbb{E}_{\vec{c}} \vec{x}\vec{c}^{\top} \in\mathbb{R}^{N\times D}$ and $\Omega = \mathbb{E}_{\vec{c}} \vec{x}\vec{x}^{\top} \in\mathbb{R}^{N\times N}$ being the \emph{exact} covariances of the data. Note that $\rho$ is completely fixed by $P_{\tilde{\vec{w}}}$. The replica analysis will give us $(m^{\star}, q^{\star})$.
\subsection{Replica analysis}
The first step in the replica analysis is to define the following Gibbs measure over $\mathbb{R}^{N}$:
\begin{align}
\mu_{\beta}(\vec{w}) = \frac{1}{\mathcal{Z}_{\beta}} e^{-\beta\left[\sum\limits_{\sindex=1}^{\samples}\ell\left(y^{\sindex},\vec{x}^{\sindex}\cdot \vec{w}\right)+\frac{\lambda}{2}\sum\limits_{i=1}^{N}w_{i}^2\right]}	= \frac{1}{\mathcal{Z}_{\beta}}\underbrace{\prod\limits_{\sindex=1}^{\samples} e^{-\beta\sum\limits_{\sindex=1}^{\samples}\ell\left(y^{\sindex},\vec{x}^{\sindex}\cdot \vec{w}\right)}}_{P_{y}}\underbrace{\prod\limits_{i=1}^{N}e^{-\frac{\beta\lambda}{2}w_{i}^2}}_{P_{w}}\label{eq:app:gibbs}
\end{align}
\noindent where the normalisation $\mathcal{Z}_{\beta}$ is known as the \emph{partition function}, and is a function of the training data $\mathcal{D}$. The factorised densities $P_{y}$ and $P_{w}$ can be interpreted as a (unormalised) likelihood and prior distribution respectively. Note that if we knew how to sample from $\sindex_{\beta}$, we would be able to solve eq.~\eqref{eq:argmin}, since in the limit $\beta\to\infty$, the measure $\sindex_{\beta}$ concentrates around solutions of this minimisation problem. The replica analysis consists in computing the \emph{averaged free energy density}
\begin{align}
\beta f_{\beta} = \lim\limits_{N\to\infty}\frac{1}{N}\mathbb{E}_{\mathcal{D}}\log\mathcal{Z}_{\beta}	
\end{align}
\noindent with the replica trick:
\begin{align}
	\log\mathcal{Z}_{\beta}	= \lim\limits_{r\to 0^{+}}\frac{1}{r}\partial_{r}\mathcal{Z}^{r}_{\beta}.
\end{align}
Linearising the logarithm allow us to average $\mathcal{Z}^{r}_{\beta}$ over the dataset explicitly. As we will see, once this average is taken, $\mathcal{Z}^{r}_{\beta}$ which is a priori a high-dimensional object (defined in terms of integrals in $\mathbb{R}^{N}$) factorise into a simple scalar quantities that will give us access to $(m^{\star}, q^{\star})$.
\paragraph{Averaging over the data set: } The average over the replicated partition function is explicitly given by:
\begin{align}
\mathbb{E}_{\mathcal{D}}\mathcal{Z}^{r}_{\beta} &=\prod\limits_{\sindex=1}^{\samples}\int\dd y^{\sindex}\int_{\mathbb{R}^{D}}\dd\tilde{\vec{w}}~P_{\tilde{w}}(\tilde{\vec{w}})	\int_{\mathbb{R}^{N\times r}}\left(\prod\limits_{a=1}^{r}\dd\vec{w}^{a}~P_{w}(\vec{w}^{a})\right)\times\notag\\
&\qquad\times\underbrace{\mathbb{E}_{\vec{c}^{\sindex}}\left[\tilde{P}_{y}\left(y^{\sindex}\Big|\frac{\vec{c}^{\sindex}\cdot\tilde{\vec{w}}}{\sqrt{D}}\right)\prod\limits_{a=1}^{r}P_{y}\left(y^{\sindex}\Big|\frac{\vec{x}^{\sindex}\cdot \vec{w}^{a}}{\sqrt{N}}\right)\right]}_{(\star)}\notag
\end{align}
Note that since $\vec{x}^{\sindex}=\mathcal{G}(\vec{c}^{\sindex})$ the average in $(\star)$ defines the joint probability between the random variables $\nu_{\sindex}=\frac{\vec{c}^{\sindex}\cdot\tilde{\vec{w}}}{\sqrt{D}}$ and $\lambda^{a}_{\sindex}=\frac{\vec{x}^{\sindex}\cdot\vec{w}^{a}}{\sqrt{N}}$. The \emph{Gaussian Equivalence Principle} states that for certain architectures $\mathcal{G}$, the random variables $(\nu_{\sindex},\lambda^{a}_{\sindex})$ are asymptotically jointly Gaussian, with zero mean and covariance matrix given by:
\begin{align}
\Sigma^{ab} = 
\begin{pmatrix}
 	\rho & m^{a}\\
 	m^{a} & Q^{ab}
\end{pmatrix}.
\end{align}
\noindent where the so-called overlap parameters $(\rho, m^{a}, Q^{ab})$ are related to the weights $\tilde{\vec{w}}, \vec{w}$:
\begin{align}
\rho &\equiv \mathbb{E}\left[\nu_{\sindex}^2\right] = \frac{1}{D}||\tilde{\vec{w}}||^2_{2}, && m^{a} \equiv \mathbb{E}\left[\lambda_{\sindex}^{a}\nu_{\sindex}\right]= \frac{1}{\sqrt{ND}}{\vec{w}^{a}}^{\top}\Phi\tilde{\vec{w}}, && Q^{ab} \equiv \mathbb{E}\left[\lambda_{\sindex}^{a}\lambda_{\sindex}^{b}\right]= \frac{1}{N}{\vec{w}^{a}}^{\top}\Omega \vec{w}^{b}	\notag
\end{align}
\noindent where all the information about the architecture of the generative network $\vec{x}=\mathcal{G}(\vec{c})$ is contained in the covariance matrices $\Omega = \mathbb{E}_{\vec{c}}\left[\vec{x}\vec{x}^{\top}\right]\mathbb{R}^{N\times N}$ and $\Phi = \mathbb{E}_{\vec{c}}\left[\vec{x}\vec{c}^{\top}\right]\in\mathbb{R}^{N\times D}$. We can therefore write the averaged replicated partition function as:
\begin{align}
\mathbb{E}_{\mathcal{D}}\mathcal{Z}^{r}_{\beta} &=\prod\limits_{\sindex=1}^{\samples}\int\dd y^{\sindex}\int_{\mathbb{R}^{D}}\dd\tilde{\vec{w}}~P_{\tilde{w}}(\tilde{\vec{w}})	\int_{\mathbb{R}^{N\times r}}\left(\prod\limits_{a=1}^{r}\dd\vec{w}^{a}~P_{w}(\vec{w}^{a})\right)\mathcal{N}(\nu_{\sindex}, \lambda^{a}_{\sindex};\vec{0},\Sigma^{ab})
\label{eq:avgZr:2}
\end{align}

\paragraph{Rewriting as a saddle-point problem: } The next step is to free the overlap parameters by introducing delta functions $\delta\left(D\rho - ||\tilde{\vec{w}}||^2_{2}\right)$, $\delta\left(\sqrt{ND} m^{a}-\vec{w}^{a}\Phi\tilde{\vec{w}}\right)$, $\delta\left(N Q^{ab}-{\vec{w}^{a}}^{\top}\Omega\vec{w}^{b}\right)$. Inserting in  eq.~\eqref{eq:avgZr:2}, swapping the integrals and going to Fourier space allow us to rewrite:
\begin{align}
\mathbb{E}_{\mathcal{D}}\mathcal{Z}_{\beta}^{r} = 	\int_{\mathbb{R}}\frac{\dd\rho\dd\hat{\rho}}{2\pi}\int_{\mathbb{R}^{r}}\prod\limits_{a=1}^{r} \frac{\dd m^{a}\dd\hat{m}^{a}}{2\pi}\int_{\mathbb{R}^{r\times r}}\prod\limits_{1\leq a\leq b\leq r}\frac{\dd Q^{ab}\dd\hat{Q}^{ab}}{2\pi} e^{D\Phi^{(r)}}
\label{eq:avgZr:3}
\end{align}
\noindent where we have absorbed a $-i$ factor in the integrals\footnote{This won't matter since we will be only interested in the saddle-point of the integrals.} and defined the potential:
\begin{align}
\Phi^{(r)} = -\gamma\rho\hat{\rho}	-\sqrt{\gamma}\sum\limits_{a=1}^{r}m^{a}\hat{m}^{a}-\sum\limits_{1\leq a\leq b\leq r}Q^{ab}\hat{Q}^{ab}+\alpha\Psi^{(r)}_{y}(\rho,m^{a}, Q^{ab}) + \Psi^{(r)}_{w}(\hat{\rho},\hat{m}^{a},\hat{Q}^{ab})\notag
\end{align}
\noindent with $\alpha = \samples/N$, $\gamma = D/N$ and:
\begin{align}
\Psi_{w}^{(r)} &= \frac{1}{N}\log\int_{\mathbb{R}^{D}}\dd\tilde{\vec{w}}P_{w^{\star}}\left(\tilde{\vec{w}}\right)\int_{\mathbb{R}^{N\times r}}\prod\limits_{a=1}^{r}\dd\vec{w}^{a}P_{w}\left(\vec{w}^{a}\right) e^{\hat{\rho}||\tilde{\vec{w}}||^2_{2}+\sum\limits_{a=1}^{r}\hat{m}^{a}{\vec{w}^{a}}^{\top}\Phi\tilde{\vec{w}}+\sum\limits_{1\leq a\leq b\leq r}\hat{Q}^{ab}{\vec{w}^{a}}^{\top}\Omega\vec{w}^{b}}\notag\\
\Psi_{y}^{(r)} &= \log\int_{\mathbb{R}}\dd y\int_{\mathbb{R}}\dd\nu~\tilde{P}_{y}(y|\nu)\int\prod\limits_{a=1}^{r}\dd\lambda^{a}P_{y}(y|\lambda^{a})~ \mathcal{N}(\nu,\lambda^{a};\vec{0},\Sigma^{ab})\label{eq:replica:Psir}
\end{align}
In the high-dimensional limit where $N\to\infty$ while $\alpha = \samples/N$ and $\gamma = D/N$ stay finite, the integral in eq.~\eqref{eq:avgZr:3} concentrate around the values of the overlaps that extremise $\Phi^{(r)}$, and therefore we can write:
\begin{align}
\beta f_{\beta} = -\lim\limits_{r\to 0^{+}}\frac{1}{r}\extr~ \Phi^{(r)}\left(\hat{\rho},\hat{m}^{a},\hat{Q}^{ab};\rho,m^{a},Q^{ab}\right)
\end{align}

\paragraph{Replica symmetric ansatz: } Finding the overlap configuration that minimise $\Phi^{(r)}$ is itself an intractable problem. In order to make progress, we restrict the extremisation above to the following \emph{replica symmetric ansatz}:
\begin{align}
m^{a} = m, && \hat{m}^{a} = \hat{m}, &&\text{ for } a=1,\dots, r	\notag\\
q^{aa} = r, && \hat{q}^{aa} = -\frac{1}{2}\hat{r}, &&\text{ for } a=1,\dots, r	\notag\\
Q^{ab} = q, && \hat{Q}^{ab} = \hat{q}, &&\text{ for } 1\leq a<b\leq r
\end{align}
Inserting this ansatz in eq.~\eqref{eq:replica:Psir} allow us to explicitly take the $r\to 0^{+}$ limit for each term. The first three terms are trivial. The limit of $\Psi_{y}^{(r)}$ is cumbersome, but it common to many replica computations for the generalised linear likelihood $P_{y}$. We refer the curious reader to~\cite{gerace2020generalisation} for more details, and write the end result here:
\begin{align}
\Psi_{y}\equiv\lim\limits_{r\to 0^{+}}\frac{1}{r}\Psi^{(r)}_{w} = \mathbb{E}_{\xi}\left[\int_{\mathbb{R}}\dd y~\tilde{\mathcal{Z}}_{y}\left(y,\frac{m}{\sqrt{q}}\xi, \rho-\frac{m^2}{q}\right)\log\mathcal{Z}_{y}(y,\sqrt{q}\xi,V)\right]
\end{align}
\noindent where $\xi\sim\mathcal{N}(0,1)$, $V=r-q$ and:
\begin{align}
\mathcal{Z}_{y}(y,\omega,V) = \int_{\mathbb{R}}\frac{\dd x}{\sqrt{2\pi V}}e^{-\frac{(x-\omega)^2}{2V}}P_{y}(y|x), && \tilde{\mathcal{Z}}_{y}(y,\omega,V) = \int_{\mathbb{R}}\frac{\dd x}{\sqrt{2\pi V}}e^{-\frac{(x-\omega)^2}{2V}}\tilde{P}_{y}(y|x)	
\end{align}
Note that as in~\cite{gerace2020generalisation}, the consistency condition of the zeroth order term in the free energy fix the parameters $\rho = \mathbb{E}_{P_{\tilde{w}}}\tilde{w}$ and $\hat{\rho} = 0$. The limit of $\Psi^{(r)}_{w}$ is slightly more involved. First, inserting the replica symmetric ansatz allow us to write:
\begin{align}
\Psi_{w}^{(r)} &= \frac{1}{N}\log\int_{\mathbb{R}^{D}}\dd\tilde{\vec{w}}P_{w^{\star}}\left(\tilde{\vec{w}}\right)\int_{\mathbb{R}^{N\times r}}\prod\limits_{a=1}^{r}\dd\vec{w}^{a}P_{w}\left(\vec{w}^{a}\right) e^{-\frac{\hat{V}}{2}\sum\limits_{a=1}^{r}{\vec{w}^{a}}^{\top}\Omega\vec{w}^{a}+\hat{m}\sum\limits_{a=1}^{r}{\vec{w}^{a}}^{\top}\Phi\tilde{\vec{w}}+\hat{q}\sum\limits_{a,b=1}^{r}{\vec{w}^{a}}^{\top}\Omega\vec{w}^{b}}
\end{align}
\noindent where we have defined $\hat{V} = \hat{r}+\hat{q}$. Now using that:
\begin{align}
	e^{\hat{q}\sum\limits_{a,b=1}^{r}{\vec{w}^{a}}^{\top}\Omega\vec{w}^{b}} = \mathbb{E}_{\vec{\xi}}\left[e^{\sqrt{\hat{q}}\vec{\xi}^{\top}\Omega^{1/2}\sum\limits_{a=1}^{r}\vec{w}^{a}}\right]
\end{align}
\noindent for $\vec{\xi}\sim\mathcal{N}(0,\mat{I}_{N})$, we can write:
\begin{align}
\Psi_{w}^{(r)} &= \frac{1}{N}\log\int_{\mathbb{R}^{D}}\dd\tilde{\vec{w}}P_{w^{\star}}\left(\tilde{\vec{w}}\right)\prod\limits_{a=1}^{r}\int_{\mathbb{R}^{N}}\dd\vec{w}^{a}P_{w}\left(\vec{w}^{a}\right) \mathbb{E}_{\vec{\xi}}\left[e^{-\frac{\hat{V}}{2}{\vec{w}^{a}}^{\top}\Omega\vec{w}^{a}+{\vec{w}^{a}}^{\top}\left(\hat{m}\Phi\tilde{\vec{w}}+\hat{q}\Omega^{1/2}\vec{\xi}\right)}\right]\notag\\
&= \frac{1}{N}\log\mathbb{E}_{\vec{\xi}}\int_{\mathbb{R}^{D}}\dd\tilde{\vec{w}}P_{w^{\star}}\left(\tilde{\vec{w}}\right)\left[\int_{\mathbb{R}^{N}}\dd\vec{w}~P_{w}\left(\vec{w}\right) e^{-\frac{\hat{V}}{2}\vec{w}^{\top}\Omega\vec{w}+\vec{w}^{\top}\left(\hat{m}\Phi\tilde{\vec{w}}+\hat{q}\Omega^{1/2}\vec{\xi}\right)}\right]^{r}
\end{align}
\noindent and therefore:
\begin{align}
\Psi_{w}\equiv \lim\limits_{r\to 0^{+}}\frac{1}{r}\Psi_{w}^{(r)} = \frac{1}{N}\mathbb{E}_{\vec{\xi},\tilde{\vec{w}}}\log\int_{\mathbb{R}^{N}}\dd\vec{w}~P_{w}\left(\vec{w}\right) e^{-\frac{\hat{V}}{2}\vec{w}^{\top}\Omega\vec{w}+\vec{w}^{\top}\left(\hat{m}\Phi\tilde{\vec{w}}+\hat{q}\Omega^{1/2}\vec{\xi}\right)}
\end{align}

\paragraph{Summary:} The replica symmetric free energy density is simply given by:
\begin{align}
\beta f_{\beta} = \underset{q,m,\hat{q},\hat{m}}{\extr}~\left\{-\frac{1}{2}r\hat{r}-\frac{1}{2}q\hat{q}+m\hat{m}-\alpha \Psi_{y}(r, m,q) -	 \Psi_{w}(\hat{r}, 	\hat{m},\hat{q})\right\}
\label{eq:freeen:final}
\end{align}
\noindent where
\begin{align}
\Psi_{w} &= \lim\limits_{N\to\infty} \frac{1}{N}\mathbb{E}_{\xi,\tilde{\vec{w}}}\log\int_{\mathbb{R}^{N}}\dd\vec{w}~P_{w}\left(\vec{w}\right) e^{-\frac{\hat{V}}{2}\vec{w}^{\top}\Omega\vec{w}+\vec{w}^{\top}\left(\hat{m}\Phi\tilde{\vec{w}}+\hat{q}\Omega^{1/2}\vec{\xi}\right)}\notag\\
\Psi_{y} &= \mathbb{E}_{\vec{\xi}}\left[\int_{\mathbb{R}}\dd y~\tilde{\mathcal{Z}}_{y}\left(y,\frac{m}{\sqrt{q}}\xi, \rho-\frac{m^2}{q}\right)\log\mathcal{Z}_{y}(y,\sqrt{q}\xi,V)\right]
\end{align}
\noindent and 
\begin{align}
\mathcal{Z}_{y}(y,\omega,V) = \int_{\mathbb{R}}\frac{\dd x}{\sqrt{2\pi V}}e^{-\frac{(x-\omega)^2}{2V}}P_{y}(y|x), && \tilde{\mathcal{Z}}_{y}(y,\omega,V) = \int_{\mathbb{R}}\frac{\dd x}{\sqrt{2\pi V}}e^{-\frac{(x-\omega)^2}{2V}}\tilde{P}_{y}(y|x)	
\end{align}

\paragraph{Simplifying $\Psi_{w}$: } The result summarised above holds for any $P_{w}$ and $P_{\tilde{w}}$, but can be considerably simplified in our case of interest eq.~\eqref{eq:app:gibbs} where these densities are Gaussian. Indeed, we can integrate $\vec{w}$ explicitly in $\Psi_{w}$ to get:
\begin{align}
\int_{\mathbb{R}^{N}}\dd\vec{w}~P_{w}(\vec{w})&e^{-\frac{\hat{V}}{2}\vec{w}^{\top}\Omega\vec{w}+\vec{w}^{\top}\left(\hat{m}\Phi\tilde{\vec{w}}+\sqrt{\hat{q}}\Omega^{1/2}\vec{\xi}\right)} =\int_{\mathbb{R}^{N}}\frac{\dd\vec{w}}{(2\pi)^{p/2}}e^{-\frac{1}{2}\vec{w}^{\top}\left(\beta\lambda\mat{I}_{N}+\hat{V}\Omega\right)\vec{w}+\vec{w}^{\top}\left(\hat{m}\Phi\tilde{\vec{w}}+\sqrt{\hat{q}}\Omega^{1/2}\vec{\xi}\right)}\notag\\
&=\frac{\exp\left(\frac{1}{2}\left(\hat{m}\Phi\tilde{\vec{w}}+\sqrt{\hat{q}}\Omega^{1/2}\vec{\xi}\right)^{\top}\left(\beta\lambda\mat{I}_{N}+\hat{V}\Omega\right)^{-1}\left(\hat{m}\Phi\tilde{\vec{w}}+\sqrt{\hat{q}}\Omega^{1/2}\vec{\xi}\right)^{\top}\right)}{\sqrt{\det\left(\beta\lambda\mat{I}_{N}+\hat{V}\Omega\right)}}
\end{align}
\noindent where we have included a convenient rescaling of $P_{w}$. We can now take the log and average the resulting expression explicitly with respect to $P_{\tilde{w}} = \mathcal{N}(0,\mat{I}_{N})$ and $\vec{\xi}\sim \mathcal{N}(0,\mat{I}_{N})$. After some linear algebra manipulation, we can write the result (up to the limit) as:
\begin{align}
\Psi_{w} &=	-\frac{1}{2N}	\tr\log \left(\beta\lambda\mat{I}_{N}+\hat{V}\Omega\right)+\frac{1}{2N}\tr\left[\left(\hat{m}^2\Phi\Phi^{\top}+\hat{q}\Omega\right) \left(\beta\lambda\mat{I}_{N}+\hat{V}\Omega\right)^{-1}\right]
\end{align}

\subsection{Saddle-point equations}
In order to find the set of overlaps $(r^{\star}, \hat{r}^{\star}, q^{\star}, \hat{q}^{\star},m^{\star},\hat{m}^{\star})$ that solve the extremisation problem in eq.~\eqref{eq:freeen:final}, we look at the gradient of the replica symmetric potential. This give us a set of self-consistent equations known as \emph{saddle-point equations}. 

First, taking the gradient of $\Psi_{y}$ with respect to $(r,q,m)$ and recalling that $V=r-q$:
\begin{align}
    \partial_{r}\Psi_{y} &=- \mathbb{E}_{\xi}\left[\int_{\mathbb{R}}\dd y~\tilde{\mathcal{Z}}_{y}\frac{\partial_{\omega}\mathcal{Z}_{y}^2}{\mathcal{Z}_{y}}\right], &&
    \partial_{q}\Psi_{y} &= \mathbb{E}_{\xi}\left[\int_{\mathbb{R}}\dd y~\tilde{\mathcal{Z}}_{y}f_{y}^2 \right], &&
    \partial_{m}\Psi_{y} &= \mathbb{E}_{\xi}\left[\int_{\mathbb{R}}\dd y~\partial_{\omega}\tilde{\mathcal{Z}}_{y}f_{y}\right]\notag
\end{align}
\noindent where $f_{y} \equiv \log\mathcal{Z}_{y}$. Now looking at the gradient of $\Psi_w$ with respect to $(\hat{r}, \hat{q},\hat{m})$ and recalling that $\hat{V} = \hat{r}+\hat{q}$:
\begin{align}
	\partial_{\hat{r}}\Psi_{w} &= -\frac{1}{2N}\tr\left(\beta\lambda\mat{I}_{N}+\hat{V}\Omega\right)^{-1}\Omega-\frac{\hat{m}^{2}}{2N}\tr\left(\beta\lambda\mat{I}_{N}+\hat{V}\Omega\right)^{-2}\Omega\Phi\Phi^{\top}-\frac{\hat{q}}{2N}\tr\left(\beta\lambda\mat{I}_{N}+\hat{V}\Omega\right)^{-2}\Omega^2\notag\\
\partial_{\hat{q}}\Psi_{w} &= -\frac{\hat{m}^{2}}{2N}\tr\left(\beta\lambda\mat{I}_{N}+\hat{V}\Omega\right)^{-2}\Omega\Phi\Phi^{\top}-\frac{\hat{q}}{2N}\tr\left(\beta\lambda\mat{I}_{N}+\hat{V}\Omega\right)^{-2}\Omega^2\notag\\
\partial_{\hat{m}}\Psi_{w} &= \frac{\hat{m}}{d}\tr \Phi^{\top}\Phi\left(\beta\lambda\mat{I}_{N}+\hat{V}\Omega\right)^{-1}
\end{align}
Putting together give the following set of self-consistent saddle-point equations:
\begin{align}
	\begin{cases}
		\hat{V} = \alpha \mathbb{E}_{\xi}\left[\int_{\mathbb{R}}\dd y~\tilde{\mathcal{Z}}_{y} \partial_{\omega}f_{y}\right]\\
		\hat{q} = \alpha \mathbb{E}_{\xi}\left[\int_{\mathbb{R}}\dd y~\tilde{\mathcal{Z}}_{y}f_{y}^2 \right]\\
		\hat{m} = \frac{\alpha}{\sqrt{\gamma}} \mathbb{E}_{\xi}\left[\int_{\mathbb{R}}\dd y~\partial_{\omega}\tilde{\mathcal{Z}}_{y}f_{y}\right]
	\end{cases} && 
	\begin{cases}
		V =  \frac{1}{N}\tr\left(\beta\lambda\mat{I}_{N}+\hat{V}\Omega\right)^{-1}\Omega\\
		q = \frac{1}{N}\tr\left[\left(\hat{q}\Omega+\hat{m}^{2}\Phi\Phi^{\top}\right)\Omega\left(\beta\lambda\mat{I}_{N}+\hat{V}\Omega\right)^{-2}\right]\\
		m= \frac{\hat{m}}{N\sqrt{\gamma}}\tr \Phi\Phi^{\top}\left(\beta\lambda\mat{I}_{N}+\hat{V}\Omega\right)^{-1}
	\end{cases}
\end{align}
\noindent where we used  $\partial_{\omega}f_{y} = \mathcal{Z}_{y}^{-1}\partial_{\omega}^2\mathcal{Z}-f_{y}^2$. To take the $\beta\to\infty$ limit explicitly, we look at the following ansatz for the scaling of the order parameters:
\begin{align}
V^{\infty}=\beta V && q^{\infty}= q && m^{\infty}= m\notag\\
\hat{V}^{\infty}=\frac{1}{\beta} \hat{V} && \hat{q}^{\infty} = \frac{1}{\beta^2}\hat{q} && \hat{m}^{\infty} = \frac{1}{\beta}\hat{m}.
\end{align}
With this scaling, we can easily get rid of the $\beta$ dependency in the equations for $(V, q, m)$. For the $(\hat{V},\hat{q},\hat{m})$ equations, we note that:
\begin{align}
    \mathcal{Z}_{y}(y,\sqrt{q}\xi,V) = \int\frac{\dd x}{\sqrt{2\pi V}}e^{-\frac{(x-\sqrt{q}\xi)^2}{2V}} e^{-\beta\ell(y,x)} = \int\frac{\dd x}{\sqrt{2\pi V}}e^{-\beta\left[\frac{(x-\sqrt{q^{\infty}}\xi)^2}{2V^{\infty}}+\ell(y,x)\right]}
\end{align}
\noindent and therefore when $\beta\to\infty$, $\mathcal{Z}_{y}$ is dominated by the exponential of the values that minimise the argument in the exponent, which is the \emph{proximal operator} associated to the loss $\ell$:
\begin{align}
\eta(y,\omega,V) = \underset{x\in\mathbb{R}}{\argmin}\left[\frac{(x-\omega)^2}{2V}+\ell\left(y, x\right)\right]
\end{align}
Finally, in the $\beta\to\infty$ limit the saddle-point equations can be written as:
\begin{align}
	\begin{cases}
		\hat{V} = \alpha \mathbb{E}_{\xi}\left[\int_{\mathbb{R}}\dd y~\tilde{\mathcal{Z}}_{y} \left(\frac{1-\partial_{\omega}\eta}{V}\right)\right]\\
		\hat{q} = \alpha \mathbb{E}_{\xi}\left[\int_{\mathbb{R}}\dd y~\tilde{\mathcal{Z}}_{y} \left( \frac{\eta	-\omega}{V}\right)^2 \right]\\
		\hat{m} = \frac{\alpha}{\sqrt{\gamma}} \mathbb{E}_{\xi}\left[\int_{\mathbb{R}}\dd y~\partial_{\omega}\tilde{\mathcal{Z}}_{y}\left(\frac{\eta-\omega}{V}\right) \right]
	\end{cases} && 
	\begin{cases}
		V =  \frac{1}{N}\tr\left(\lambda\mat{I}_{N}+\hat{V}\Omega\right)^{-1}\Omega\\
		q = \frac{1}{N}\tr\left[\left(\hat{q}\Omega+\hat{m}^{2}\Phi\Phi^{\top}\right)\Omega\left(\lambda\mat{I}_{N}+\hat{V}\Omega\right)^{-2}\right]\\
		m= \frac{\hat{m}}{N\sqrt{\gamma}}\tr \Phi\Phi^{\top}\left(\lambda\mat{I}_{N}+\hat{V}\Omega\right)^{-1}
	\end{cases}\label{eq:app:saddlepoint}
\end{align}
\noindent where we have dropped the $\cdot^{\infty}$ superscript to lighten the notation. This is the expression quoted on the main text. Note that for convex loss functions, the problem in eq.~\eqref{eq:argmin} is strongly convex, and therefore admit one and only one solution $\hat{w}$. This implies that the solution for the overlaps $(m^{\star}, q^{\star})$ found by iterating the saddle-point equations above \emph{necessarily} coincides with the overlaps appearing in the expression for the generalisation error given by eq.~\eqref{eq:app:generror}. This means that the replica symmetric fully characterises the generalisation performance in the convex case.

\section{Further experimental results}
\label{sec:furth-exper-results}

\paragraph{Results for online SGD with the pre-trained dcGAN}

\begin{figure}[t!]
  \centering
  \includegraphics[align=c, width=.33\textwidth]{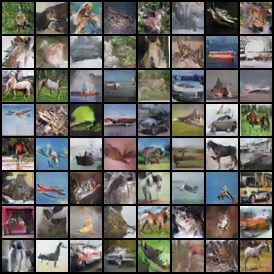}\quad%
  \includegraphics[align=c, width=.6\textwidth]{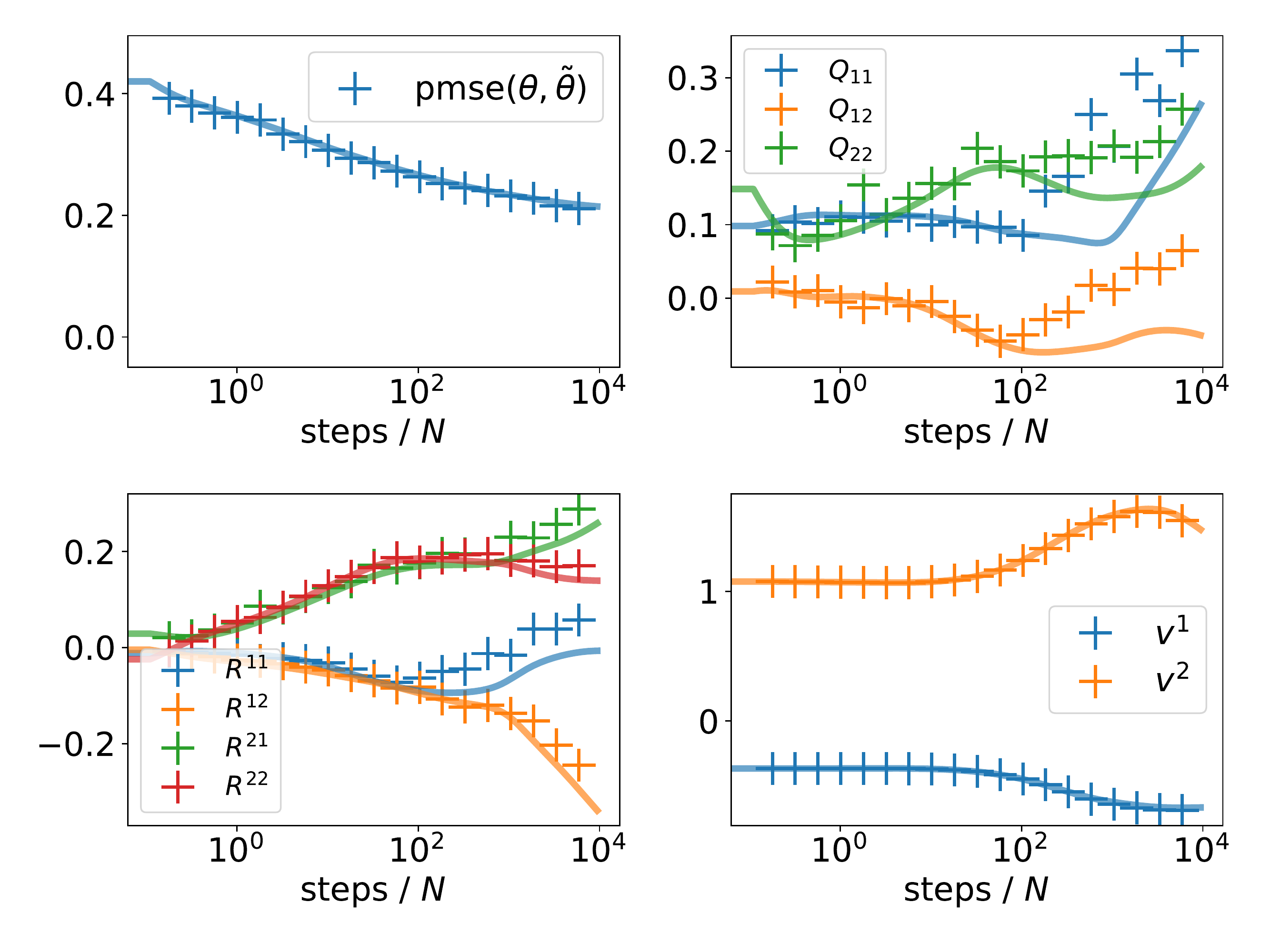}\\
  \caption{\label{fig:pretrained_dcgan}\textbf{Theory vs experiments for online SGD
      with deep, pre-trained dcGAN of~\citet{radford2015unsupervised}}.
    \emph{(Left)} The top four rows show images drawn randomly from the CIFAR10
    data set, the bottom four rows show images drawn randomly from the
    pre-trained dcGAN. \emph{(Right)} Same plot as Fig.~\ref{fig:dcgan_rand}
    when inputs are drawn from the pre-trained realNVP.
    $D=N=3072, M=K=2, \tilde v^m=1, \eta=0.2, g(x) = \tilde g(x) =
    \erf(x/\sqrt{2})$, integration time step $\dd t=0.01$.}
\end{figure}

We also compared the dynamical equations to simulations in the case of the dcGAN
pre-trained on CIFAR10 images, see Fig~\ref{fig:pretrained_dcgan}. We see that
in this case, the equations capture the evolution of the $\pmse$ well and
exactly predict the evolution of the second-layer weights $v$. This is a crucial
result, since we obtain these predictions from analytical expressions for the
functions $h_{(7)}^{kn}$ and $h_{(8)}^{kj}$ that are only valid if the GEP
holds. One can therefore interpret the correct predictions for $v$ based on the
GEP as experimental evidence that the GEP holds for this pre-trained
convolutional generators. The results for the order parameters $Q$ and $R$
reveal larger fluctuations after about $100N\sim 10^5$ SGD steps, for example
for $Q^{11}$ (blue line in top right plot). One source of error here is
numerical and due to the small size of the teacher network $(D=100)$ to which we
are comparing a theory that holds asymptotically, i.e.\ when $N,
D\to\infty$. Such a small teacher would lead to deviations from the ODEs due to
finite-size effects even for i.i.d.\ Gaussian inputs. To confirm that these
deviations are finite-size effects, we also verified our theory for a different
class of generative model, the aforementioned normalising flows, who have a
larger latent dimension $D$. As we see in Sec.~\ref{sec:normalising-flows}, the
ODEs perfectly agree with simulations for this model with larger input
dimension.

\paragraph{Generative model with strongly correlated weights}
\label{sec:supp_fc_inverse}

\begin{figure}[t!]
  \centering
  \includegraphics[align=c, width=.48\textwidth]{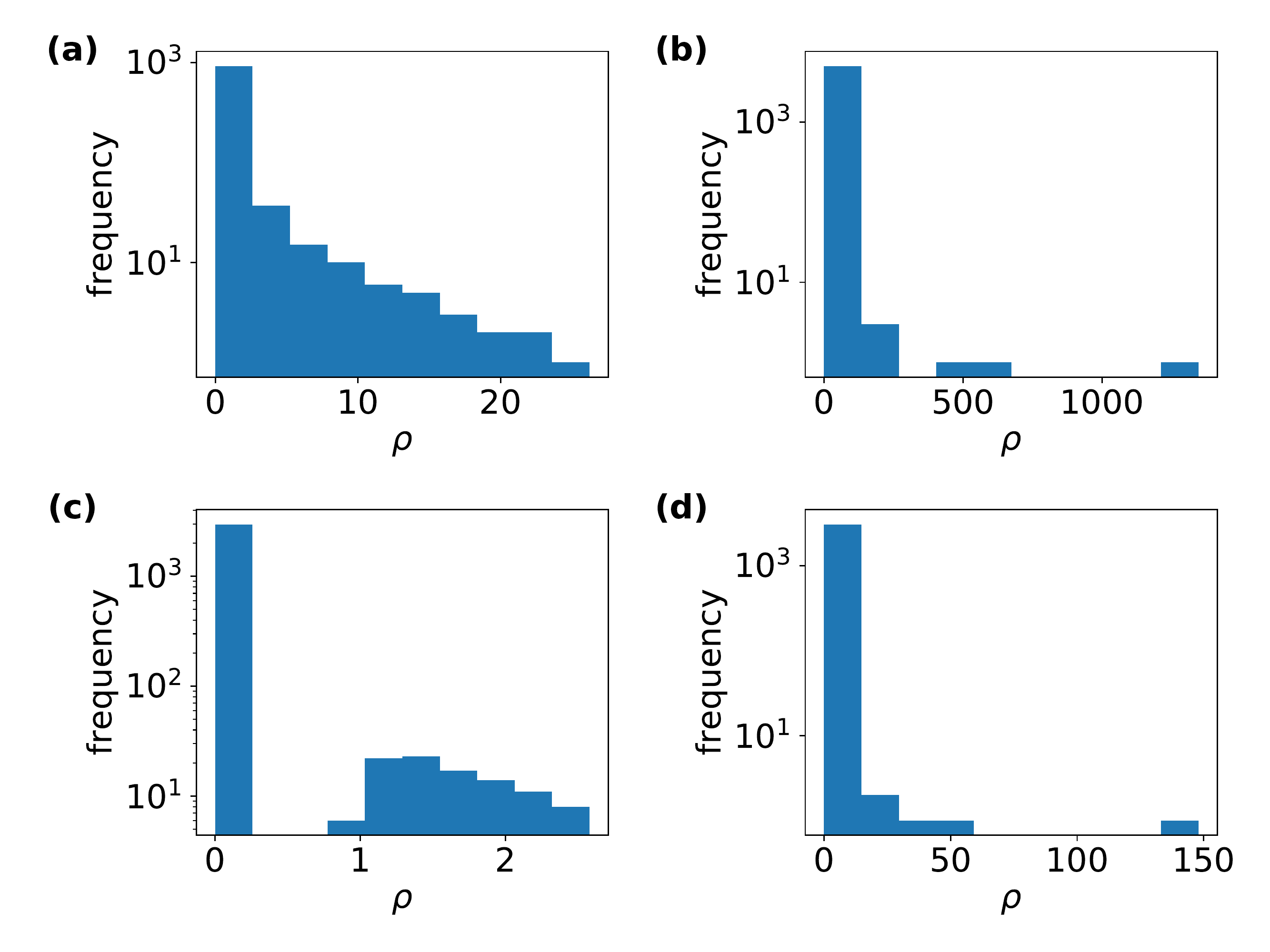}\quad%
  \includegraphics[align=c, width=.48\textwidth]{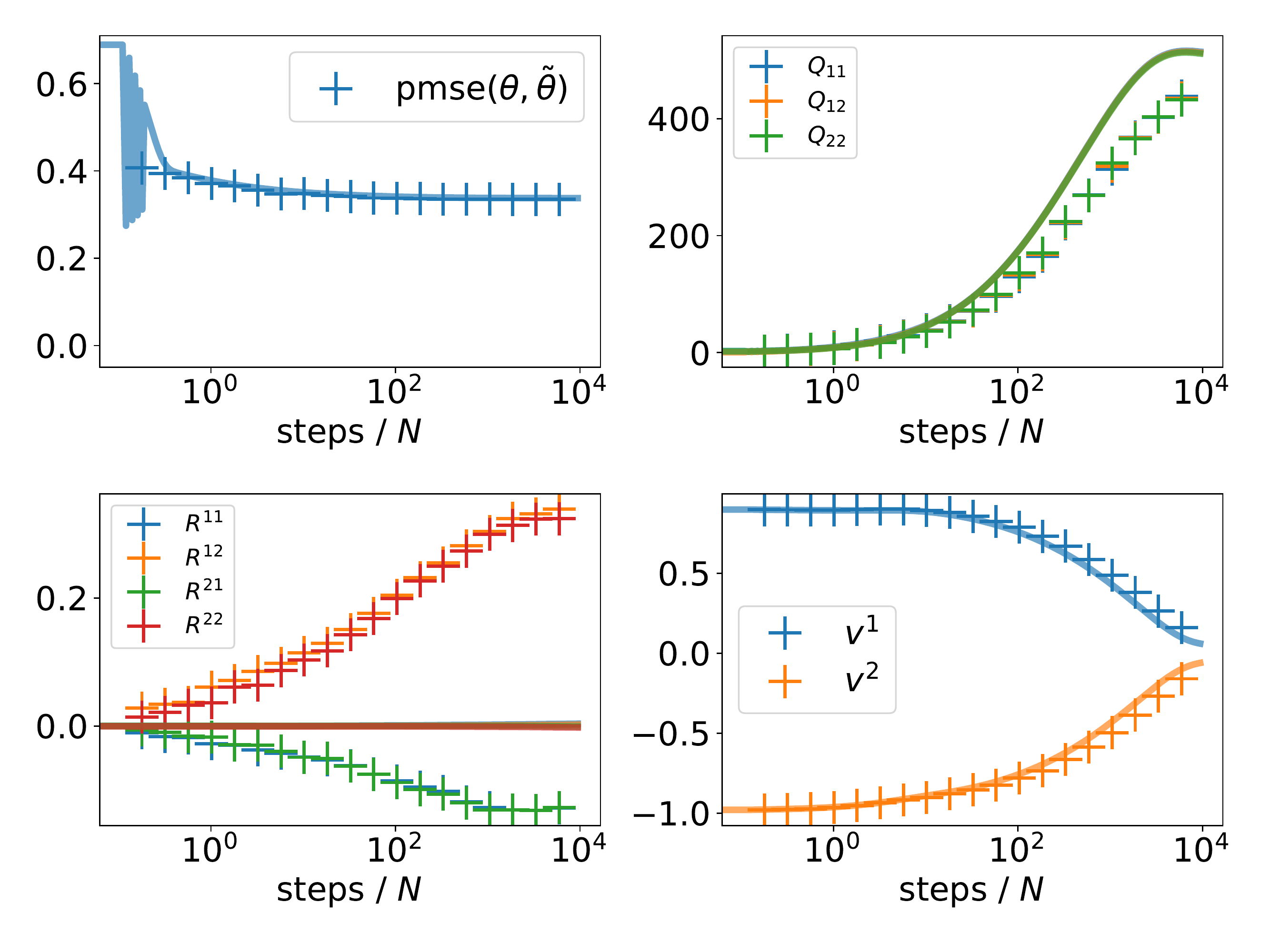}
  \caption{\label{fig:fc_inverse}\textbf{The impact of the spectral density of
      the input-input covariance on learning.} \emph{(Left)}: Spectral density
    of the average covariance matrix of inputs drawn from four generative
    models: (a)~Random fully-connected network of Fig.~\ref{fig:dcgan_rand},
    (b)~fully connected generator with inverse weights (see
    Sec.~\ref{sec:supp_fc_inverse}), (c)~dcGAN with random weights, and
    (d)~dcGAN trained on CIFAR10. \emph{(Right):} We compare theory vs
    simulation for the training of two-layer neural network on inputs $x$ drawn
    from a two-layer, fully connected generative network where the weights of
    the second layer are the matrix inverse of the first layer,
    Eq.~\eqref{eq:fc_inverse}.
    $D=5000, N=5000, M=K=2, \tilde v^m=1, \eta=0.2, g(x) = \tilde g(x) =
    \erf(x/\sqrt{2})$, integration time step $\dd t=0.01$.}
\end{figure}

Finally, we also constructed a generative model with strongly correlated weights
where there exists a dominant direction in the eigenspace of the input-input
covariance matrix $\Omega_{ij}=\EE x_i x_j$. We took a fully connected
generative network $\mathcal{G}: \reals^N \to \reals^N$, with two layers of
weights $A^1\in\mathbb{R}^{N \times N}$ and $A^2\in\mathbb{R}^{N \times N}$. We
drew the elements of $A^1$ element-wise i.i.d.\ from the standard normal
distribution, whereas the second-layer weights $A^2=\mathrm{inv}(A^1)$. After
each layer, we used the sign activation function, so the generator's output
function can be written as
\begin{equation}
  \label{eq:fc_inverse}
  x = \mathcal{G}(c) = \mathrm{sign}\left( \mathrm{inv}(A^1) \mathrm{sign}\left( A^1 c \right) \right)
\end{equation}

On the left of Fig.~\ref{fig:fc_inverse}, we show the spectra of the covariance
matrices of various generators. The leading eigenvalues are smallest for
generators with random weights, such as the fully-connected single-layer
network~\eqref{eq:proof-generators} (a) and the dcGAN with random weights (c)
that we used in Fig.~\ref{fig:dcgan_rand}. The pre-trained dcGAN has a leading
eigenvalue that is about an order of magnitude larger (d). The generator with
inverse weights~\eqref{eq:fc_inverse} has an eigenvalue that is yet another
order of magnitude larger.  

The particular weight structure of the ``inverse'' generator also has a strong
impact on the dynamics of a two-layer network trained on its data, as we show on
the right of Fig.~\ref{fig:fc_inverse}. Notably, the length of the weight
vectors grows exponentially for a large portion of training time, while the
second-layer weights go to zero. We observed this behaviour consistently over
several runs of this setup with different weights for the teacher, generator and
different initial weights for the student in each case. Characterising the
impact of a dominant direction in the data on the dynamics of two-layer neural
networks is an intriguing challenge that we leave for future work.

\end{document}